\documentclass{article}
\usepackage[utf8]{inputenc}

\makeatletter
\def\blfootnote{\gdef\@thefnmark{}\@footnotetext}
\makeatother

\setlength{\parindent}{0em}
\setlength{\parskip}{0.75em}

\usepackage{microtype}
\usepackage{graphicx}
\usepackage{subcaption}
\usepackage{booktabs}
\usepackage{enumitem}
\usepackage{float}
\usepackage{multirow}
\usepackage{multicol}
\usepackage{amsmath}
\usepackage{amssymb}
\usepackage{mathtools}
\usepackage{amsthm}
\usepackage{amsmath}
\usepackage{fullpage}
\usepackage[nice]{nicefrac}
\usepackage{paralist}
\usepackage{algorithm}
\usepackage{algorithmic}
\usepackage{xcolor}
\usepackage{natbib}

\makeatletter
\setlength{\@fptop}{0pt}
\setlength{\@fpbot}{0pt plus 1fil}
\makeatother

\usepackage{hyperref}
\hypersetup{colorlinks,linkcolor=blue}

\usepackage[capitalize,noabbrev,nameinlink]{cleveref}

\theoremstyle{plain}
\newtheorem{theorem}{Theorem}[section]

\newtheorem{lemma}[theorem]{Lemma}

\theoremstyle{definition}
\newtheorem{definition}[theorem]{Definition}

\theoremstyle{remark}
\newtheorem{remark}[theorem]{Remark}

\newcommand{\R}{\mathbb{R}}
\newcommand{\indsize}{\scriptsize}
\newcommand{\colind}[2]{\displaystyle\smash{\mathop{#1}^{\raisebox{.5\normalbaselineskip}{\indsize #2}}}}

\DeclareRobustCommand
\Compactcdots{\mathinner{\cdotp\mkern-2mu\cdotp\mkern-2mu\cdotp}}
\newcommand{\TRUE}{\textbf{true}}
\newcommand{\FALSE}{\textbf{false}}
\newcommand{\TO}{\textbf{to }}
\newcommand{\NOT}{\textbf{not }}

\definecolor{amaranth}{rgb}{0.9, 0.17, 0.31}
\definecolor{green}{HTML}{549D54}

\renewcommand{\cite}{\citep}

\title{Simulation of Graph Algorithms with Looped Transformers}
\author{
    Artur Back de Luca$^{*}$\qquad\qquad
    Kimon Fountoulakis$^{*}$\qquad\qquad
}

\date{}

\begin{document}
\maketitle
\def\thefootnote{*}
\footnotetext{David R. Cheriton School of Computer Science, University of Waterloo, Waterloo, Ontario, Canada.}
\def\thefootnote{\arabic{footnote}}
\blfootnote{Emails: \href{mailto:abackdel@uwaterloo.ca}{abackdel@uwaterloo.ca}, \href{mailto:kimon.fountoulakis@uwaterloo.ca}{kimon.fountoulakis@uwaterloo.ca}}

\begin{abstract}
The execution of graph algorithms using neural networks has recently attracted significant interest due to promising empirical progress. This motivates further understanding of how neural networks can replicate reasoning steps with relational data. In this work, we study the ability of transformer networks to simulate algorithms on graphs from a theoretical perspective. The architecture we use is a looped transformer with extra attention heads that interact with the graph. We prove by construction that this architecture can simulate individual algorithms such as Dijkstra’s shortest path, Breadth- and Depth-First Search, and Kosaraju’s strongly connected components, as well as multiple algorithms simultaneously. The number of parameters in the networks does not increase with the input graph size, which implies that the networks can simulate the above algorithms for any graph. Despite this property, we show a limit to simulation in our solution due to finite precision. Finally, we show a Turing Completeness result with constant width when the extra attention heads are utilized.
\end{abstract}

\section{Introduction}

Recent advancements in neural network models have significantly impacted various domains, most notably vision \cite{yuan2021incorporating, khan2022transformers, dehghani2023scaling}, and natural language processing \cite{wei2022emergent, touvron2023llama}. Transformers \cite{vaswani2017attention}, at the forefront of these developments, have become standard for many complex tasks. These successes have also shed light on the capabilities of neural networks in algorithmic reasoning \cite{velivckovic2021neural}, such as basic arithmetic \cite{lee2023teaching}, sorting \cite{tay2020sparse, yan2020neural, rodionov2023neural}, dynamic programming \cite{dudzik2022graph, ibarz2022generalist} and graph algorithms \cite{velivckovic2022clrs, cappart2023combinatorial}.

In this work, we focus on algorithmic reasoning on graphs.
Current empirical results display a promising degree of scale generalization on graphs 
\cite{yan2020neural, tang2020towards, ibarz2022generalist, velivckovic2022clrs, 10.5555/3618408.3618505, numeroso2023dual}. The predominant approach in these studies is to train a neural network to execute a step of a target algorithm and use a looping mechanism to execute the entire algorithm.
The looping mechanism is crucial since it allows the execution of long processes on graphs. Motivated by these empirical results, our goal is to study the ability of looped neural networks to simulate algorithms on graphs of varying sizes. Briefly, by simulation, we mean the ability of a neural network to provide the correct output of a step of an algorithm for every step. In all our results we utilize a looped transformer architecture with extra attention heads that interact with the graph. Instead of storing the graph in the input, we encode it using its adjacency matrix which multiplies the attention head. This allows us to access data from the adjacency matrix without scaling the number of parameters of the network with the size of the graph.

\vspace{1em}
\textbf{Our contributions:} We list our contributions below.
\begin{compactenum}
\item We demonstrate how a looped transformer architecture can simulate graph algorithms like Breadth-first search (BFS) \& Depth-first search (DFS) \cite{moore1959shortest}, Dijkstra’s shortest path algorithm \cite{dijkstra1959note}, and Kosaraju’s algorithm for identifying strongly connected components (SCC) \cite{aho1974design}.
These algorithms were chosen as part of the CLRS benchmark \citep{velivckovic2022clrs}.
We also use these constructions to create a multitask model capable of simulating BFS, DFS, and Dijkstra’s algorithm simultaneously.%
\item In our results, the largest dimension of any weight matrix in our network, denoted by width, does not scale with the number of nodes or edges in the graph. This shows that algorithm simulation is possible for graphs of varying sizes, although limited by finite precision. Current results do not consider the looping mechanism and are constrained by the network’s depth/width \cite{loukas2019graph, Xu2020What}, or they do not consider graph algorithms \cite{perez2021attention, giannou23a}.
\item We also provide a Turing Completeness result for the looped transformer with $O(1)$ width which uses additional attention heads to interact with the graph.
\end{compactenum}

\section{Related work}
\label{sec:related_work}

\textbf{Empirical:} A plethora of papers have been published on the ability of neural networks to execute algorithms on graphs. Notable empirical works include \citet{tang2020towards, yan2020neural, Veličković2020Neural, ibarz2022generalist, 10.5555/3618408.3618505, numeroso2023dual, diao2023relational, anonymous2023beyond, georgiev2023neural, engelmayer2023parallel}. These works leverage looping mechanisms and report favorable scale-generalization results. For a comprehensive review, see \citet{cappart2023combinatorial}.

\textbf{Theoretical:} Currently, three types of complementary results study the ability of neural networks to execute algorithms. The first type is simulation results such as \citet{siegelman95comp, perez2021attention, giannou23a, hertrich2023provably, hertrich2023relu}. These results are demonstrated by providing analytic expressions of the neural networks that achieve simulation, similar to our results. For example, \citet{siegelman95comp, perez2021attention} use simulations to prove the Turing Completeness of Recurrent Neural Networks (RNNs) and Transformers, respectively. \citet{giannou23a} also shows Turing Completeness for Transformers through the simulation of SUBLEQ \cite{mavaddat1988urisc}. However, these works do not consider graph data. Other works, such as \citet{hertrich2023provably, hertrich2023relu} use RNNs and Multilayer-Perceptrons (MLPs) to simulate graph algorithms to solve problems like shortest paths, minimum spanning trees, and maximum flow. Although existing results on graph algorithms or Turing Completeness apply to graph algorithms, these approaches require the graph to be stored in the input data matrix rather than being part of the architecture. This imposes limitations on the size of the input graph, as discussed in \Cref{sec:method}.

The second type of result is using the Probably Approximate Correct learning framework to study the sample complexity of neural networks for executing algorithms.
In \citet{Xu2020What} the authors show that sample complexity is improved when the neural network is aligned with the structure of the algorithm. We also use the concept of alignment in our constructive proofs. 
However, our results are about the simulation of algorithms, which hold for any distribution of the test graph.
Also, \citet{Xu2020What} does not employ a looping mechanism.
The third type is impossibility results. In this case, it is shown that neural networks cannot execute certain algorithms when the depth and width of the network are smaller than a lower bound. In \citet{loukas2019graph} the author studies the ability of graph neural networks to solve various graph problems and also proves Turing Completeness.
However, all results are limited by the depth and the width of the neural network which scales with the size of the input graph. In our case, due to the looping mechanism, the depth and width of the network are constant, and a single neural network can solve a particular problem for any input graph, subject to limitations imposed by finite precision.

\section{Preliminaries}
\label{sec:preliminaries}
This section establishes preliminaries for our task of algorithmic simulation on graphs.
Recognizing that an algorithm consists of multiple steps, we start by defining simulation for an individual step. Consider $h_F: \mathcal{X}\rightarrow \mathcal{Y}$ as the function we aim to simulate and $h_T: \mathcal{X'}\rightarrow \mathcal{Y'}$ as the neural network designed for this purpose.
Note that $h_F$ and $h_T$ might operate in different representation spaces.
For instance, $\mathcal{X}$ and $\mathcal{Y}$ might be the set of natural numbers while $\mathcal{X'}$ and $\mathcal{Y'}$ might be multidimensional reals.
To address this, we use mappings $g_{e}:\mathcal{X}\rightarrow \mathcal{X'}$ and $g_{d}: \mathcal{Y'}\rightarrow \mathcal{Y}$ for encoding and decoding, respectively.
In our constructions in \Cref{sec:input_matrix}, $g_e$ organizes terms into our input matrix format and incorporates biases and positional encodings, while $g_d$ simply extracts the appropriate columns from the output.
\begin{definition}[Simulation]
The neural network $h_T$ is a successful simulator of the algorithmic step $h_F$ if for every input $x \in \mathcal{X}$, $h_F(x) = g_d(h_T(g_e(x)))$. A transformer is said to \emph{simulate} an algorithm if it can simulate each step of the algorithm \citep{giannou23a, perez2021attention}.
\end{definition}
We define a graph $G$ with $n$ nodes by its adjacency matrix $A\in\R_+^{n\times n}$, where $A_{i,j} > 0$ if nodes $i$ and $j$ are connected, otherwise $A_{i, j}$ is zero.
In our work, we also use an input matrix $X\in\R^{K\times d}$, where $d$ is the feature dimension and $K\ge n+1$ is the number of rows that are set according to the simulation.
The structure of $X$ is further explained in \Cref{sec:input_matrix}.
Because of the mismatch of length between $X$ and $A$, we adopt a padded version of $A$, denoted $\Tilde{A}\in\R^{K\times K}$.
The entries of the first row and column of  $\tilde{A}$ are set to zero. This is to align with the input's top row, which holds data not associated with any node, as further discussed in \Cref{sec:input_matrix}. The next $n$ rows and columns correspond to the entries of $A$. Certain implementations require $K > n+1$. In this case, we further pad $\Tilde{A}$ with zeros in the extra rows and columns beyond the entries of $A$. For example, in our Turing Completeness result in \Cref{remark:subleq}, $K$ can exceed $n+1$ because the number of rows may not be directly determined by the number of nodes.

\section{The Architecture}
\label{sec:method}
We utilize a variation of the standard transformer layer in \citet{vaswani2017attention} with an additional attention mechanism that incorporates the adjacency matrix. 
The additional attention mechanism is described by the following equation:
\begin{equation}
\label{eq:attention_head}
\psi^{(i)}(X,\Tilde{A}) := \Tilde{A}\,\sigma\!\left( XW^{(i)}_Q{W^{(i)}_K}^\top X^\top\right)XW^{(i)}_V,
\end{equation}
where $W_V \in \mathbb{R}^{d \times d}$, $W_Q$, and $W_K \in \mathbb{R}^{d \times d_a}$ are the value, query, and key matrices, respectively, $d_a$ represents the embedding dimension and $\sigma$ denotes the hardmax\footnote{The hardmax is defined by [$\sigma(\Phi)]_i:=\sum_{k\in K}e_k/\left|K\right|$, where $e_k$ is the standard basis vector and $K=\left\{k\,|\,\Phi_{ik}=\max(\Phi_{i})\right\}$.}\textsuperscript{,}\footnote{The softmax activation function can be used as well, since softmax approximates hardmax as the temperature parameter goes to zero. However, softmax introduces errors in the calculations. To guarantee small errors during the simulation, the temperature parameter will have to be a function of the size of the graph and other parameters related to finite precision. Softmax could further limit the simulation properties of the architecture beyond what we discuss in \Cref{sec:theory}. However, this type of limitation is common, for example, see \citet{giannou23a, liu2022transformers}. In practice, softmax is preferred over hardmax. For this reason, in our empirical validation results in \Cref{app:sec:empirical_validation} we also use softmax in combination with rounding layers in \Cref{sec:rounding}.} function. Note that setting $\Tilde{A}$ as the identity matrix reduces equation \eqref{eq:attention_head} to a standard attention head.
This modification efficiently extracts specific rows from matrix $A$. Conversely, applying attention to the transpose of $\Tilde{A}$ enables the extraction of columns of $A$. These variations are important in our constructions, and they allow parallel execution of algorithmic subroutines which are further discussed in \Cref{sec:less_than}.
We define a complete transformer layer as:
\begin{equation}
\label{eq:layer}
f(X, \Tilde{A}) = f_\textrm{mlp}(f_\textrm{attn}(X, \Tilde{A}))
\end{equation}
where 
\begin{align*}
f_\textrm{attn}(X, \Tilde{A}) &= X + \sum_{i\in H}\!\psi^{(i)}(X, I_{n+1}) + \sum_{i\in H_A}\!\psi^{(i)}(X, \Tilde{A})
+ \sum_{i\in H_{A^\top}}\!\psi^{(i)}(X, \Tilde{A}^\top)\nonumber
\end{align*}
\begin{equation*}
f_\textrm{mlp}(X) = Z^{(m)}W^{(m)} + X, \ \ Z^{(j+1)} = \phi(Z^{(j)}W^{(j)})
\end{equation*}

for $j=0,\dots,m-1$ and $Z^{(0)} = X$. Here, $X$ represents the input matrix and $\Tilde{A}$ is the padded adjacency matrix. Additionally, $I_{n+1}$ is the identity matrix of size $n+1$, $m$ and $\phi$ are the number of layers and activation function of $f_\text{MLP}$, respectively.
Note that in the formulation of the multi-head attention, we adopt a residual connection and a sum of attention heads $\psi$ in the format described in \eqref{eq:layer}.
$H$, $H_A$, and $H_{A^\top}$ are the index set of attention heads for the standard attention \cite{vaswani2017attention}, as well as for the versions that utilize the adjacency matrix and its transpose, respectively.
The total number of attention heads is $\left |H\right | +\left |H_A\right | + \left |H_{A^\top}\right |$. Furthermore, for the MLP, 
we define $\phi$ to be the ReLU function, we set $m=4$, and we use matrices $W^{(i)} \in \mathbb{R}^{d \times d}$ as the parameters of the MLP, where $d$ is the feature dimension of $X$. We integrate the bias into the linear operations by adding bias columns to $X$ and also introduce a residual connection to $f_{\text{mlp}}$.

\Cref{alg:loop} demonstrates the use of our model $h_T$, which is constructed by stacking multiple layers $f$ as depicted in \eqref{eq:layer}.
In this setting, each forward pass feeds its output to a subsequent pass, creating a looping mechanism.
This loop continues until a pre-defined termination condition is met, such as the activation of a boolean element within the input, denoted by \texttt{term} in \Cref{alg:loop}.
This setting and architecture are used to simulate various graph algorithms, as formally established in the results of \Cref{sec:theory}.
\begin{algorithm}
  \caption{Looped Transformer \cite{giannou23a}}
  \label{alg:loop}
\begin{algorithmic}[1]
\REQUIRE {\bfseries Input:} model $h_T$, matrices $X$ and $\Tilde{A}$, column term
  \WHILE{X[0,term] is \FALSE}
    \STATE $X = h_T(X, \Tilde{A})$
  \ENDWHILE
\end{algorithmic}
\end{algorithm}

\subsection{Discussion on the attention head in \eqref{eq:attention_head}}
While encoding the graph in $X$ is an alternative to our attention head in \eqref{eq:attention_head}, it is subject to limitations. Specifically, concatenating the adjacency matrix to $X$ results in linear dependence of the width of the network to the number of nodes, thereby limiting simulation to graphs with sizes smaller or equal to the one that the network was set up/trained on. Alternatively, appending the edges of the graph to the input results in the number of positional encodings being dependent on the edge count, which may increase quadratically as the number of nodes grows. This can limit simulation performance when the available positional encodings are limited, for instance, due to finite precision. 

Finally, our attention head in \eqref{eq:attention_head} can be viewed as a message-passing layer that performs two convolutions. The first convolution corresponds to a complete graph, i.e., standard attention \citep{vaswani2017attention}, and the second is a standard graph convolution \citep{kipf2016semi}. Multiplying the two convolution matrices is important. In particular, the left multiplication of the attention matrix with $\tilde{A}$ in \eqref{eq:attention_head} allows for direct and easy access to data from $\tilde{A}$. This is because the attention matrix acts as an indicator matrix that extracts rows from $\tilde{A}$, see \Cref{sec:read_a} for details. Note that our attention head in \eqref{eq:attention_head} differs from Graph Attention Networks (GAT) \cite{velivckovic2017graph}, where the graph is incorporated in the attention mechanism. Although possible to use the attention mechanism in GAT to access the graph data, the proofs will be more elaborate than with \eqref{eq:attention_head}. 

\section{Simulation Details}
\label{sec:simulation}
Our simulation results are constructive. We set the parameters of the layer in \eqref{eq:layer} such that we simulate each sub-routine within each algorithm. We start by describing the input to the model in \Cref{sec:input_matrix}. We provide two simulation examples of common subroutines among all algorithms in this paper. In \Cref{sec:less_than} we provide an example of the implementation of the less-than function.
This function is common among algorithms in this paper, and it illustrates how the neural network can achieve parallel computation over the nodes.
Next, in \Cref{sec:read_a} we provide an example for reading information from the graph. Throughout the section, we use Dijkstra's algorithm as an algorithm example (as depicted in \Cref{fig:diagram}).
With the exemption of our Turing Completeness result in \Cref{remark:subleq}, 
it is sufficient to set $K=n+1$ for all algorithms in our paper. For this reason, and for simplicity, we set $K=n+1$ in this section.

\begin{figure*}[t]
    \centering
    \includegraphics[width=0.95\textwidth]{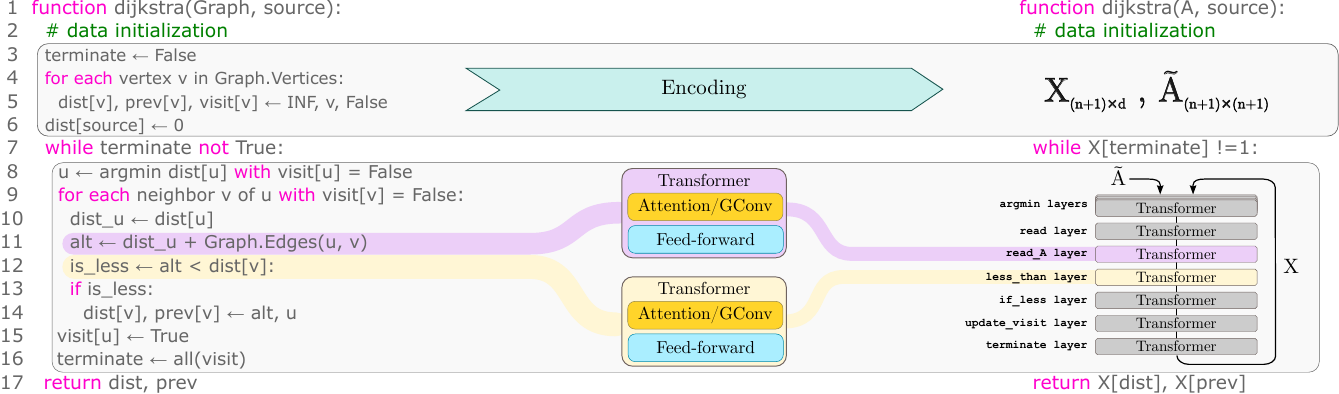}
    \caption{
A simplified illustration of the simulation of Dijkstra's algorithm using a looped transformer with extra attention heads that interact with the graph.
On the left, we display the pseudocode of Dijkstra's algorithm, serving as the source code.
The rightmost section shows the corresponding simulation via a transformer, where each step of the source code in the loop is simulated using one or more transformer blocks.
We specifically focus on lines 11 and 12, highlighted to demonstrate the simulation of individual functions, as discussed in \Cref{sec:simulation}.
At the top center of the figure, the encoding of graph information and variable scopes into $\Tilde{A}$ and $X$ is depicted.
For clarity, $X$ is shown in its transposed format.
Throughout the transformer loop, $\Tilde{A}$ remains constant, while $X$ is updated in each iteration until the simulation meets its termination criteria. Upon termination, the decoding step extracts columns from $X$ that correspond to the algorithm's desired output.}
    \label{fig:diagram}
\end{figure*}

\subsection{Input matrix}
\label{sec:input_matrix}
The input matrix encompasses all variables used in the algorithm. In the case of Dijkstra's algorithm in \Cref{fig:diagram}, the lists of current distances and paths (\texttt{dists} and \texttt{paths}), and variables like the current node and its distance (\texttt{u} and \texttt{dist}) are all incorporated into $X$.
Below, we describe in detail the general structure of $X$. We refer the reader to \Cref{fig:input} for a visualization of the structure of the matrix $X$. 

The top row of $X$ stores single variables, encapsulating the global information relevant to the algorithm.
The bottom $n$ rows of $X$ are reserved for local variables, such as distances and paths, representing the node-specific data. This storage structure separates the global context from the more specific, node-related information of the algorithm. The input matrix $X$ is augmented with columns for positional encodings, biases, and a scratchpad area. Each node receives a unique positional encoding, denoted by $p_i$. These concepts have been present in various contexts, see \citet{giannou23a} for assigning functional blocks to the input and \citet{akyurek2022learning, wei2022statistically, giannou23a} for the notion of scratch space.
Distinct biases are applied to the top and bottom rows to simulate global and local functions effectively. $X$ is equipped with flags indicating global or local states, regulating the algorithm's flow. These flags can either permit or prevent overwriting of fields or signal different stages in the algorithm's execution.
For example, in all our simulations each of the last $n$ entries of the node-wise flag named \texttt{visit} indicates whether the corresponding node has been visited, while the global variable \texttt{term}, signals the terminating condition for the algorithm.
Finally, the scratchpad within $X$ serves as a temporary storage space for variable manipulation or intermediate computations.
\begin{figure}[t]
    \centering
    \includegraphics[width=0.75\columnwidth]{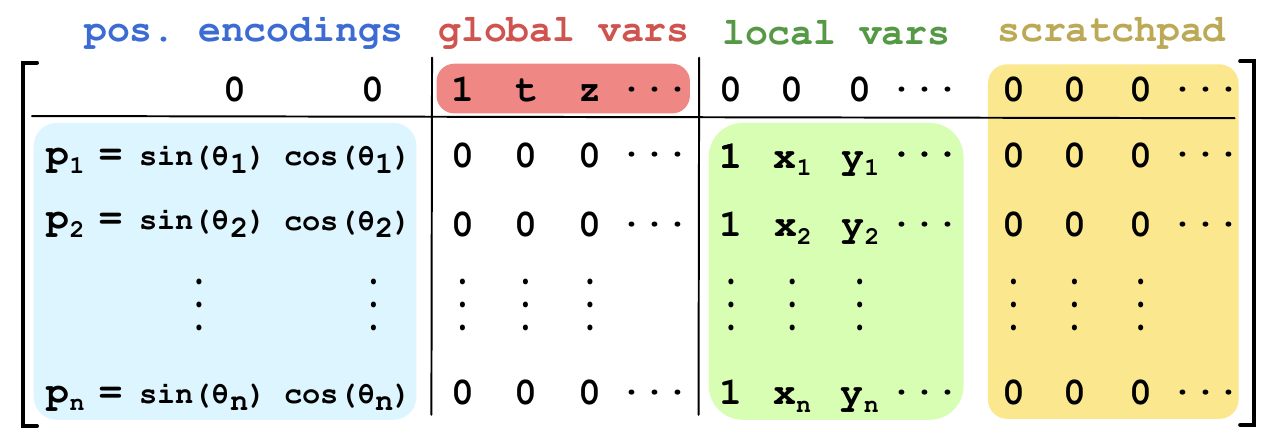}
    \caption{Illustration of the input structure used in the simulation of graph algorithms.
    On the left, columns indicate node positions using circular positional encodings \cite{liu2022transformers}, detailed in \Cref{sec:theory}.
    The next blocks on the right denote the global (in red) and local (in green) variables, which occupy the top and last $n$ rows of $X$, respectively.
    In these blocks, the first column marks the bias of the corresponding variables.
    The symbol $t$ indicates the termination flag, while symbols $z$, and $x_i, y_i$, are generic local and global variables, respectively.
    Finally, depicted in the far right block, the scratchpad is used for temporary storage and calculation.
    Non-shaded areas remain null during execution.
    }
    \label{fig:input}
\end{figure}

\subsection{Less-than}
\label{sec:less_than}
The purpose of this operation is to determine whether each element in a specific column of $X$ is smaller than its counterpart in another column.
The result of this comparison is then recorded in another designated column.
In the context of Dijkstra's algorithm, as shown in \Cref{fig:diagram}, this function plays a critical role in determining if an alternate path is shorter than the shortest path currently known.

We express this function as: \texttt{less-than(X, C, D, E) = write(X[:,C] < X[:,D], X[:,E])}. Here, $X$ is the input matrix, while $C$, $D$, and $E$ are placeholders for the specified column indices in 
$X$. This simulation involves two key steps. First, we approximate the less-than operator ($<$) using the following expression:
\begin{equation}
    \label{eq:less_than}
   X[:,C] < X[:,D] \approx\: \varepsilon^{-1}\phi\left(X[:,D]-X[:,C]\right) -\varepsilon^{-1}\phi\left(X[:,D]-X[:,C]-\varepsilon\right),\nonumber
\end{equation}
where $\varepsilon > 0$ is a small tolerance value, and $\phi$ denotes the ReLU activation function.
This operation approximates a step function of the difference \texttt{X[:,C]-X[:,D]}, where $\varepsilon$ controls the sharpness of the threshold.
If this difference is larger than $\varepsilon$, 
the subtraction of the terms post-ReLU equals $\varepsilon$, which simplifies to $1$.
If the difference is negative, both ReLU terms become zero.
If the difference \texttt{X[:,C]-X[:,D]} is non-negative but less than $\varepsilon$, the approximation linearly interpolates between 0 and 1, with a gradient of $\varepsilon^{-1}$.
The second operation is the \texttt{write} function.
In this operation, the output of the less-than comparison is placed in the specified field, while the original content of that field is erased.
This operation is expressed as:
\texttt{X[:,E]$\,\leftarrow\,$X[:,E]$\,$-$\,$X[:,E]$\,$+$\,$X[:,S$_1$]}, where \texttt{S$_1$} is the scratchpad column that holds the result to be put in \texttt{E}.

Combining these two operations, we define the parameters for the layer $f$.
To understand the purpose behind each operation, we start by defining a particular instance of the matrix $X$, which is used for this example:

\begin{equation*}
X = 
   \mathop{\left[
  \begin{array}{ccccccc}
  \colind{\Compactcdots}{} &
  \colind{1}{$B_\text{global}$} & \colind{0}{$B_\text{local}$} & \colind{c_0}{\raisebox{0.19em}{$C$}}& \colind{d_0}{\raisebox{-0.20em}{$D$}} & \colind{e_0}{\raisebox{0.18em}{$E$}}& \colind{\Compactcdots}{\raisebox{0.18em}{$S$}}\\
  \hline
  \Compactcdots & 0 & 1& c_1& d_1 & e_1 &\Compactcdots \\
  &\vdots & \vdots & \vdots & \vdots & \vdots &\\
  \Compactcdots&0 & 1 & c_n& d_n & e_n &\Compactcdots\\
  \end{array}
  \right].}
\end{equation*}
Here, $C$, $D$ and $E$ are placeholders for column indices used in \eqref{eq:less_than}, $S$ is the scratchpad area, and
$B_\text{global}$ and $B_\text{local}$ represent the biases for global and local variables respectively.
First, in the definition of the attention layer, all values are set to 0, maintaining only the residual connection. 
For the definition of the parameters in $f_{\text{mlp}}$, we introduce the parameters layer by layer, followed by an explanation of what is achieved at each stage.
For the first layer, we have:
\begin{equation*}
(W^{(1)})_{i, j} = \begin{cases}
   1 & \text{if } (i,j\! =\! E)\,\text{or}\,(i=D, j \in \{S_1, S_2\}) \\
   -1 & \text{if } i = C,\; j \in \{S_1, S_2\} \\
   -\varepsilon & \text{if } i\in\{B_\text{global}, B_\text{local}\},\; j = S_2 \\
   0 & \text{otherwise,}
\end{cases}
\end{equation*}
where $S_1$ and $S_2$ are entries in the scratchpad.
Here, the parameters of the first layer $W^{(1)}$ keep the entries of the target column $E$, while it builds the arguments of $\phi$ in \eqref{eq:less_than}, inserting them in the scratchpad.
Storing the intermediate values in the scratchpad is beneficial to prevent overwriting issues, as it avoids the use of columns involved in the definition of the equation. The result is given by:

\begin{equation*}
XW^{(1)} \!=\!{ 
   \mathop{\left[
  \begin{array}{ccccc}
  \colind{\Compactcdots}{} & \colind{e_0}{\raisebox{0.37em}{$E$}}& \colind{d_0 - c_0}{$S_1$} & \colind{d_0-c_0-\varepsilon}{$S_2$} & \colind{\Compactcdots}{}\\
  \hline
  \Compactcdots & e_1 & d_1 - c_1 & d_1 - c_1-\varepsilon & \Compactcdots \\
  & \vdots & \vdots & \vdots &\\
  \Compactcdots & e_n & d_n - c_n & d_n - c_n-\varepsilon & \Compactcdots
  \end{array}
  \right].}}
\end{equation*}
Subsequently, the parameters of the second layer $W^{(2)}$ are:
\begin{equation*}
   (W^{(2)})_{i, j} = \begin{cases}
   1 &\text{if } i=E ,\; j=i\\
   \varepsilon^{-1} &\text{if } i=S_1,\; j=i\\
   -\varepsilon^{-1} &\text{if } i=S_2,\; j=S_1\\
   0 &\text{otherwise.}
\end{cases}
\end{equation*}
Here, $W^{(2)}$ is responsible for subtracting the post-ReLU terms and dividing them by $\varepsilon^{-1}$. This result is then stored in the scratchpad entry $S_1$:

\begin{equation*}
Z^{(1)}W^{(2)} \approx\! 
   \mathop{\left[
  \begin{array}{cccc}
  \colind{\Compactcdots}{} & \colind{e_0}{\raisebox{0.37em}{$E$}}& \colind{c_0 < d_0}{$S_1$} & \colind{\Compactcdots}{}\\
  \hline
  \Compactcdots & e_1 & c_1 < d_1 & \Compactcdots \\
  & \vdots & \vdots &\\
  \Compactcdots & e_n & c_n < d_n & \Compactcdots\\
  \end{array}
  \right].}
\end{equation*}

Finally, for the two remaining operations, we have:
\begin{align*}
(W^{(3)})_{i, j} &= \begin{cases}
   1 &\text{if } i\in\{E, S_1\},\; j=i\\
   0 &\text{otherwise,}
\end{cases}\\
(W^{(4)})_{i, j} &= \begin{cases}
   -1 &\text{if } i,j=E\\
   1 &\text{if } i=S_1,\; j=E\\
   0 &\text{otherwise.}
\end{cases}
\end{align*}

In these two operations, the output of $S_1$ is preserved and then placed in the desired field $E$.
Additionally, the entries of $E$, which were kept in the first three layers, are subtracted from the target field in $W^{(4)}$, effectively removing the previous entry in $E$. Furthermore, to address cases where the input entries are negative, it is necessary to incorporate additional parameters. These parameters should follow the same structure as previously provided, with inverted signs in the parameters $W^{(1)}$ and $W^{(4)}$ for columns $C$, $D$, and $E$.
These additional entries are generally placed in the scratchpad.

\textbf{Parallel computation over the nodes:} The node-wise format in \Cref{sec:input_matrix} is particularly beneficial for operations like less-than in equation \eqref{eq:less_than} because it operates across entire columns.
This approach enables parallel processing for all nodes in a graph, enhancing efficiency in algorithms.
For example, in Dijkstra's algorithm, the typical implementation involves iterating through each non-visited neighbor of a node.
However, by integrating an additional masking function, we can filter out visited or non-neighboring nodes.
This allows us to conduct all necessary comparisons and conditional selections for every node simultaneously within the inner loop of the implementation.
This modification enhances the algorithm's efficiency by reducing the need for repetitive looping in transformer-based simulations.

\subsection{Read row from $A$}
\label{sec:read_a}
In this section, we provide a configuration of \eqref{eq:layer} that extracts a row from the adjacency matrix.
We describe the function as \texttt{read-A(X, Ã, C, D) = write(Ã[C,:], X[:,D])}, where $X$ and $\Tilde{A}$ are the input and padded adjacency matrices, while $C$ and $D$ are the row and column indices for the respective matrices.
The input structure for this operation requires specialized inputs common to all algorithms presented in this work.
This includes a set of columns to store the node-wise positional encodings, denoted as $P$.
Additionally, a global variable representing the positional encoding of the node of interest is stored in a distinct set of columns, marked as $P_\text{cur}$. The structure of the input matrix $X$ is given below:

\begin{equation*}
X = 
   \mathop{\left[
  \begin{array}{cccccccc}
  \colind{\Compactcdots}{}&
  \colind{0}{$P$}& 
  \colind{p_i}{\raisebox{0.19em}{$P_\text{cur}$}}&
  \colind{1}{$B_\text{global}$} & \colind{0}{$B_\text{local}$}&
  \colind{c_0}{\raisebox{0.19em}{$C$}}&
  \colind{d_0}{\raisebox{-0.19em}{$D$}}&
  \colind{\Compactcdots}{\raisebox{0.19em}{$S$}}\\
  \hline
  \Compactcdots & p_1 & 0& 0& 1 & c_1& d_1 &\Compactcdots \\
  &\vdots & \vdots & \vdots & \vdots & \vdots & \vdots &\\
  \Compactcdots&p_n & 0 & 0& 0 & c_n& d_n &\Compactcdots\\
  \end{array}
  \right].}
\end{equation*}
For simplicity, we let the embedding dimension $d_a=2$, which corresponds to the dimension of positional encodings.
For the attention head for $\Tilde{A}$ in \eqref{eq:layer}, we define:
 \begin{align*}
    (W_K, W_Q)_{i, j} &= \begin{cases}
        1 &\text{if } i\in\{(P)_j,(P_\text{cur})_j\},\; j=1,2\\
        0 &\text{otherwise,}
    \end{cases}\\
    (W_V)_{i, j} &= \begin{cases}
       2 &\text{if } i = B_\text{global}, j = D\\
      0 &\text{otherwise.} 
    \end{cases}
 \end{align*}
The multiplication $XW_QW^\top_KX^\top$ results in a matrix where each element is the inner product between positional encodings of nodes. Positional encodings are formulated such that $\left<p_i,p_i\right>>\left<p_i,p_j\right>$ whenever $i \neq j$ for $i, j\in [n]$. By leveraging this property, we allow the attention matrix to behave as an indicator function.
The result is given below:
\begin{equation*}
  \small
  \sigma\!\left(XW_QW^\top_K X^\top\right) =
  {\tiny\left[\begin{array}{c|cccc} \nicefrac{1}{2} & 0& \Compactcdots& \nicefrac{1}{2} & \Compactcdots \\ \hline 0 & 1& \Compactcdots & 0 & \Compactcdots \\ \vdots & \vdots & \ddots & \vdots & \\ \nicefrac{1}{2} & 0& \Compactcdots& \nicefrac{1}{2} & \Compactcdots \\           \vdots & \vdots & & \vdots & \ddots \\ 0 & 0& \Compactcdots & 0 & \Compactcdots\end{array} \right]}.
\end{equation*}

The result of the multiplication of the attention matrix and $\Tilde{A}$ is further multiplied by the input $X$:
\begin{equation*}
\small
\Tilde{A}\,\sigma\!\left(XW_QW^\top_K X^\top\right)X = \nicefrac{1}{2}
  {
  \mathop{\left[
 \begin{array}{ccc}
 \colind{\Compactcdots}{} & \colind{0}{\tiny $B_\text{global}$} & \colind{\Compactcdots}{} \\
 \hline
 \Compactcdots & A_{1i} &\Compactcdots \\
  & \vdots &  \\
 \Compactcdots & A_{in}  & \Compactcdots\\
 \end{array}
 \right].}}
\end{equation*}
The result in column $B_\text{global}$ is written in the target column $D$ by $W_V$.
Notice that in this case, the target field must also be erased.
This can be done as in \Cref{sec:less_than}, using the $f_{\text{mlp}}$ and an intermediate scratchpad field, or by leveraging another attention head to replicate the identity matrix and subtract the values in the target columns.
\section{Theoretical analysis}
\label{sec:theory}
In this section we demonstrate the ability of the architecture in \eqref{eq:layer} to simulate various graph algorithms. We provide empirical validation of the results in \Cref{app:sec:empirical_validation}. We discuss the implications of incorporating attention heads in the form of \eqref{eq:attention_head} and we demonstrate that the overall architecture is Turing Complete, even when constrained to constant width. We start by describing the positional encodings we use and other related parameters in our architecture.

\subsection{Positional encodings and increment}
\label{sec:positional}
In our approach, we use circular positional encodings to enumerate the nodes in the graph \cite{liu2022transformers}.
This enumeration is carried out by discretizing the unit circle into intervals of angle $\hat{\delta}$\footnote{Numerical issues can arise when representing positions by evenly spaced angles around the unit circle.
This is due to the limited precision in representing certain quantities, especially irrational numbers and rational numbers with repeating binary fractions.
To mitigate this issue, we start by setting an initial angle $p_0$ as $(\sin{0}, \cos{0}) = (0, 1)$ and selecting a minimum increment angle, denoted as $\delta$.
We convert the angle $\delta$ into its nearest sine and cosine representation that can be exactly represented by the machine.
We denote this approximate angle as $\hat{\delta}$ and construct the rotation matrix $R_{\hat{\delta}}$. Given that this matrix is derived from $\hat{\delta}$, it consists of quantities that can be represented by the machine. Importantly, we ensure that $R_{\hat{\delta}}$ maintains the fundamental attribute of orthogonality in the rotation matrix.}.
Each node is then represented by a tuple of sine and cosine values. 

For node enumeration, we use the formula $p_i = R^\top_{\hat{\delta}}p_{i-1}$ for $i \ge 1$. In this context, $\hat{\delta}$ represents the smallest rotation that our system can represent with limited precision. This parameter essentially sets the limit on the number of nodes we can enumerate, which is capped at $\lfloor 2\pi\hat{\delta}^{-1} \rfloor$. This limit is determined by the precision with which we can represent $\hat{\delta}$, thus influencing the maximum number of distinct nodes that can be effectively encoded.
This approach inherently compensates for rotational imprecision since the encodings are generated by the rotation matrix and not by the trigonometric functions of each angle. 
Although this could affect the periodicity of the function, our designs do not rely on this attribute. What is crucial is that the increment function can be executed by a neural network, and the positional encodings satisfy the condition: $\left<p_i,p_i\right>>\left<p_i,p_j\right>$ for $i \neq j$ (see \Cref{sec:read_a}). The rotation matrix $R_{\hat{\delta}}$ can be efficiently implemented as a single linear layer within $f_{\text{MLP}}$. Furthermore, since $R_{\hat{\delta}}$ has unitary singular values, all positional encodings have unitary norms, thus 
by the Cauchy-Schwarz Inequality, the inner product property is preserved.

\subsection{Maximum absolute value parameter}
\label{sec:limitation_omega}

All algorithms that we simulate utilize a conditional selection function.
The implementation of this function depends on the parameter $\Omega$, which represents the maximum absolute value within a clause. 
We approximate the conditional selection function in the following manner:
\begin{equation}
   \label{eq:if_else}
   \texttt{if-else($c_1, c_0, \gamma$)} \approx \phi\left(c_0 - \gamma\,\Omega\right) + \phi\left(c_1 - \left(1-\gamma\right)\Omega\right)\nonumber,
\end{equation}

where $c_0, c_1 \in [-\Omega, \Omega]$ are the clause values and $\gamma\in\{0,1\}$ is the condition.
When $\gamma=0$, the function outputs $c_0$, and $c_1$ otherwise.
The complete construction also replicates the terms for the negative counterparts of the clauses.
This approach relies on $\Omega$ to cancel any opposite terms in the conditional selection.
Therefore, $\Omega$ determines the largest absolute value present in the clauses, and any element larger than $\Omega$ breaks the simulation.

\subsection{Results on simulation of graph algorithms}
 
We now present our results on transformer-based simulations of graph algorithms.
We first present the simulation results for Dijkstra's algorithm on weighted graphs, accompanied by a brief overview of its proof.
We then shift our focus to undirected graphs, examining algorithms such as Breadth-First Search, Depth-First Search, and the identification of Strongly Connected Components.
Alongside these results, we provide an outline of the proofs for the related theorems.
Lastly, we present our results on multitasking, which combines the constructions of Dijkstra's algorithm with those of Breadth-First Search and Depth-First Search, and discuss the principles behind this construction.
The complete proofs are in the \Cref{sec:proofs}.

\begin{theorem}
\label{thm:dijkstra}
There exists a looped-transformer $h_T$ in the form of \eqref{eq:layer}, with 17 layers, 3 attention heads, and layer width $O(1)$ that simulates Dijkstra's shortest path algorithm for weighted graphs with rational edge-weights, up to $O(\hat{\delta}^{-1})$ nodes and graph diameter of $O(\Omega\varepsilon)$.
\end{theorem}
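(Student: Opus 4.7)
The plan is to build the 17-layer block $h_T$ so that each forward pass of \Cref{alg:loop} executes one iteration of Dijkstra's outer loop. First I would set up the input matrix $X$ exactly as in \Cref{sec:input_matrix}: the top row stores the global fields (termination flag \texttt{term}, the current node's positional encoding $P_{\text{cur}}$, the running distance \texttt{dist}, and the bias $B_{\text{global}}$), while the lower $n$ rows hold the node-local state --- positional encodings $P$, distance list \texttt{dists}, predecessor list \texttt{paths}, visited mask \texttt{visit}, bias $B_{\text{local}}$, and a fixed-width scratchpad $S$. The padded adjacency $\tilde{A}$ is never modified and supplies edge weights through the extra attention heads in \eqref{eq:attention_head}.

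I would then partition Dijkstra's loop body into five constant-depth sub-routines and implement each by composing the primitives of \Cref{sec:less_than}--\Cref{sec:limitation_omega}: (i) an \emph{extract-min} that selects the unvisited node of minimum \texttt{dists} and broadcasts its encoding and distance into $P_{\text{cur}}$ and \texttt{dist}; this is done by a single hardmax attention head whose query/key pre-computes a score $-\texttt{dists}_v - M\cdot\texttt{visit}_v$ (with $M$ large enough to dominate $\Omega$) and whose value columns carry $(p_v,\texttt{dists}_v)$; (ii) a \emph{mark-visited} step that uses the positional-encoding inner-product indicator from \Cref{sec:read_a} to flip \texttt{visit}$[u]$ to one; (iii) a \emph{read-row} that applies \texttt{read-A} with the $P_{\text{cur}}$ query to copy $\tilde{A}_{u,:}$ into a scratchpad column, making each neighbour's weight $w(u,v)$ visible in parallel; (iv) a \emph{relaxation} that computes $\text{alt}_v=\texttt{dist}+w(u,v)$, masks non-neighbours (zero weight) and visited nodes via ReLU gates, invokes \texttt{less-than} against \texttt{dists}$[v]$, and uses the \texttt{if-else} gadget of \Cref{sec:limitation_omega} to conditionally overwrite \texttt{dists}$[v]$ and write $u$ into \texttt{paths}$[v]$; and (v) a \emph{termination check} that reduces the \texttt{visit} column through a standard attention head and records the outcome in \texttt{term}. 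Each sub-routine fits in a bounded number of MLP sublayers (the \texttt{write} template alone uses four, as in \Cref{sec:less_than}), and by reusing the scratchpad across blocks the total packs into 17 layers. The attention budget is met by using one standard head from $H$ for extract-min and global broadcasts, one head from $H_A$ for \texttt{read-A}, and one head from $H_{A^\top}$ available for column reads, for a total of three. Width is $O(1)$ because the tracked fields (two distances, one predecessor, one visit flag, two positional-encoding coordinates, biases, and a constant-size scratchpad) are independent of $n$.

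The two quantitative limits stated in the theorem come directly from the two obstructions discussed earlier. The $O(\hat\delta^{-1})$ node bound is the maximum number of distinguishable circular positional encodings from \Cref{sec:positional}, which is what allows the inner-product attention mechanism to act as a node-selector in both the extract-min and read-row gadgets. The $O(\Omega)$ diameter bound arises from \Cref{sec:limitation_omega}: the conditional-selection gadget used in relaxation requires every clause value to lie in $[-\Omega,\Omega]$, and for Dijkstra the largest distance that ever appears equals the weighted graph diameter, so the simulation is faithful exactly while $\mathrm{diam}(G)=O(\Omega)$.

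The main obstacle I expect is the extract-min step. A single hardmax head must simultaneously filter out visited nodes, break ties deterministically between equidistant unvisited nodes, and deliver both the selected node's positional encoding \emph{and} its distance into the global row, all without spending extra heads. Calibrating the penalty $M$ and the tie-breaking bias so that hardmax always produces a valid indicator row, while ensuring the resulting value write does not collide with fields already scheduled for update by later sub-routines, is the part of the construction that requires the most careful bookkeeping; verifying that this fits alongside the other four gadgets within 17 MLP layers and an $O(1)$ scratchpad is the crux of the proof.
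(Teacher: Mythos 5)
Your scaffolding (input layout, scratchpad, \texttt{less-than}, \texttt{if-else}, \texttt{read-A}, the roles of $\hat\delta$ and $\Omega$) matches the paper, but your core extract-min mechanism is not what the paper does, and it contains a gap that would break the simulation. The paper does \emph{not} implement extract-min with a single hardmax head. Instead, the minimum is computed \emph{iteratively} across loop passes: the rewritten Dijkstra (\Cref{alg:Dijkstra}) embeds one iteration of a minimum-scan subroutine (\Cref{alg:mininum}, steps 2--8) directly into the loop body, so each forward pass of $h_T$ advances the scan by one node, and a flag \texttt{term$_\text{min}$} gates all downstream updates (relaxation, visiting, termination) via conditional selection until the scan finishes. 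The 17-layer figure is literally the count of unrolled steps in \Cref{alg:Dijkstra} (one masking layer, seven minimum-scan layers, nine post-minimum layers); your five-subroutine decomposition with a one-shot extract-min would not naturally yield 17. The head allocation is also off: the paper's Dijkstra uses two standard heads and one head over $A$ --- $A^\top$ is never used for this theorem (that head appears only in the SCC construction).

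The gap in your extract-min is concrete. The hardmax used in this paper, as illustrated explicitly in \Cref{sec:read_a}, assigns \emph{uniform nonzero weight to every argmaximum} rather than picking a single one. If two unvisited nodes tie on your score $-\texttt{dists}_v - M\cdot\texttt{visit}_v$ --- which happens at initialization (all tentative distances start at $\hat\Omega$) and whenever two tentative paths coincide in length --- the attention output is the \emph{average} of the tied nodes' positional encodings and distances. An averaged encoding is not a valid $p_i$, so the subsequent \texttt{read-A} row lookup fails. The paper's iterative minimum avoids this: each scan step compares one candidate against one incumbent and deterministically preserves the first minimizer. To salvage your approach you would need a tie-breaking perturbation that survives hardmax, stays within $[-\Omega,\Omega]$, and does not reorder genuinely distinct distances (the paper's edge reweighting keeps the minimum nonzero distance gap above one, which would leave some room); you flag this as the crux but leave it unresolved. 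If it were resolved, your construction would be markedly more efficient (one loop pass per Dijkstra outer iteration versus the paper's $O(n)$ passes per extract-min), but as written the step that would fail is precisely the one you identify.
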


\emph{Proof Overview}: \Cref{thm:dijkstra} is proved by construction. 
Our proof begins by adapting Dijkstra's algorithm to fit our looping structure.
This involves unrolling any nested operations to fit within a single loop, as described in \Cref{alg:loop}.
Although these modifications change Dijkstra's algorithm from its traditional form, they do not alter its core functionality.
In conventional programming, loops within the code must be completed before the rest of the code can continue.
However, in the absence of additional structures, our unrolling approach would lead to all instructions being executed all at once.
This could potentially disrupt the intended sequence of operations.
To address this, we also implement binary flags and conditional selections, as explained in \eqref{eq:if_else}.
These mechanisms are vital for ensuring that certain operations only modify data under specific conditions.
A key example in our simulations is the minimum function which has $O(n)$ complexity and runs concurrently with the rest of the algorithm.
When this function is active, it prevents modifications by other parts of the algorithm.
The modifications can resume once the minimum function has been completed, at which point it becomes inactive.
Finally, each step of the rewritten algorithm is implemented as a series of transformer layers in the form of \eqref{eq:layer}, accounting for 17 layers, and three attention heads: two standard and one specialized over $A$.
Some of the core simulation mechanisms of these functions are extensively discussed in \Cref{sec:method} and \Cref{sec:theory}, demonstrating that most of the constructions are based on previously described principles. Furthermore, \Cref{thm:dijkstra} is restricted to rational edge weights due to the inherent limitations of finite precision in representing edge values. We reweight the edges to circumvent issues arising from the tolerance $\varepsilon$ of the less-than function \eqref{eq:less_than}, which may not adequately capture minor path differences.
This involves dividing all edge weights by the smallest absolute edge weight. Consequently, this ensures that the minimal non-zero path difference is effectively greater than one. The graph size is bounded by the enumeration constraints of $\hat{\delta}$, as mentioned in \Cref{sec:positional}. The dependence on the graph diameter is attributed to the maximum distance that can be constructed during the execution of Dijkstra's algorithm.
Due to the algorithm's design, which excludes repeating or negative paths, the quantity is determined by the weighted graph's diameter. After the reweighting strategy, the diameter must remain below $\Omega$, leading to a dependence of $O(\Omega\epsilon)$ on the original graph diameter.

We now present our results for algorithms on unweighted graphs, followed by a collective discussion of their proofs.

\begin{theorem}
\label{thm:bfs}
There exists a looped-transformer $h_T$ in the form of \eqref{eq:layer}, with 17 layers, 3 attention heads, and layer width $O(1)$ that simulates Breadth-First Search for unweighted graphs with up to $\min(O(\hat{\delta}^{-1}),\,O(\Omega))$ nodes.
\end{theorem}

\begin{theorem}
\label{thm:dfs}
There exists a looped-transformer $h_T$ in the form of \eqref{eq:layer}, with 15 layers, 3 attention heads, and layer width $O(1)$ that simulates Depth-First Search for unweighted graphs with up to $\min(O(\hat{\delta}^{-1}),\, O(\Omega))$ nodes.
\end{theorem}

\begin{theorem}
\label{thm:scc}
There exists a looped-transformer $h_T$ in the form of \eqref{eq:layer}, with 22 layers, 4 attention heads, and layer width $O(1)$ that simulates Kosaraju's Strongly Connected Components algorithm for unweighted graphs with up to 
$\min(O(\hat{\delta}^{-1}),\,O(\Omega))$ nodes.
\end{theorem}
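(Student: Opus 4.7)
\emph{Proof Proposal}: The plan is to build on the DFS construction of \Cref{thm:dfs} and exploit the fact that the attention layer in \eqref{eq:layer} already supports a separate head that uses $\tilde{A}^\top$ rather than $\tilde{A}$. Kosaraju's algorithm consists of two sequential DFS passes: first, a DFS on $G$ that orders the nodes by completion (post-order) time; second, a DFS on the reversed graph $G^T$, starting each new exploration from the not-yet-assigned node with the largest completion time, where each DFS tree in the second pass forms one strongly connected component. Since the two passes only differ in which adjacency matrix is read and in the rule for selecting the next root, we will reuse the phase-1 construction almost verbatim from \Cref{thm:dfs}, and in phase 2 swap the head in $H_A$ for the analogous head in $H_{A^\top}$. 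This accounts for the extra fourth head compared to the DFS construction, and a moderate number of additional layers (22 versus 15) to handle the bookkeeping described below.

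First I would rewrite Kosaraju's algorithm into a single-loop form compatible with \Cref{alg:loop}, introducing a global \texttt{phase} flag and, alongside the fields already present in the DFS construction, three new fields in $X$: a local \texttt{finish} column that stores the post-order time of each node, a global \texttt{counter} that increments whenever a node finishes in phase 1, and a local \texttt{scc\_id} column that stores the SCC label assigned in phase 2. In phase 1, each forward pass implements one transition of DFS on $G$ exactly as in \Cref{thm:dfs}, using the $H_A$ head to read rows of $\tilde{A}$; in addition, when the sub-routine detects that the current node has no unvisited neighbours and is being popped, the layers write the current \texttt{counter} value into the node's \texttt{finish} cell and increment \texttt{counter}. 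These two extra operations can be realised with the same conditional-selection template from \eqref{eq:if_else} and the \texttt{write} mechanism from \Cref{sec:less_than}, fitting within a few extra layers.

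In phase 2, the control flow is again a DFS, but two things change. The first is that the adjacency reads are routed through the $H_{A^\top}$ head, so \texttt{read-A} as in \Cref{sec:read_a} is replaced by an analogous \texttt{read-A$^\top$} sub-routine that extracts columns of $\tilde{A}$, i.e.\ the rows of the reversed graph. The second is that the choice of the next DFS root is governed by an argmax over \texttt{finish} restricted to nodes whose \texttt{scc\_id} is still unassigned; this is the mirror image of the argmin used by Dijkstra in \Cref{thm:dijkstra}, and can be constructed from the same $O(n)$ concurrent selection procedure with the comparison direction reversed. Whenever phase 2 starts a new DFS tree, a fresh SCC label (just the value of \texttt{counter} at that moment, or a second global counter) is broadcast to every node discovered in that tree via the standard global-to-local copy pattern. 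The phase flag blocks phase-2 layers while phase 1 is active and vice versa, analogous to how the Dijkstra construction gates its inner minimum loop, and termination is set once every node has received an \texttt{scc\_id}.

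The main obstacle, as in \Cref{thm:dijkstra}, is the interleaving: the unrolled loop body must execute a single simulation step whether the machine is mid-DFS in phase 1, transitioning between phases, or mid-DFS in phase 2, without any branch writing to fields it should not touch. This is handled by gating every write with \eqref{eq:if_else} conditioned on \texttt{phase} and on the appropriate stage flags, which also explains the growth to $22$ layers. The size bounds follow directly from the two primitives used: the $O(\hat{\delta}^{-1})$ bound comes from positional-encoding enumeration as in \Cref{sec:positional}, and the $O(\Omega)$ bound from the fact that \texttt{counter} and the SCC label are the largest magnitudes appearing in any conditional-selection clause, as discussed in \Cref{sec:limitation_omega}. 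All other steps are variations on sub-routines already established in \Cref{sec:simulation} and in the proofs of \Cref{thm:dfs,thm:dijkstra}.
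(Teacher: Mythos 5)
Your proposal matches the paper's strategy at the architectural level: unroll Kosaraju's two DFS passes into a single loop body gated by a phase flag, reuse the DFS machinery from \Cref{thm:dfs}, route phase-2 adjacency reads through the $H_{A^\top}$ head (accounting for the fourth head), and derive the size bounds from the enumeration limit $\hat\delta^{-1}$ and the conditional-selection ceiling $\Omega$. The main divergence is in how the finish-time ordering is tracked and consumed. You introduce a separate increasing \texttt{counter}/\texttt{finish} column in phase 1 and then run an argmax over \texttt{finish} in phase 2; the paper instead keeps a single \texttt{order} variable that strictly decreases across \emph{both} phases and reuses the very same argmin subroutine (\texttt{get\_minimum}) throughout, simply swapping via conditional selection which priority list (\texttt{orders$_1$} vs.\ \texttt{orders$_2$}) and which \texttt{visit} array it reads. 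Writing the current \texttt{order} value into \texttt{orders$_2$} at the moment a node finishes in phase 1 means later-finishing nodes get smaller priority values, so the same argmin automatically yields reverse post-order in phase 2 without constructing a separate argmax block. This unification is also what gives the paper's sharper accounting: the loop runs up to $2n$ iterations in phase 1 (discovery plus finish) and $n$ more in phase 2, so the single decrementing \texttt{order} reaches magnitude $3n$, bounding the graph to $\Omega/3$ nodes. Two further small details differ: the paper labels each SCC by the index of the root node that starts the corresponding phase-2 tree (threaded through \texttt{node$_\text{ref}$} and the \texttt{sccs} array, which the extended \texttt{get\_minimum} also carries), not by a counter value; and the transition between phases is signalled by a partial-termination flag \texttt{term$_\text{part}$} computed like the ordinary termination check but over \texttt{visit$_2$}. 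None of this invalidates your approach — an explicit \texttt{finish} column and argmax would also work — but the paper's choice reduces subroutine duplication and is what produces the concrete constant in the $O(\Omega)$ bound.
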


\emph{Proof Overview:} Our proofs for Theorems \ref{thm:bfs}-\ref{thm:scc} also follow a constructive approach.
The implementations of breadth-first and depth-first search are comparable to Dijkstra's algorithm in terms of the number of layers and attention heads. However, the Strongly Connected Components algorithm utilizes significantly more layers and an additional attention head for $A$, due to its complexity and the need for a depth-first search over both $A$ and $A^\top$.
Despite the absence of edge weights, we encounter an additional bound of $O(\Omega)$ on the number of nodes.
This stems from our method of replicating the behavior of queues and stacks, which are used in these algorithms. Similar to our approach with Dijkstra's algorithm, we also incorporate an interaction with a minimum function.
During the execution of the algorithms, priority values are assigned to the neighbors of each node visited.
These assigned values shape the minimum function's behavior: increasing priority values mimic the functionality of a queue while decreasing values replicate that of a stack.
Hence, having more nodes in a graph increases the number of loop iterations, thereby increasing the absolute value of the priority value. 
Given that these priority values feature in conditional selections, they are also subject to the constraints imposed by $\Omega$. Consequently, the graph's size limitations are independently determined by either $\hat{\delta}$ or $\Omega$.
In the implementation of Breadth-First Search (BFS) and Depth-First Search (DFS), since nodes only need to be visited once,
$\Omega$ is the maximum node count 
for which accurate simulation of any graph is guaranteed.
In the implementation of the Strongly Connected Components (SCC) algorithm, which involves two distinct DFS operations, the maximum node count is constrained to $\Omega/3$. This specific limit arises from the two-phase process, and also the role of the first DFS in organizing nodes by their finishing times, which is further detailed in \Cref{sec:proofs}.

We now present our results on multitasking, followed by a discussion of its construction.
The following remark encapsulates the capabilities of our unified model.

\begin{remark}
\label{remark:multitask}
There exists a looped transformer $h_T$ in the form of \eqref{eq:layer}, with 19 layers, 3 attention heads and layer width $O(1)$ that simulates
(i) Depth-First Search and Breadth-First Search for unweighted graphs with up to $\min(O(\hat{\delta}^{-1}), O(\Omega))$ nodes; and (ii) Dijkstra's shortest path algorithm for weighted graphs with rational edge-weights with up to $O(\hat{\delta}^{-1})$ and graph diameter of $O(\Omega\varepsilon)$.
\end{remark}

Our unified model can simulate any of the three algorithms with the right input configuration. Although the input structure remains constant across all three algorithms, it only needs minor modifications to specify the algorithm to be executed. This result leverages the construction components of the previous models and thus inherits the limitations presented by the previous results. These algorithms share common structures and functions, which are reused to avoid redundancy, as well as distinct components specific to each algorithm, such as variables and functions that need to be accommodated.
For instance, all algorithms use the minimum function as well as a common termination criterion. 
The accommodation of unique functions for each algorithm is achieved through a conditional selection function.
This function determines the variables to be updated, ensuring the execution reflects the intended algorithmic behavior.

Overall, while our guarantees are limited by parameters $\varepsilon$, $\Omega$, and $\hat{\delta}$, an important question remains: What are the implications of using a graph that fails to meet assumptions in our theorems? This situation could result in inaccurate simulation outcomes or even a failure to meet the pre-established termination condition, thus failing to halt as expected.

\subsection{Turing Completeness}

We now present our result on Turing Completeness of the architecture in \eqref{eq:layer}.
It is important to note that using a graph convolution operation within the attention mechanism as shown in \eqref{eq:attention_head} might affect the model's expressiveness. We demonstrate that a transformer based on this structure is also Turing Complete. Following the methodology of \citet{giannou23a}, we demonstrate this claim by successfully simulating SUBLEQ, a single-instruction language that, with arbitrary memory resources, is proven to be Turing Complete \cite{mavaddat1988urisc}.

\begin{remark}
\label{remark:subleq}
There exists a looped-transformer $h_T$ in the form of \eqref{eq:layer}, which utilizes the modified attention head in \eqref{eq:attention_head}, with 11 layers, 3 attention heads, and layer width $O(1)$ that simulates SUBLEQ.
\end{remark}

\emph{Proof Overview:} We prove that the architecture in \eqref{eq:layer} simulates a modified version of SUBLEQ, which is also Turing Complete. This modification changes the memory structure of SUBLEQ, utilizing two memory blocks instead of one. 
The first block is the standard memory block as in \citet{mavaddat1988urisc} which can be read and altered.
The second is a special read-only memory that only stores the adjacency matrix.
We introduce this memory modification to reflect the fact that the architecture in \eqref{eq:layer} has two inputs, the data matrix $X$ and the padded adjacency matrix $\tilde{A}$.
The former is modified at each iteration of the looped transformer architecture, while the latter is only used in the convolution stage, and it never changes.
The modified version of SUBLEQ utilizes data from the graph by reading the data from the second block and copying them to the first block.
We show that this operation of reading and copying data can be simulated by our read operation in \Cref{sec:read_a}.
The rest of SUBLEQ can be simulated using generic attention heads and MLPs similarly to \citet{giannou23a}.
The complete proof is provided in \Cref{sec:proofs}.

\section{Training Limitations in Algorithm Simulation}

In this section, we address the inherent challenges of recovering the constructive parameters that simulate the aforementioned algorithms through training.
Despite demonstrating the existence of parameters capable of simulation, discovering them through gradient-based training is challenging.

This difficulty arises from the need to approximate discontinuous functions -- such as conditional selection or the less-than function -- with continuous functions.
In such cases, using neural networks to capture the sharp transitions of discontinuities leads to severe ill-conditioning. Ill-conditioning manifests as a sharp and narrow region of the loss function around the desired solution, which hampers parameter recovery during training.
Moreover, as empirically demonstrated in \Cref{sec:appdx:ill_conditioning}, the finer the simulation parameters (e.g., the smaller $\varepsilon$ is in \Cref{eq:less_than}), the stronger the ill-conditioning, thus making recovery increasingly difficult.

However, these findings do not undermine the potential of neural networks to solve such tasks.
It simply highlights the issues related to simulating specific algorithms using neural networks. 
Alternatives to this problem are further discussed in \cite{hertrich2023relu}, where the authors present algorithms for minimum spanning tree and maximum flow that circumvent the discontinuities associated with more conventional approaches like Kruskal's and Edmonds-Karp's algorithms, thus avoiding approximation issues.

We believe that, with current training methods, the solutions that demonstrate good generalization capabilities may also indicate the existence of algorithms that can effectively solve these tasks without the use of conditional branching, which is primarily responsible for the observed discontinuities.

\section{Conclusion and Future Work}
\label{sec:limitations_future}
We present a constructive approach where a looped transformer, combined with graph convolution operations, is used to simulate various graph algorithms. This demonstrates the potential of looped transformers for algorithmic reasoning on graphs. This architecture also proves to be efficient, as it has constant network width regardless of the input graph's size, allowing it to handle graphs of varying dimensions.

Outside the scope of simulation, studying looped transformers for graph algorithms within the Probably Approximate Correct (PAC) learning framework offers an exciting research direction.
Specifically, the constructive parameters $\Omega$, $\hat{\delta}$, and $\varepsilon$ are essential in determining which graphs can be simulated.
However, these parameters may also influence the learnability of algorithmic subroutines.
An interesting research question involves determining the sample complexity of subroutines as a function of these constructive parameters.
Investigating this could further reveal the relationship between the design of solutions and their learning effectiveness under the PAC framework.

\section*{Acknowledgments}
K.~Fountoulakis would like to acknowledge the support of the Natural Sciences and Engineering Research Council of Canada (NSERC). Cette recherche a \'et\'e financ\'ee par le Conseil de recherches en sciences naturelles et en g\'enie du Canada (CRSNG), [RGPIN-2019-04067, DGECR-2019-00147].

\bibliographystyle{plainnat}
\bibliography{references}

\newpage
\appendix
\onecolumn
\section{Additional related work}
\label{app:sec:related_work}
A substantial body of research has focused on testing the capability of neural networks to execute algorithms. Early efforts include models like Hopfield Networks \cite{hopfield1985neural}, which were applied to combinatorial optimization tasks \cite{smith1999neural}. Prior to the emergence of transformers, models such as Neural Turing Machines (NTMs) \cite{graves2014neural} and Memory Networks \cite{weston2014memory} integrated neural networks with external memory, showing strong performance in tasks like copying and sorting. Subsequent advances were seen by adapting NTMs for learning algorithmic tasks \cite{kaiser2015neural}, incorporating a dynamic memory \cite{graves2016hybrid}, and utilizing recurrent models to learn multiple programs \cite{reed2015neural}. These advancements laid the groundwork for concepts used in transformers and this work. For instance, the addressing mechanism in \citet{graves2014neural} is akin to the attention mechanism in transformers, as well as concepts like scratch space for intermediate computations, as proposed by \citet{reed2015neural}.

In the more recent developments related to algorithmic reasoning in transformers,
dedicated frameworks have emerged to represent and execute complex algorithmic tasks.
An example of this is the Restricted Access Sequence Processing Language (RASP).
RASP allows for the expression of sequences of steps in transformers and can be applied to tasks such as token counting and sorting.
Along this line, \citet{lindner2023tracr} developed a compiler for translating between RASP and Transformer models,
while \citet{friedman2023learning} have designed modified Transformers that can be trained and subsequently converted to RASP instructions.

\section{Implementation details}
\label{sec:implementation}
In this section, we describe aspects of the implementation and empirical verification of our simulations.
As mentioned in \Cref{sec:method}, despite our constructions being based on hardmax,
for the purpose of empirical validation and to better align with the implementation practices in transformers, we utilize the softmax function:

\begin{equation*}
   \sigma_S(z_i) = \frac{e^{z_i}}{\sum_{j=1}^{K} e^{z_j}},
\end{equation*}

where $K$ represents the number of elements.
When using the softmax in attention, we divide all parameters in $W_Q$ and $W_K$ by a temperature parameter $T$.
The value of $T$ is important as it determines the sharpness of the softmax operation.
As shown in \citet{giannou23a} and \citet{liu2022transformers}, as the temperature gets arbitrarily small, the behavior of the softmax approximates that of the hardmax, which is used in our constructions.

However, it is important to acknowledge that in practice the simulation of all algorithmic steps is subject to numerical limitations due to floating-point arithmetic.
For example, using the minimum angle $\hat{\delta}$ that a computer can represent accurately is impractical. The reason is that this quantity becomes overshadowed by subsequent operations that introduce imprecision, such as matrix multiplications, and by non-linear elements like the softmax function in the attention stage.
In algorithmic tasks, simulations exhibit a very low tolerance for computational errors. If not properly managed, these errors can propagate, leading to inaccurate prediction or non-termination.
To address this, we adopt a more conservative approach in setting parameters for $\hat{\delta}$ and $\varepsilon$ as described in \Cref{app:sec:empirical_validation}. 
In addition, we incorporate rounding operations for both binary values and positional encodings, following the structure described in \eqref{eq:layer}.

\subsection{Rounding functions}
\label{sec:rounding}
We utilize rounding functions in-between the simulated algorithmic steps to ensure that the computation of binary values made by the transformer does not include additional errors that propagate along the simulation.
To this end, we adapt the rounding operation presented by \citet{giannou23a}, which can be algebraically expressed as:
\begin{equation}
    \label{eq:round_appdx}
    \text{round}(X[:,C])\approx \eta^{-1}\left(\phi(X[:,C]-\nicefrac{1}{2})-\phi(X[:,C]-(\nicefrac{1}{2}+\eta))\right),
\end{equation}
where $\eta$ is a small tolerance value. This implementation can also be done using the structure presented in \eqref{eq:layer}, noting that the expression in \eqref{eq:round_appdx} is equivalent to the \texttt{less-than} function presented in \Cref{sec:less_than}, with $\varepsilon=\eta$, and $X[:,D]$ equivalent to the quantity $1/2$.

In practice, the increment function described in \Cref{sec:increment_appdx} introduces noise in the positional encodings.
To counter this, we employ a rounding procedure for circular positional encodings.
This involves using two transformer layers.
The first layer generates a binary vector based on the current positional encoding
We then round this quantity utilizing the procedure described in \eqref{eq:round_appdx}, and finally obtain the corresponding positional encoding based on the index position using the last layer.

The first attention layer is set up as described in \eqref{eq:attn_read}, where $D$ is set to $B_\text{global}$, and $S_\text{PE}$ denotes the target column in the scratchpad.
Another attention head is used to clear the original scratchpad contents, as detailed in \eqref{eq:read_clear}. The corresponding $f_\text{MLP}$ in this layer implements \Cref{eq:round_appdx}, with $S_\text{PE}$ as the source column $C$.

In the second transformer layer, the matrices $W^{(1)}_K$, $W^{(1)}_Q$, and $W^{(1)}_V$ of the first attention head are defined as follows:

\begin{align*}
(W^{(1)}_K, W^{(1)}_Q)_{i, j} = \begin{cases}
    \Tilde{T}^{\nicefrac{-1}{2}} &\text{if } i=S_\text{PE}\\
    0 &\text{otherwise,}
\end{cases}\quad
(W^{(1)}_V)_{i, j} = \begin{cases}
   2 &\text{if } i=(P)_k, j=(E)_k,\; k\in{0,1}\\
  0 &\text{otherwise.} 
\end{cases}
\end{align*}

In this layer, $E$ denotes the target columns for inserting positional encodings, and $\Tilde{T}$ is an annealed temperature parameter of softmax, which is set higher than in other implementations (e.g., $T=10^{-5}$).
This ensures that minor differences in this attention head are minimized, thereby reducing errors in reading positional encodings. Additionally, we use the configuration of \eqref{eq:read_clear} in another attention head to clear the contents of $E$ and $S_\text{PE}$. Lastly, the implementation of $f_\text{MLP}$, as defined in \eqref{eq:read_mlp}, clears the last $n$ entries of the target field $E$.

\subsection{Writing prevention after termination}
The constructions in the empirical section slightly differ from the ones presented in \Cref{sec:proofs} due to an additional consideration.
While the theoretical constructions require delivering accurate outputs upon reaching the termination condition, one could continue running the looped transformer after the termination flag is activated. To ensure that the solution is not modified post-termination, we introduce extra constraints in the form of flags that deactivate write permissions once the termination condition is met.
These are articulated through algebraic expressions, easily integrating into the transformer implementation.
In practice, the writing permission flag is not determined solely by the minimum function's termination flag (\texttt{term$_\text{min}$}) for the implementation of \Cref{thm:dijkstra} and similar algorithms. Instead, we construct a flag defined as \texttt{write = term$_\text{min}$ and (not term)}, which can be implemented as $\texttt{X[:,write]} = \phi(\texttt{X[:,term$_\text{min}$]-X[:,term]})$.
This condition ensures that once the algorithm reaches the termination state, writing is forbidden.

Furthermore, we employ a conditional selection mechanism -- as explained in \Cref{sec:if_else_appdx} -- to maintain the termination flag, which is activated if the terminating condition is met but the simulation continues running.
This is performed by first moving the calculation of the termination condition to a temporary storage area, denoted as $S_\text{term}$. 
Then, at the end of each iteration, we execute the layer: \texttt{cond-select(X, S$_\text{term}$, term, S$_\text{term}$, term)}.
This means that if the recalculated termination condition in $S_\text{term}$ is true, then the \texttt{term} variable is updated; otherwise, it remains unchanged, effectively managing the looped transformer's operation post-termination.
\subsection{Empirical validation}
\label{app:sec:empirical_validation}
We validate our theoretical results using graphs in the CLRS Algorithmic Reasoning Benchmark \cite{velivckovic2022clrs}.
This benchmark offers data created for multiple tasks, including sorting, searching, and geometry, among others.
Additionally, it encompasses tasks specific to graphs, including those simulated in this study.
These tasks are divided into segments with varying input sizes. For graph-related tasks, the training and validation sets include graphs with 16 nodes, while the test set contains graphs with 64 nodes.
In \Cref{tab:experiments}, we show that our simulations achieve perfect accuracy across all tested instances,

\begin{table}[h]
\centering
\caption{Accuracy over algorithmic tasks on CLRS}
\label{tab:experiments}
\begin{tabular}{lcccc}
\hline
Split (size) & BFS & DFS & Dijkstra & SCC \\ \hline
Train (1000) & 100\% & 100\% & 100\% & 100\% \\
Validation (32) & 100\% & 100\% & 100\% & 100\% \\
Test (32) & 100\% & 100\% & 100\% & 100\% \\ \hline
\end{tabular}
\end{table}

where we also observe the same performance for the multitask model. In our empirical verifications, we consistently apply specific parameters: an angular increment $\delta$ set at $10^{-2}$, a maximum value $\Omega$ of $10^5$, and a temperature parameter $T$ at $10^{-7}$.
Additionally, in the masking operations such as \Cref{sec:mask_visited_appdx}, we address numerical imprecision during execution by using a value that is an order of magnitude smaller than $\Omega$.
This precaution is crucial because minor inaccuracies that slightly increase these values above $\Omega$ could lead to cumulative errors in the conditional selection function, potentially impacting the integrity of our results.

Furthermore, beyond the practical considerations discussed in the previous sections, the implementation of breadth-first search in the CLRS benchmark slightly differs from the one presented in \Cref{alg:bfs}.
This is because, as mentioned in \citet{velivckovic2022clrs}, the results of the algorithm are built on top of the concept of parallel execution, which implies that for algorithms such as breadth-first search, there is no concept of order in the discovery of neighbor nodes, as outlined in \citet{cormen2022introduction}.
Instead, the order of discovery in CLRS is established by the value of the node index.
Therefore, to address this particularity, the implementation of the conditional selection function in step (9) of \Cref{alg:bfs} has another pair of clauses, which can be expressed by \texttt{cond-select(X, val$_\text{best}$, order, term$_\text{min}$, order)}.
Essentially, this variation in the implementation is designed to capture the priority value from the parent node, thereby ensuring uniform priority assignment across all nodes within the same neighborhood level.

\subsection{Ill-conditioning effects on recovering simulation parameters}
\label{sec:appdx:ill_conditioning}
In this section, we discuss the challenges that hinder the discovery of simulation parameters for any of the algorithms presented through training.
We begin by examining an important distinction in simulation results: exact versus approximate representation.
When analyzing the expressivity of neural networks, most results focus on their ability to approximate desired target functions \cite{cybenko1989approximation,hornik1989multi,lin2017does, zhou2020universality,you2020are}.
However, only certain functions can be \emph{exactly} represented by neural networks, i.e., with no approximation error.
In \cite{arora2018understanding}, the authors demonstrate that feedforward neural networks with ReLU activation functions are inherently limited to exactly representing continuous and piecewise-linear functions.
This limitation does not prevent such neural networks from representing other functions, albeit only approximately.
This issue extends beyond the specific architecture studied in \cite{arora2018understanding}, as it can be argued that any neural network using continuous functions can only approximate discontinuous functions.
Our simulation results directly observe this limitation: the construction of discontinuous functions, such as if-else or less-than, are subject to a degree of approximation controlled by the parameters $\Omega$ and $\varepsilon$, as shown in \Cref{eq:if_else} and \Cref{eq:less_than}, respectively.

Relying on approximations for discontinuous functions introduces a significant challenge: ill-conditioning.
In discontinuous functions, small input increments can cause sharp changes in outputs.
Capturing this in the linear operations of neural networks, which must approximate these sharp variations, leads to ill-conditioning.
The more precise the approximation, the more ill-conditioned the solution becomes, directly affecting the discovery of such parameters through training.
Even with strong supervision \cite{velivckovic2022clrs, Xu2020What} in the form of ground-truth inputs and outputs for every layer, recovering such approximations is challenging.

\begin{figure}[ht]
    \centering
\includegraphics[width=.65\textwidth]{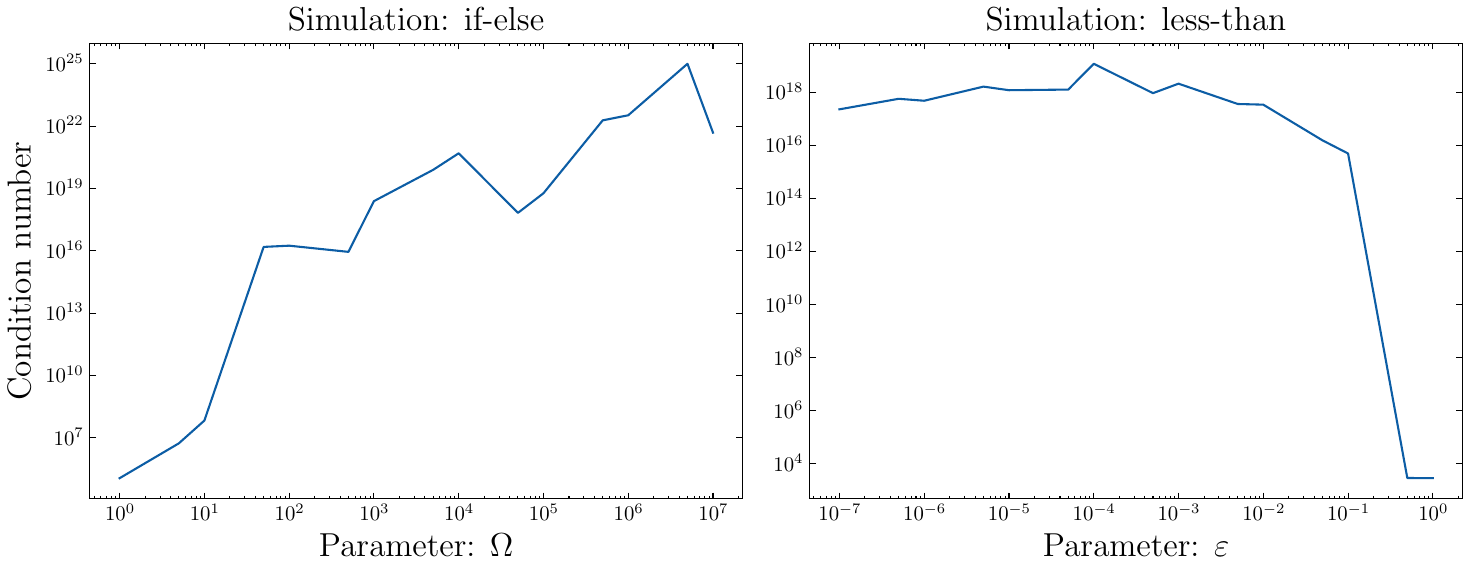}
    \caption{Condition number of linear layers in the constructions of the if-else (left) and less-than (right) functions. The x-axes indicate the value of the construction parameters $\Omega$ and $\varepsilon$ of \Cref{eq:if_else} and \Cref{eq:less_than}, respectively. A higher $\Omega$ improves the simulation quality for the if-else function, while a smaller $\varepsilon$ improves the simulation quality for the less-than function.}
    \label{fig:appendix_kappa}
\end{figure}

To illustrate this, we use our construction of the if-else function (\Cref{eq:if_else}), controlled by the parameter $\Omega$, which limits the highest absolute value in any clause, determining the range of the approximation.
We then estimate the Hessian with respect to the loss function as a function of $\Omega$, as shown in \Cref{fig:appendix_kappa}.
Analyzing the condition number of this Hessian provides insight into the sharpness of the region surrounding the optimal solution: a higher condition number indicates a sharper and narrower region around the local optimum, which complicates recovery during training.

Our empirical results show that once $\Omega$ exceeds 50, the condition number of the Hessian rises above $10^{16}$. This indicates an optimization landscape so sharp that finding the solution through any learning routine becomes virtually impossible.
To further support this claim, we conducted an additional experiment using the if-else function.
We initialized the layer parameters with the construction values, adding a very small quantity of random noise (of magnitude $10^{-7}$).
Despite conducting numerous experiments, initializing the parameters very close to the optimal target, and utilizing various optimization strategies—including second-order methods—the parameters failed to converge to the target solution.

This experiment further illustrates that any modern architecture relying on continuous functions to approximate discontinuities will struggle to discover parameters through training that achieve perfect (or even good) scale generalization. The better the scale generalization required, the more severe the ill-conditioning becomes, making it increasingly difficult to discover good parameters.

\clearpage
\section{Proofs of Theorems}
\label{sec:proofs}
\subsection{Preparations for the proofs}
\label{sec:min_appdx}
In this section, we first introduce key coding primitives that are essential to all the algorithmic simulations presented.
We illustrate these coding primitives through the algorithm of the minimum function.
This function is designed to identify the smallest value in a list, along with its index and other pertinent information.
As outlined in \Cref{sec:theory}, for each simulated algorithm, we adapt their formulations to align with the looped architecture of the transformer.
Although this architecture inherently does not support inner loops, we simplify the presentation of the pseudocode by using inner loops solely to depict functions that are operated in parallel by the transformer.
As further depicted in Algorithms \ref{alg:Dijkstra}-\ref{alg:scc}, the iteration of the minimum function described in \Cref{alg:mininum} is executed inside the main loop of the transformer, along with the other functions of the corresponding algorithms.
The algorithm description is also accompanied by comments that enumerate the steps that are implemented using the layer formulation in \eqref{eq:layer}.
Each comment is also linked to the section that provides a more detailed explanation of its implementation.

{\centering
\begin{minipage}{.7\linewidth}
\begin{algorithm}[H]
    \small
  \caption{Iteration of the minimum function (get\_minimum)}
  \label{alg:mininum}
\begin{algorithmic}[1]
  \REQUIRE {\bfseries Input:} data $x$, size $n$
  \COMMENT{List of values to find the minimum}
  \REQUIRE {\bfseries Input:} {\color{gray}{data $x_{\text{SCC}}$, size $n$}}
   \\\hrulefill
  \STATE {\color{gray}{scc$_\text{cur}$, scc$_\text{best}$ = 0, 0}}
  \COMMENT{Only used for SCC}
  \STATE idx$_\text{cur}$, val$_\text{cur}$ = 0, 0
  \STATE idx$_\text{best}$, val$_\text{best}$ = 0, $\Omega$
  \STATE visit$_\text{min}$ = array of size $n$
\FOR{$i = 1$ \TO $n$}
    \STATE visit$_\text{min}$[i] = \FALSE
\ENDFOR
  \STATE term$_\text{min}$ = \FALSE
   \\\hrulefill
    \IF{term$_\text{min}$ is \TRUE}
      \STATE $\cdots$
    \STATE term$_\text{min}$ = \FALSE
    \COMMENT{(1) Re-initialize variables }[\ref{sec:if_else_appdx}]
    \ENDIF
    \STATE idx$_\text{cur}$ = idx$_\text{cur}$ + 1
    \COMMENT{(2) Increment position }[\ref{sec:increment_appdx}]
    \STATE val$_\text{cur}$ = $x$[idx$_\text{cur}$]
    \COMMENT{(3) Read value at current position }[\ref{sec:read_appdx}]
    \STATE {\color{gray}{scc$_\text{cur}$ = $x_{\text {SCC}}$[idx$_\text{cur}$]}}
    \STATE condition = val$_\text{cur}$ $<$ val$_\text{best}$
    \COMMENT{(4) Compare values }[\ref{sec:less_than_appdx}]
    \IF{condition}
        \STATE val$_\text{best}$, idx$_\text{best}$ = val$_\text{cur}$, idx$_\text{cur}$
        \COMMENT{(5) Update variables }[\ref{sec:if_else_appdx}]
        \STATE {\color{gray}{scc$_\text{best}$ = scc$_\text{cur}$}}
    \ENDIF
    \STATE visit$_\text{min}$[idx$_\text{cur}$] = \TRUE
    \COMMENT{(6) Visit current position }[\ref{sec:visit_min_appdx}]
    \STATE term$_\text{min}$ = \NOT (\FALSE \text{ in visit$_\text{min}$}) \COMMENT{(7) Trigger termination }[\ref{sec:terminate_min_appdx}]
\end{algorithmic}
\end{algorithm}
\end{minipage}
\par
}

The lines [1-8] consist of the initialization of the data, while the remaining represent the algorithm executed inside the loop.
Note that lines 1, 15, and 19 are highlighted in gray, and are only defined in the case of the Strongly Connected Components algorithm.
As described in \Cref{sec:input_matrix}, the single variables (e.g. \texttt{val$_\text{cur}$}) occupy the top row, while lists (e.g. \texttt{visit$_\text{min}$}) occupy the last $n$ rows of $X$.
Furthermore, the implementation of indexes requires two different representations in the input matrix $X$.

In \Cref{alg:mininum}, these correspond to \texttt{idx$_\text{cur}$} and \texttt{idx$_\text{best}$} as well as the variables for the Strongly Connected Component implementation.
In $X$, these indices are represented both as integer numbers and positional encodings. The latter is vital for algorithm functionality, while integer numbers are key in the decoding phase, forming the output data.
Consequently, ambiguous variables in proofs are referred to using their positional encoding.
For example, if \texttt{idx$_\text{cur}$} points to the list's first element during execution, it is represented as $p_1=(\sin(\theta_1), \cos(\theta_1))$ — the positional encoding for the first node. The integer number representation is emphasized when relevant.
Finally, the comments that enumerate and describe the steps executed are the basis for the functions implemented by our architecture.

\subsubsection{Conditional selection: Steps (1) and (5)}
\label{sec:if_else_appdx}
This section details the implementation of the conditional selection functions outlined in steps (1) and (5) of \Cref{alg:mininum}.
We reference the structure of this implementation in the subsequent proofs.
In our constructions, we express this operation as: \texttt{cond-select(X,V$_\text{1}$,V$_\text{0}$,C,E) = write(if-else(X[:,V$_\text{1}$], X[:,V$_\text{0}$], X[:,C]), X[:,E])}. Here, $X$ is the input matrix; $V_1$ and $V_0$ are columns for clause values; $C$ and $E$ are the condition and target field, respectively.
The approximation of the function \texttt{if-else} is shown in \eqref{eq:if_else}.
Next, we describe how this function is implemented. As with other implementations, we utilize a scratchpad area, denoted as $S$, to store intermediate calculations. Each column in this space is indexed by $S_i$.
During implementation, we also introduce more abstract indices for the scratchpad, such as $S_{\text{idx$_\text{cur}$}}$.
While it is understood that this index correlates to a specific integer number, we adopt this approach 
to enhance clarity and comprehension. By using such descriptive indexing, we aim to convey the intended purpose of each allocation in the scratchpad.

As also described in the implementation of the \texttt{less-than} function (see \Cref{sec:less_than}), we set all parameters to zero in the attention stage, only maintaining the residual connection.
Then, for the values of the $f_{\text{MLP}}$, we define:
\begin{align}
\label{eq:if_else_appdx}
(W^{(1)})_{i, j} &= \begin{cases}
   1 & \text{if } i\in\left\{\substack{{V_0, V_1, C, E,}\\{B_\text{global}, B_\text{local}}}\right\},\; j=i, \\
   0 & \text{otherwise,}
\end{cases}
\;
&&(W^{(2)})_{i, j} = \begin{cases}
   1 & \text{if } (i,j)\in \left\{\substack{(E, E), (V_0, S_1)\\(V_1, S_2)}\right\}\\
   -\Omega & \text{if } (i,j)\in \left\{\substack{(C, S_1), (B_\text{global}, S_2)\\(B_\text{local}, S_2)}\right\}\\
   \Omega & \text{if } i = C, j = S_2\\
   0 & \text{otherwise,}\nonumber
\end{cases}\\
(W^{(3)})_{i, j} &= \begin{cases}
   1 &\text{if } (i,j)\in \left\{\substack{(E, E), (S_1, S_1)\\(S_2, S_1)}\right\}\\
   0 & \text{otherwise,}
\end{cases}
\;
&&(W^{(4)})_{i, j} = \begin{cases}
   1 & \text{if } i = E, j = S_1 \\
   -1 & \text{if } i, j = E\\
   0 & \text{otherwise.}
\end{cases}
\end{align}

In the general construction of the conditional selection function, the first layer, $W^{(1)}$, functions as an identity operator. The second layer, $W^{(2)}$, constructs the arguments of $\phi$ as detailed in \eqref{eq:if_else}.
The third layer, $W^{(3)}$, sums the terms post-ReLU, and the last layer, $W^{(4)}$ is designed to transfer the term, initially located in the scratchpad, to the specified column $E$.
Our complete construction also accounts for the negative counterparts of the clauses.
For simplicity, the adaptation for these counterparts is not included in the proofs, as it simply involves inverting the signs of the inputs $V_0, V_1$ and $E$ in $W^{(1)}$ and $W^{(4)}$.
While we present an implementation for a single pair of clauses $V_0$ and $V_1$, this function can accommodate additional variables following the same mechanism as in \eqref{eq:if_else}, as is done for steps (1) and (5) of Algorithm \ref{alg:mininum}.
For example, in the implementation of \Cref{alg:mininum}, the conditional selection clause represented in step (5) also needs to update the integer representation of the index, therefore another pair of clauses is utilized in (5).

Essentially, the general implementation of the conditional selection function operates by choosing a value from two columns in the input matrix and inserting it into a designated field. However, the implementation varies in the context of the re-initialization function. In this scenario, the conditional selection is executed between the current values in the algorithm and their reset versions. Specifically for this function, the reset values must be generated in the first layer and then placed in the scratchpad, which assumes the role of the $V_1$ field in the aforementioned formulation. This adjustment ensures that the function appropriately handles the re-initialization process within the algorithmic framework.
\begin{equation}
(W^{(1)})_{i, j} = \begin{cases}
   1 & \text{if } i = \left\{B_\text{global}, B_\text{local}, \text{idx$_\text{cur}$, val$_\text{cur}$, ..., visit$_\text{min}$}\right\}, j=i\\
   (p_0)_k & \text{if } i=(\text{idx$_\text{cur}$})_k, j=(S_{\text {idx$_\text{cur}$}})_k,\, k\in \{1, 2\}\\
   \Omega & \text{if } i=\text{val$_\text{best}$}, j=S_{\text{val$_\text{best}$}}\\
   0 &\text{if } i=\text{val$_\text{cur}$, ..., visit$_\text{min}$}, j=S_{\text{zero}}\\
   0 & \text{otherwise.}
\end{cases}
\end{equation}

In this implementation, the variable for the current index is reset to a specific value, $p_0 = (\sin(0), \cos(0)) = (0, 1)$. This value does not correspond to any node enumerated and is intentionally reserved in our constructions to represent the equivalent of the index $0$ in integer numbers. This approach ensures that $p_0$ serves as a unique and identifiable starting point or reset state within the algorithm.
For the remaining layers, the placeholder of field $V_1$ is then occupied by the targets of $W^{(1)}$, i.e. $S_{\text{idx$_\text{cur}$}}, S_{\text{val$_\text{best}$}}$ and $S_{\text{zero}}$ in their corresponding operations.
Moreover, it is important to note that regardless of the established condition, the status of term$_\text{min}$ is invariably set to false. Consequently, this field is consistently cleared in every iteration, similar to the way the target field $E$ is cleared.

\subsubsection{Increment position: step (2)}
\label{sec:increment_appdx}
After the re-initialization process, the first step that follows is obtaining the next index in the list.
In this case, since we utilize circular positional encodings, all that is needed is to utilize a rotation matrix.
The rotation matrix is predetermined by $\hat{\delta}$, as discussed in \Cref{sec:positional}.
We express this operation as \texttt{increment(X, C, D) = write(rotate(X[:, C]), X[:,D])}, where $C$ and $D$ are the source and target columns.
In the implementation of step (2), both $C$ and $D$ are the variable \texttt{idx$_\text{cur}$}.
Below, we detail the specific set of parameters used to implement this function.
Again, in this case, we only utilize the entries of $f_{\text{MLP}}$, while the parameters of $f_\text{attn}$ are set to zero.
\begin{align*}
(W^{(1)})_{i, j} &= \begin{cases}
    \cos(\hat{\delta}) &\text{if } i=(C)_k,\; j=(S_{C})_k,\; k\in\{1,2\}\\
    -\sin(\hat{\delta}) &\text{if } i = (C)_1,\; j=(S_{C})_2\\
    \sin(\hat{\delta}) &\text{if } i = (C)_2,\; j=(S_{C})_1\\
   1 &\text{if } i\in D,\; j=i\\
   0 & \text{otherwise,}
\end{cases}\\
(W^{(2,3)})_{i, j} &= \begin{cases}
1 &\text{if } i\in \left\{D, S_C\right\},\; j=i\\
0 & \text{otherwise,}
\end{cases}\quad
(W^{(4)})_{i, j} = \begin{cases}
1 &\text{if } i=(S_C)_k, j=(D)_k, k\in\{1,2\}\\
-1 &\text{if } i\in D, j=i\\
0 &\text{otherwise.}
\end{cases}
\end{align*}

In this context, the initial entries of the matrix $W^{(1)}$ correspond to the rotation matrix $R_{\hat{\delta}}$.
Additionally, this matrix includes a unique non-zero entry, specifically designed to reset the target field of the final layer.
It is also important to note that the variables $C$ and $D$ are represented by a tuple of sine and cosine values, and as such, its coordinates must be accurately aligned.
Moreover, the implementation must account for the fact that positional encodings can possess negative values. The parameters for these added entries are essentially the inverse of the signs present in $W^{(1)}$ and $W^{(4)}$. To facilitate this process, the implementation also incorporates additional scratchpad entries, which serve to temporarily hold these intermediate results.

\subsubsection{Read value: Step (3)}
\label{sec:read_appdx}
Once the node index is updated, we need to retrieve its associated value from the input list, possibly accompanied by other relevant information. 
This operation can be expressed as \texttt{read-X(X, C, D, E) = write(X[C, D], X[1,E])}, where $C$ and $D$ denote the coordinates of the target entry and $E$ is the placeholder for the target field.
In a more basic form, this operation involves extracting information from a local variable and transferring it into a global variable.
For the implementation of step (3), we utilize this function to extract the entry from the list of values (denoted by $x$ in \Cref{alg:mininum}) as well as from the list that encodes the indexes with integer numbers.

The principle for this operation is similar to the extraction of rows or columns of A, as presented in \Cref{sec:read_a}.
We employ a standard attention mechanism (no graph convolution) leveraging the positional encoding of the idx$_\text{cur}$ variable to obtain the entries in the corresponding row.
 \begin{align}
  \label{eq:attn_read}
    (W^{(1)}_K, W^{(1)}_Q)_{i, j} = \begin{cases}
        1 &\text{if } i\in\{(P)_j, (C)_j\},\; j=1,2\\
        0 &\text{otherwise,}
    \end{cases}\quad
    (W^{(1)}_V)_{i, j} = \begin{cases}
       2 &\text{if } i=D, j=E\\
      0 &\text{otherwise.} 
    \end{cases}
 \end{align}

To ensure the target column $E$ is prepared for the insertion of the desired value, we apply a distinct attention mechanism.
This mechanism employs a standard attention head but sets the first entry of $XW_K$ and $XW_Q$ to $p_0$. This configuration guarantees that, when executing a hardmax operation, the attention matrix effectively functions as an identity matrix. Consequently, this facilitates the removal of any existing entries in the target column $E$.

The matrices $W^{(2)}_K$, $W^{(2)}_Q$, and $W^{(2)}_V$ are defined as follows:
 \begin{align}
 \label{eq:read_clear}
    (W^{(2)}_K, W^{(2)}_Q)_{i, j} = \begin{cases}
        1 &\text{if } (i,j)\in\left\{((P)_1, 1), ((P)_2, 2), (B_\text{global}, 2)\right\}\\
        0 &\text{otherwise,}
    \end{cases}\quad
    (W^{(2)}_V)_{i, j} = \begin{cases}
       -1 &\text{if } i, j = E\\
      0 &\text{otherwise.} 
    \end{cases}
 \end{align}

This results in the following attention matrix:
 \begin{equation*}
 \small
  \sigma\!\left(XW^{(2)}_QW^{(2)\top}_K X^\top\right) = { \sigma\left(\left[\begin{array}{c|cccccc} p_0^\top p_0 & p_0^\top p_1& \Compactcdots& p_0^\top p_n \\ 
  \hline p_1^\top p_0 & p_1^\top p_1& \Compactcdots & p_1^\top p_n\\ 
  \vdots & \vdots & \ddots & \vdots \\ p_n^\top p_0 & p_n^\top p_1& \Compactcdots & p_n^\top p_n\end{array}\right]\right)} =
  {\left[\begin{array}{c|ccccc} 1 & 0& \Compactcdots& 0 \\
  \hline 0 & 1& \Compactcdots & 0 \\
  \vdots & \vdots & \ddots & \vdots \\ 
  0& 0& \Compactcdots &  1\end{array} \right]}  
\end{equation*}

By employing the value matrix $W_V$, we can clear the target column $E$ to insert new values.
It is important to note that the attention head in \eqref{eq:attn_read} also adds terms to the last $n$ rows of $E$, which are undesirable.
To counteract this, we use the parameters of the $f_\text{MLP}$:
\begin{align}
\label{eq:read_mlp}
(W^{(1)})_{i, j} = \begin{cases}
   1 & \text{if } i,j=E\\
   -\Omega &\text{if } i=B_\text{global}, j=E\\
   0 & \text{otherwise,}
\end{cases}
\quad
(W^{(2,3)})_{i, j} = \begin{cases}
   1 & \text{if } i,j=E\\
   0 & \text{otherwise,}
\end{cases}
\quad
(W^{(4)})_{i, j} = \begin{cases}
   -1 & \text{if } i,j=E\\
   0 & \text{otherwise,}
\end{cases}
\end{align} 

In this construction, the first layer removes the top row entry of $E$, which is then passed along the second and third layers, and finally subtracted from the target, removing all entries in the bottom rows.

\subsubsection{Compare values: step (4)}
\label{sec:less_than_appdx}
In this stage, we focus on comparing the current value (\texttt{val$_\text{cur}$}) with the best value found so far (\texttt{val$_\text{best}$}). The implementation of this function is detailed in \Cref{sec:less_than}. In this context, the operation is expressed as \texttt{less-than(X, val$_\text{cur}$, val$_\text{best}$, condition)}.

\subsubsection{Visit node: step (6)}
\label{sec:visit_min_appdx}
Each iteration of the minimum function sets the current node as visited. This action is crucial for the function's termination criteria, which checks if all nodes have been visited.

Essentially, the role of this function is to assign a predefined value to a specific entry $(X)_{i,j}$. We express this operation as \texttt{write-row(X, C, D, E) = write(X[0,D].e$_{X[0,C]}$, X[:,E])}. Here, $e_i$ symbolizes the standard basis vector in an $n+1$ dimensional space. The variables $C$, $D$, and $E$ in the expression represent columns that contain the row to be written, the writing content, and the target field, respectively. This writing procedure is analogous to the read function, previously described in \Cref{sec:read_appdx}.

The matrices $W_K$, $W_Q$, and $W_V$ are defined as follows:
\begin{align}
\label{eq:attn_visited}
(W^{(1)}_K, W^{(1)}_Q)_{i, j} = \begin{cases}
    1 &\text{if } i\in\{(P)_j, (C)_j\},\; j=1,2\\
    0 &\text{otherwise,}
\end{cases}\quad
(W^{(1)}_V)_{i, j} = \begin{cases}
   2 &\text{if } i=D, j=E\\
  0 &\text{otherwise.} 
\end{cases}
\end{align}

In the application of step (6) of \Cref{alg:mininum}, we set $C$ to be idx$_\text{cur}$, $D$ to $B_\text{global}$, and $E$ to the visit$_\text{min}$ variable.
Additionally, as also pointed out in \Cref{sec:read_appdx}, the extra entries generated by the attention head in \eqref{eq:attn_visited} must also be eliminated.
Contrary to the read function, in this instance, the unwanted entry appears in the top row of the target field.
To address this, we adopt the same strategy as in the read function, utilizing an additional attention head that functions as the identity function, deleting the top row entry using the global bias $B_\text{global}$.
 \begin{align*}
    (W^{(2)}_K, W^{(2)}_Q)_{i, j} = \begin{cases}
        1 &\text{if } (i,j)\in\left\{((P)_1, 1), ((P)_2, 2), (B_\text{global}, 2)\right\}\\
        0 &\text{otherwise,}
    \end{cases}\quad
    (W^{(2)}_V)_{i, j} = \begin{cases}
       -1 &\text{if } i = B_\text{global},\; j = E\\
      0 &\text{otherwise.} 
    \end{cases}
 \end{align*}

To define $f_\text{MLP}$, we set all its parameters to zero and retain only the residual connection. This approach essentially disregards any contribution from $f_\text{MLP}$.

\subsubsection{Terminate: step (7)}
\label{sec:terminate_min_appdx}
At the end of each step of the minimum function, as well as in the other algorithms, we must verify if all nodes have been visited.
If all nodes are visited, this indicates termination, therefore the termination flag must be activated.
This process can be formulated as \texttt{all-one(X, C, E) = write(all(X[2:,C]), X[1,E])}, where \texttt{all} is a function indicating whether all elements are non-zero, while $X[2\!:,C]$ captures the last $n$ rows of the column $C$.

We start by defining the attention stage:
\begin{align*}
(W^{(1)}_K, W^{(1)}_Q)_{i, j} = \begin{cases}
    1 &\text{if } (i,j)\in\left\{((P)_1, 1), ((P)_2, 2), (B_\text{global}, 2)\right\}\\
    0 &\text{otherwise,}
\end{cases}\quad
(W^{(1)}_V)_{i, j} = \begin{cases}
   -1 &\text{if } i=E,\; j=i\\
   1 &\text{if } i=B_\text{global},\;j=E\\
  0 &\text{otherwise.} 
\end{cases}
\end{align*}

The first attention head, defined as the read function in \Cref{sec:read_appdx}, is tasked with clearing fields used in the writing process. Additionally, we define:
\begin{align*}
(W^{(2)}_K)_{i, j} = \begin{cases}
    -1 &\text{if } i=B_\text{global}\\
    1 &\text{if } i=B_\text{local}\\
    0 &\text{otherwise,}
\end{cases}
\quad
(W^{(2)}_Q)_{i, j} = \begin{cases}
    1 &\text{if } i=B_\text{global}\\
    -1 &\text{if } i=B_\text{local}\\
    0 &\text{otherwise,}
\end{cases}
\quad
(W^{(2)}_V)_{i, j} = \begin{cases}
  \Omega &\text{if } i=C,\; j=E\\
  -\Omega &\text{if } i=B_\text{local},\; j=E\\
  0 &\text{otherwise.} 
\end{cases}
\end{align*}

This results in the following attention matrix:
 \begin{equation}
 \label{eq:attention_matrix_term}
  \small
\sigma\!\left(XW^{(2)}_QW^{(2)\top}_K X^\top\right) = { 
  \sigma\left(2\left[\begin{array}{c|cccccc} -1 & 1& \Compactcdots& 1 \\
  \hline 1 & -1& \Compactcdots & -1\\ 
  \vdots & \vdots & \ddots & \vdots \\ 
  1 & -1& \Compactcdots & -1\end{array}\right]\right)} 
  =
  {\left[\begin{array}{c|ccccc}
  0 & \nicefrac{1}{n}& \Compactcdots& \nicefrac{1}{n} \\
  \hline 1 & 0& \Compactcdots & 0 \\
  \vdots & \vdots & \ddots & \vdots \\ 
  1& 0& \Compactcdots &  0\end{array} \right]}  
\end{equation}

The top row of the attention matrix is crucial as it aggregates the last $n$ terms of the matrix $X$.
In addition, since our construction only uses columns with null values at the top row, the remaining $n$ rows in \eqref{eq:attention_matrix_term} do not add any contribution in any field.
We first define the terms for $f_\text{MLP}$, followed by an overview of how this implementation simulates the desired behavior.
\begin{align*}
(W^{(1)})_{i, j} &= \begin{cases}
    1 & \text{if } i=E,\, j\in\{E, S_{E}\}\\
    -1 & \text{if } i=E,\, j=S_{\Tilde{E}}\\
    0 & \text{otherwise,}
\end{cases}
\quad
(W^{(2, 3)})_{i, j} = \begin{cases}
    1 & \text{if } i\in\{E, S_E, S_{\Tilde{E}}\},\; j=i\\
    0 & \text{otherwise,}
\end{cases}\\
(W^{(4)})_{i, j} &= \begin{cases}
    1 & \text{if } i\in\{E, S_{\Tilde{E}}\},\; j=E\\
    -1 & \text{if } i=S_E,\; j=E\\
    0 & \text{otherwise.}
\end{cases}
\end{align*}

In the implementation of the $f_\text{MLP}$, the entries for $S_E$ and $S_{\Tilde{E}}$ are used to remove the entries pre-MLP, in both positive and negative cases.
In summary, the entire operation performs as follows: $\phi(1 -\Omega + \nicefrac{\Omega}{n}\sum_i(X[i,C]))$.
When the sum of elements in C is less than $n$, the term $-\Omega$ dominates, ensuring a resulting flag of zero.
However, once all nodes are visited, $X[:,C]$ offsets the negative influence of $\Omega$, thereby activating the flag.

\newpage
\subsection{Proof of \Cref{thm:dijkstra}}
\label{sec:dijkstra_appdx}
In this section, we introduce the constructions utilized to simulate Dijkstra's shortest path algorithm \cite{dijkstra1959note}.
This algorithm aims to determine the shortest path between any node of a weighted graph and a given source node.
To this end, we first present an overview of the implementation strategy, which accompanies the limitations that are described in \Cref{thm:dijkstra}.
These are then followed by the specific construction of each step of the algorithm.

\textbf{Implementation overview:} 
Executing Dijkstra's algorithm consists of an iterative process that detects the node with the current shortest distance from the source node,
updating the shortest path distances to neighboring nodes and marking the given node as visited. This process continues until all nodes have been visited.
In our implementation, we simulate a minimum function that operates in the column that stores the distances.
This operation is alternated with the main execution of the algorithm, that is, the section that reads the adjacency matrix and updates the distances.
This is made possible by the conditional selection mechanism and the minimum function's termination flag, which conditions any writing operations in the main execution of the algorithm.

\textbf{Specifications of the architecture:} 
In the following pseudocode description, we show the steps required to execute the algorithm.
Each of the steps has an associated transformer layer in the form of \eqref{eq:layer} that implements the corresponding routine.
In the case of Dijkstra's algorithm, there is a total of 17 steps, and therefore our implementation requires 17 layers.
Furthermore, throughout the implementation details of each step, no configuration uses parameters whose count scales with the size of the graph, thus accounting for constant network width.

\textbf{Limitations:} Beside the enumeration and the minimum edge limitations explained in \Cref{sec:theory}, the diameter of the graph also imposes a limitation on the set of graphs our constructions can represent.
Unlike the implementation of the other algorithms, the priority for selecting a node is determined by its current distance from the source node.
As explained in \Cref{sec:limitation_omega}, the constructions of conditional selection functions are limited by $\Omega$, the largest absolute value that can be utilized in a conditional clause.
This constructive limitation constrains the distances that can be represented.
More specifically, the greatest distance between any pair of vertices in the graph is bounded by $\Omega$.
Because of the reweighing scheme discussed in \Cref{sec:theory}, which guarantees that the smallest difference between any paths is lower bounded by $\varepsilon$, the reweighted edges will be greater than their original values.
Therefore, bounding the greatest distance of the reweighted graph is equivalent to bounding the diameter of the original graph to be $O(\Omega\epsilon)$.

In the remaining parts of the proof, we start by presenting the revised version of the algorithm,
followed by the construction of each step that has not been covered in previous algorithms.

{\centering
\begin{minipage}{.7\linewidth}
\begin{algorithm}[H]
  \small
  \caption{Dijkstra's algorithm for shortest path}
  \label{alg:Dijkstra}
\begin{algorithmic}[1]
  \REQUIRE {\bfseries Input:} integer start
  \REQUIRE {\bfseries Input:} matrix $A$, size $n \times n$
   \\\hrulefill
  \STATE prev, visit, dists, dists$_\text{masked}$, changes, is\_zero, candidates = arrays of size $n$
\FOR{$i = 1$ \TO $n$}
    \STATE visit[i], dists[i], prev[i] = \FALSE, $\hat{\Omega}$, i
\ENDFOR
  \STATE visit[start] = 0
  \STATE term = \FALSE
  \STATE $\cdots$
  \COMMENT{Initialization of min-variables}
   \\\hrulefill
  \WHILE{term is \FALSE}
    \FOR{$i = 1$ \TO $n$}
      \IF{visit[i] is \TRUE}
        \STATE dists$_\text{masked}$[i] = $\Omega$
        \COMMENT{(1) Mask visited nodes }[\ref{sec:mask_visited_appdx}]
      \ELSE
        \STATE dists$_\text{masked}$[i] = dists[i]
      \ENDIF
    \ENDFOR
   \\\hrulefill
    \STATE get\_minimum(dists$_\text{masked}$)
    \COMMENT{(2-8) Find minimum value }[\ref{sec:min_appdx}]
   \\\hrulefill
    \IF{term$_\text{min}$ is \TRUE}
    \STATE node = idx$_\text{best}$
    \STATE dist = val$_\text{best}$
    \COMMENT{(9) Get minimum values} [\ref{sec:update_var_min_appdx}]
    \ENDIF
    \STATE candidates = A[node, :]
    \COMMENT{(10) Get row of A }[\ref{sec:read_a_appdx}]
    \FOR{$i=1$ \TO $n$}
      \STATE is\_zero[i] = (candidates[i] $\le$ 0)
      \COMMENT{(11) Mark non-neighbors } [\ref{sec:mark_zeros_appdx}]
    \ENDFOR
    \FOR{$i=1$ \TO $n$}
        \STATE candidates[i] = candidates[i] + dist
        \COMMENT{(12) Build distances }[\ref{sec:repeat_appdx}]
    \ENDFOR
    \FOR{$i=1$ \TO $n$}
      \STATE changes[i] = candidates[i] $<$ dists[i]
      \COMMENT{(13) Identify updates }[\ref{sec:comparison_appdx}]
    \ENDFOR

    \FOR{$i=1$ \TO $n$}
        \IF{term$_\text{min}$ is \FALSE\text{ }\AND is\_zero[i] is \TRUE}
                \STATE changes[i] = 0
          \COMMENT{(14) Mask updates }[\ref{sec:mask_write_appdx}]
        \ENDIF
    \ENDFOR
  
    \FOR {$i = 1$ \TO $n$}
      \IF {changes[i] is \TRUE}
        \STATE prev[i], dists[i] = node, candidates[i]
        \COMMENT{(15) Update variables }[\ref{sec:update_var_appdx}]
      \ENDIF
    \ENDFOR
    \STATE visit[node] = visit[node] + term$_\text{min}$
    \COMMENT{(16) Visit node }[\ref{sec:visit_appdx}]
    \STATE term = \NOT (\FALSE \text{ in visit})
    \COMMENT{(17) Trigger termination }[\ref{sec:terminate_appdx}]
  \ENDWHILE
\REQUIRE \textbf{return} prev, dists
\end{algorithmic}
\end{algorithm}
\end{minipage}
\par
}
\subsubsection{Mask visited nodes: step (1)}
\label{sec:mask_visited_appdx}
In the execution of Dijkstra's algorithm, visited nodes are excluded from future iterations.
Our implementation achieves this by masking distances, ensuring that masked values are ignored during the minimum function execution.

Before delving into the parameter configuration of this layer, it is crucial to address the initialization of the \texttt{dists} variable, which stores current distances.
In Dijkstra and other algorithms, this variable is initialized to $\hat{\Omega} = \Omega - \epsilon$.
This setup ensures that, during the minimum function phase, there is a preference for unvisited nodes - that is, nodes not adjacent to any visited node at a certain point in the execution - over those already visited.
Conversely, initializing all nodes to $\Omega$ would result in the masking operation treating unvisited and visited nodes equally, thereby disrupting the accuracy of the simulation.

Implementing the masking operation in our context closely resembles the conditional selection function in \Cref{sec:if_else_appdx}, with a key difference in its initial setup. This variation primarily involves the definition of the first two layers, which we define as follows:
\begin{equation*}
(W^{(1)})_{i, j} = \begin{cases}
   1 & \text{if } i\in\left\{\substack{V_0, C, E,\\ B_\text{global}, B_\text{local}}\right\}, j=i\\
   \Omega & \text{if } i=B_\text{local}, j=S_{V_1}\\
   0 & \text{otherwise,}
\end{cases}
\quad
(W^{(2)})_{i, j} = \begin{cases}
   1 & \text{if } (i,j)\in \left\{\substack{(E, E), (V_0, S_1)\\(S_{V_1}, S_2)}\right\}\\
   -\Omega & \text{if } (i,j)\in \left\{\substack{(C,\, S_1), (B_\text{global},\, S_2),\\ (B_\text{local},\, S_2)}\right\}\\
   \Omega & \text{if } i = C, j = S_2\\
   0 & \text{otherwise,}\nonumber
\end{cases}\\
\end{equation*}

Unlike the approach outlined in \eqref{eq:if_else_appdx}, the masking operation does not have a $V_1$ counterpart already calculated.
To this end, we first generate create a mask of magnitude $\Omega$ over all nodes, and insert it in a scratchpad space denoted by $S_{V_1}$.
Then for the second layer, we simply redirect the definition presented in \eqref{eq:if_else_appdx}, from the dependency from $V_1$ to the dedicated column $S_{V_1}$.
These adjustments are processed by the remaining layers of the implementation, ensuring the correct nodes are selected for further operations in the algorithm.
For the application in step (1) of \Cref{alg:Dijkstra}, we set $V_0$ to \texttt{dists}, $C$ to \texttt{visit}, and E to \texttt{dists$_\text{masked}$}.

\subsubsection{Get minimum values: step (9)}
\label{sec:update_var_min_appdx}
The purpose of this function is to update the values \texttt{node} and \texttt{dist} once the minimum function terminates.
To this end, we also utilize the formulation of the conditional selection function, expressing it as \texttt{cond-select(X, [idx$_\text{best}$, val$_\text{best}$], [node, dist], term$_\text{min}$, [node, dist])}.
Note that in this formulation, the fields $V_0$, $V_1$, and $E$ take in multiple values.
The implementation for this case is similar to that presented in \Cref{sec:if_else_appdx}, matching the order of the elements in each of these fields.

\subsubsection{Get row of the adjacency matrix: step (10)}
\label{sec:read_a_appdx}
Once the variable \texttt{node} is updated, the corresponding row of the adjacency matrix $A$ must be extracted.
To this end, we refer to the implementation described in \Cref{sec:read_a}, setting $P_\text{cur}$ to \texttt{node}.

\subsubsection{Mark non-neighbor nodes: step (11)}
\label{sec:mark_zeros_appdx}
In this operation, all non-neighbor nodes must be indicated by 1 in their corresponding coordinates of the \texttt{is\_zero} variable. 
For this implementation, we set all attention parameters to zero, and define $f_\text{MLP}$ as follows:

\begin{align*}
(W^{(1)})_{i, j} &= \begin{cases}
    1 &\text{if } (i, j)=\{(B_\text{local}, S_\text{zero}), (\text{is\_zero}, \text{is\_zero})\}\\
    -\Omega &\text{if } i=\text{candidates},\;j=S_\text{zero}\\
    0 &\text{otherwise,}
\end{cases}
\quad
(W^{(2,3)})_{i, j} = \begin{cases}
    1 &\text{if } i=\{S_\text{zero},\text{is\_zero}\},\; j=i\\
    0 &\text{otherwise,}
\end{cases}\\
(W^{(4)})_{i, j} &= \begin{cases}
   1 &\text{if } i=S_\text{zero},\; j=\text{is\_zero}\\
   -1 &\text{if } i,j=\text{is\_zero}\\
    0 &\text{otherwise.}
\end{cases}
\end{align*}

In effect, we create a list of ones, which are then subtracted by all non-zero entries of \texttt{candidates}, multiplied by $-\Omega$.
This ensures that all non-neighbor nodes are set to one, while the remaining are zero.
In addition, the existing entries in the variable \texttt{is\_zero} are erased.

\subsubsection{Build distances: step (12)}
\label{sec:repeat_appdx}
In this step, the distance to the current node must be combined with the weighted edges.
To explain this process further, the operation copies a value from the top row across the bottom $n$ rows of X.
This operation can be formally expressed as \texttt{repeat-n(X, C, D) = write($\mathbf{1}_{n+1}$X[1,C], X[:,D])}. 
Here, $\mathbf{1}_{n+1}$ is the all-ones vector of size $n+1$, which is multiplied by the first entry of the column $C$. The result is then written in column $D$.

The first attention head is defined as:
\begin{align*}
(W^{(1)}_K)_{i, j} = \begin{cases}
    1 &\text{if } i=B_\text{global}\\
    0 &\text{otherwise,}
\end{cases}\quad
(W^{(1)}_Q)_{i, j} = \begin{cases}
    1 &\text{if } i\in \left\{B_\text{global}, B_\text{local}\right\}\\
    0 &\text{otherwise,}
\end{cases}\quad
(W^{(1)}_V)_{i, j} = \begin{cases}
  1 &\text{if } i=C,\; j=E\\
  0 &\text{otherwise.} 
\end{cases}
\end{align*}

This configuration leads to the following attention matrix:
 \begin{equation}
\label{eq:repeat_n}
 \small
  \sigma\!\left(XW^{(1)}_QW^{(1)\top}_K X^\top\right) = { 
  \sigma\left(2\left[\begin{array}{c|cccccc} 1 & 0& \Compactcdots& 0 \\
  \hline 1 & 0& \Compactcdots & 0\\ 
  \vdots & \vdots & \ddots & \vdots \\ 
  1 & 0& \Compactcdots & 0\end{array}\right]\right)} =
  {\left[\begin{array}{c|ccccc}
  1 & 0& \Compactcdots& 0\\
  \hline 1 & 0& \Compactcdots & 0 \\
  \vdots & \vdots & \ddots & \vdots \\ 
  1& 0& \Compactcdots &  0\end{array} \right]}.  
\end{equation}

Equation \eqref{eq:repeat_n} demonstrates that the operation on $X$ captures elements from the first row and distributes them across its subsequent rows
The second attention head is defined as follows:
 \begin{align}
    \label{eq:repeat_identity_appdx}
    (W^{(2)}_K, W^{(2)}_Q)_{i, j} = \begin{cases}
        1 &\text{if } (i,j)\in\left\{((P)_1, 1), ((P)_2, 2), (B_\text{global}, 2)\right\}\\
        0 &\text{otherwise,}
    \end{cases}\;
    (W^{(2)}_V)_{i, j} = \begin{cases}
       -1 &\text{if } i=C,E, j=E\\
      0 &\text{otherwise.} 
    \end{cases}
 \end{align}

The attention head in \eqref{eq:repeat_n} also generates unwanted values in the top row, which are eliminated by retrieving the column $C$ that contains the top row value replicated earlier.
As for the configuration of $f_\text{MLP}$, all its values are set to 0, only utilizing the residual connection.

For the specific application to step (12) in \Cref{alg:Dijkstra}, this is applied to \texttt{val$_\text{cur}$}
so that its value is distributed along all \texttt{candidates} entries in $X$.
However, since this field already contains the edges from the adjacent nodes, we set $(W^{(2)}_V)_\text{(val$_\text{cur}$, val$_\text{cur}$)}=0$, effectively ignoring the deletion of the target field.
Furthermore, notice that this operation builds the distance for all nodes, including both neighbors and non-neighbors.
The subsequent step in \Cref{alg:Dijkstra} addresses the non-neighbors by applying a masking technique.
Moreover, two additional variables are replicated: the minimum function's termination flag, utilizing \texttt{repeat-n(X, term$_\text{min}$, term$_\text{min\_n})$}, as well as the integer representation of the current node, through \texttt{repeat-n(X, node$_\text{int}$, node$_\text{int\_n}$)}.

The repeated integer representation of the node (\texttt{node$_\text{int\_n}$}) is employed in step (15) for updating the previously visited node.
On the other hand, the repeated termination flag (\texttt{term$_\text{min\_n}$}) is utilized in step (16) to mask operations if the minimum function is still in progress.

\subsubsection{Identify updates (13)}
\label{sec:comparison_appdx}
In this step, the distances computed in \texttt{candidates} are compared against the current distances. To this end, we utilize the implementation described in \Cref{sec:less_than} and write: \texttt{less-than(X, candidates, dists, changes)}.

\subsubsection{Mask updates: step (14)}
\label{sec:mask_write_appdx}
After performing the comparison, the variable \texttt{changes} indicates which nodes require updates. It is important to note that the comparison results should be set to false for non-neighbor nodes and when the termination flag of the minimum function is not active.
We express this condition as follows: \texttt{changes[i] = changes[i] and term$_\text{min}$ and (not is\_zero[i])}.

In terms of implementation, we begin by setting the parameters of $f_\text{attn}$ to zero and define the parameters of $f_\text{MLP}$ as follows:
\begin{align*}
(W^{(1)})_{i, j} &= \begin{cases}
    1 &\text{if } i, j\in\left\{\substack{(\text{changes, changes}), (\text{term$_\text{min\_n}$, changes}), \\(\text{changes, }S_\text{changes})}\right\}\\
    -1 &\text{if } i\in\left\{\substack{ \text{is\_zero},\\B_\text{global}, B_\text{local}}\right\},\; j=\text{changes}\\
    0 &\text{otherwise,}
\end{cases}\\
(W^{(2,3)})_{i, j} &= \begin{cases}
    1 &\text{if } i\in\{\text{changes}, S_\text{changes}\}, j=i\\
    0 &\text{otherwise,}
\end{cases}\quad
(W^{(4)})_{i, j} = \begin{cases}
   1 &\text{if } i,j=\text{changes}\\
   -1 &\text{if } i=S_\text{changes}, j=\text{changes}\\
    0 &\text{otherwise.}
\end{cases}
\end{align*}

In this implementation, we write the necessary logical condition as $\phi(\texttt{changes[i]} + \texttt{term$_\text{min}$} - \texttt{is\_zero[i]} - 1)$. 
This ensures the result equals 1 only when all conditions are satisfied.
Additionally, in this implementation, the variable \texttt{term$_\text{min\_n}$} is determined in step (12) by repeating the status of the flag \texttt{term$_\text{min}$} along the last $n$ rows of $X$.

\subsubsection{Update variables: step (15)}
\label{sec:update_var_appdx}
Once all the conditions have been imposed on the variable \texttt{changes}, the next step consists in updating the necessary values based on this decision variable.
The expression in line [38] of \Cref{alg:Dijkstra} is implemented as \texttt{cond-select(X, [node$_\text{int\_n}$, candidates], [prev, dists], changes, [prev, dists]}, utilizing the conditional selection function described in \Cref{sec:if_else_appdx}.
Here, the variable \texttt{node$_\text{int\_n}$} is the list containing the repeated entries of the current node in its integer form, as executed in step (12).

\subsubsection{Visit node: step (16)}
\label{sec:visit_appdx}
The principle for marking a node as visited follows the same implementation outlined in \Cref{sec:visit_min_appdx}.
We write this function as: \texttt{write-row(X, node, term$_\text{min}$, visit)}.
It is important to note that in the application of the \texttt{write-row} function, we assign the variable $D$ the variable \texttt{term$_\text{min}$}. This assignment is crucial because it ensures that the visit function is activated only when the minimum function's termination flag is also active.

\subsubsection{Terminate: step (17)}
\label{sec:terminate_appdx}
The termination function follows the implementation of \Cref{sec:terminate_min_appdx} by setting \texttt{all-one(X, visit, term)}.

\newpage
\subsection{Proof of \Cref{thm:bfs}}
\label{sec:bfs_appdx}
In this section, we introduce the constructions used to simulate the Breadth-First Search (BFS) algorithm \cite{moore1959shortest}.
The objective of this algorithm is to systematically explore nodes in a graph, starting from a root node and moving outward to adjacent nodes and beyond.
First, nodes directly adjacent to the root are visited, followed by their neighbors, and so on.
We will begin with an overview of our implementation strategy, including its limitations as detailed in \Cref{thm:bfs}
Next, we present the specific constructions of each step of the algorithm.

\textbf{Implementation overview:} 
In BFS, node traversal is typically managed using a queue \cite{cormen2022introduction}.
In this way, nodes nearer to the source node are discovered and queued for visitation first.
In this process, we also record their immediate predecessor, i.e., the node through which it was discovered.
This continues until all nodes connected to the source node have been visited.
Our implementation of BFS simulates a queue using a priority list combined with a minimum function, similar to the approach in \Cref{sec:dijkstra_appdx}.
Each time a node is selected by the minimum function, the priority value increases, thereby assigning lower priority to subsequent nodes discovered through it.
This strategy effectively emulates the first-in-first-out nature of a traditional queue.

\textbf{Specifications of the architecture:} 
We outline the algorithm's execution steps in the \Cref{alg:bfs}. Each step correlates with a transformer layer, as represented in Equation \eqref{eq:layer}, executing the respective routine.
In the case of the Breadth-First Search, there is a total of 17 steps, and therefore our implementation requires 17 layers.
Furthermore, throughout the implementation details of each step, no configuration uses parameters whose count scales with the size of the graph, ensuring constant network width.

\textbf{Limitations:} In addition to the enumeration limitation discussed in \Cref{sec:theory}, 
simulation is also constrained by the largest priority value that can be utilized.
A node is visited only once, and the priority value increments with each selection.
Hence, for a graph of size $n$, the priority can grow to a maximum of $n$ as well
As detailed in \Cref{sec:limitation_omega}, the conditional selection functions are limited by $\Omega$, the highest absolute value in a conditional clause.
This limitation restricts the maximum usable value and consequently the largest graph size that can be simulated. Specifically, based on conditional selection limits, the largest feasible graph is bounded by $O(\Omega)$

In the remaining parts of the proof, we start by presenting the revised version of the algorithm,
followed by the construction of each step that has not been covered in previous algorithms.

{\centering
\begin{minipage}{.7\linewidth}
\begin{algorithm}[H]
  \small
  \caption{Breadth-first search}
  \label{alg:bfs}
\begin{algorithmic}[1]
  \REQUIRE {\bfseries Input:} integer start
  \REQUIRE {\bfseries Input:} matrix $A$, size $n \times n$
   \\\hrulefill
  \STATE order = 0
  \STATE prev, visit, orders, orders$_\text{masked}$, changes, disc = arrays of size $n$
\FOR{$i = 1$ \TO $n$}
    \STATE visit[i], disc[i], orders[i], prev[i] = \FALSE, \FALSE, $\hat{\Omega}$, i
\ENDFOR
  \STATE visit[start] = order
  \STATE term = \FALSE
  \STATE $\cdots$
  \COMMENT{Initialization of min-variables}
   \\\hrulefill
  \WHILE{term is \FALSE}
    \FOR{$i = 1$ \TO $n$}
      \IF{visit[i] is \TRUE}
        \STATE orders$_\text{masked}$[i] = $\Omega$
        \COMMENT{(1) Mask visited nodes }[\ref{sec:mask_visited_appdx}]
      \ELSE
        \STATE orders$_\text{masked}$[i] = orders[i]
      \ENDIF
    \ENDFOR
   \\\hrulefill
    \STATE get\_minimum(orders$_\text{masked}$)
    \COMMENT{(2-8) Find minimum value }[\ref{sec:min_appdx}]
   \\\hrulefill
    \IF{term$_\text{min}$ is \TRUE}
    \STATE node = idx$_\text{best}$
    \COMMENT{(9) Get minimum value }[\ref{sec:update_var_min_appdx}]
    \ENDIF
    \STATE A$_\text{row}$ = A[node,:]
    \COMMENT{(10) Get row of A }[\ref{sec:read_a_appdx}]
    \STATE order = order + term$_\text{min}$
    \COMMENT{(11) Update priority factor }[\ref{sec:increase_priority_appdx}]

    \FOR {$i = 1$ \TO $n$}
    \STATE change = term$_\text{min}$ is \TRUE\text{ }\AND disc[i] is \FALSE\text{ }
    \STATE changes[i] = change is \TRUE \text{ }\AND A$_\text{row}$[i] is 1
    \COMMENT{(12) Identify updates }[\ref{sec:mask_changes_appdx}]
    \ENDFOR

    \FOR {$i = 1$ \TO $n$}
      \IF {changes[i] is \TRUE}
        \STATE prev[i] = node
        \COMMENT{(13) Update variables }[\ref{sec:update_var_appdx}]
        \STATE orders[i] = order
      \ENDIF
    \ENDFOR
    \STATE visit[node] = visit[node] + term$_\text{min}$
    \STATE disc[node] = disc[node] + term$_\text{min}$
    \FOR {$i = 1$ \TO $n$}
      \STATE disc[i] = disc[i] or changes[i]
    \COMMENT{(14) Visit and discover nodes }[\ref{sec:update_disc_appdx}]
    \ENDFOR
    \STATE interrupt = disc == visit
    \COMMENT{(15) Compare visited and discovered nodes }[\ref{sec:interrupt_appdx}]
    \STATE term = \NOT (\FALSE \text{ in visit})
    \COMMENT{(16) Trigger termination/interruption }[\ref{sec:term_bfs_appdx}]
    \STATE term = (term or interrupt) and term$_\text{min}$
    \COMMENT{(17) Trigger termination by interruption }[\ref{sec:inter_or_term_appdx}]    
  \ENDWHILE
\REQUIRE \textbf{return} prev

\end{algorithmic}
\end{algorithm}
\end{minipage}
\par
}

\subsubsection{Update priority factor}
\label{sec:increase_priority_appdx}
In this stage, as outlined in \Cref{alg:bfs}, the priority factor, denoted by \texttt{order}, needs to be increased.
This adjustment ensures that updated nodes are assigned a lower priority in the list.
It is important to note that \texttt{order} is only modified upon the activation of the minimum function's termination flag, which guarantees its update only when necessary.

Furthermore, the priority factor is utilized in the update step (13), so its value must be made available to all nodes.
This means that not only this value is increased, but also distributed for the local variables, in a variable we denote as \texttt{order$_\text{n}$}.

This implementation can be seen as a direct application of the \texttt{repeat-n} operation, detailed in \Cref{sec:repeat_appdx}, being expressed as \texttt{repeat-n(X, [term$_\text{min}$, order], [order$_\text{n}$, order$_\text{n}$])}.
Furthermore, to ensure that the updated value is kept for the next iterations we set $(W^{(2)}_V)_\text{(term$_\text{min}$, order)} = 1$ in the definition of \eqref{eq:repeat_identity_appdx}
Furthermore, as in the application of \Cref{sec:repeat_appdx} to Dijkstra's algorithm, the termination flag for the minimum function as well as the integer representation of the current node are also repeated along the last $n$ rows.

\subsubsection{Identify updates: Step (12)}
\label{sec:mask_changes_appdx}
In this step, we want to prevent any changes for non-neighbor nodes (i.e. \texttt{A$_\text{row}$[i]=0}), discovered nodes (i.e. \texttt{disc[i]=1}), or if the minimum function's termination flag (term$_\text{min}$) is deactivated.
This step closely follows the implementation of the masking function described in \Cref{sec:mask_write_appdx}.

However, in this case, we implement the following logical expression: \texttt{changes[i] = A$_\text{row}$[i] and term$_\text{min}$ and (not disc[i])}.
We define the parameters of $f_\text{MLP}$ as follows:
\begin{align*}
(W^{(1)})_{i, j} &= \begin{cases}
    1 &\text{if } i, j\in\left\{(\text{A$_\text{row}$, changes}), (\text{term$_\text{min\_n}$, changes}), (\text{changes, }S_\text{changes})\right\}\\
    -1 &\text{if } i\in\{B_\text{global}, B_\text{local}, \text{disc}\},\; j=\text{changes}\\
    0 &\text{otherwise,}
\end{cases}\\
(W^{(2,3)})_{i, j} &= \begin{cases}
    1 &\text{if } i\in\{\text{changes}, S_\text{changes}\}, j=i\\
    0 &\text{otherwise,}
\end{cases}\quad
(W^{(4)})_{i, j} = \begin{cases}
   1 &\text{if } i,j=\text{changes}\\
   -1 &\text{if } i=S_\text{changes}, j=\text{changes}\\
    0 &\text{otherwise.}
\end{cases}
\end{align*}

In this implementation, we write the logical condition as $\phi(\texttt{A$_\text{row}$[i]} + \texttt{term$_\text{min\_n}$[i]} - \texttt{disc[i]} - 1)$. 
This ensures the result equals 1 only when all conditions are satisfied.
Additionally, in this implementation, the variable
\texttt{term$_\text{min\_n}$} is determined in step (11) by repeating the status of the flag \texttt{term$_\text{min}$} along the last $n$ rows of $X$, as described in \Cref{sec:increase_priority_appdx}.

\subsubsection{Visit and discover nodes: step (14)}
\label{sec:update_disc_appdx}
In this step, both discovered and visited nodes are updated.
This implementation closely follows \Cref{sec:visit_min_appdx}, where we write \texttt{write-row(X, node, term$_\text{min}$, visit)} and \texttt{write-row(X, node, term$_\text{min}$, disc)}.
The divergence from the implementation of \Cref{sec:visit_min_appdx} is that, for the definition of $f_\text{MLP}$ we let:
\begin{equation*}
(W^{(1)})_{i, j} = \begin{cases}
    1 &\text{if } i=\text{changes}, j=\text{disc}\\
    0 &\text{otherwise,}
\end{cases}\quad
(W^{(2,3,4)})_{i, j} = \begin{cases}
    1 &\text{if } i,j=\text{disc}\\
    0 &\text{otherwise,}
\end{cases}
\end{equation*}
This extra modification ensures that all nodes that were changed in the previous step are also marked as discovered.
In addition, since the condition for change requires nodes not to have been discovered, this formulation does not overcount discovered nodes.

\subsubsection{Compare visited and discovered nodes: step (16)}
\label{sec:interrupt_appdx}
The principle of utilizing a priority list to determine which nodes to visit introduces a challenge in the simulation of the algorithm: the scenario when the graph is disconnected. 
As it stands, without any interruption mechanisms the implementation described in \Cref{alg:bfs} would erroneously visit nodes disconnected from the chosen root node, compromising the integrity of the simulation. 
To resolve this, we introduce an interruption criterion to identify when all nodes connected to the root have been visited.

This is achieved by comparing the list of visited nodes with the list of discovered nodes. 
After the first node is selected by the minimum function, a connected sub-graph is entirely visited if its list of visited nodes is the same as the discovered nodes.
In this step, we examine whether each element in the \texttt{visit} variable corresponds to its counterpart in the \texttt{disc} variable.
To this end, we set the attention parameters to zero and define the parameters of $f_\text{MLP}$ as follows:

\begin{align*}
(W^{(1)})_{i, j} &= \begin{cases}
   1 & \text{if } (i,j)\in\left\{\substack{(\text{visit}, S_1), (B_\text{local}, B_\text{local}),\\
   (\text{disc}, S_2), (\text{equal}, \text{equal})}\right\}\\
   -1 & \text{if } (i,j)\in\left\{(\text{visit}, S_2), (\text{disc}, S_1) \right\}\\
   0 & \text{otherwise,}
\end{cases}\;
&&(W^{(2)})_{i, j} = \begin{cases}
    1 & \text{if } (i,j)\in\left\{(B_\text{local}, S_1), (\text{equal}, \text{equal}) \right\}\\
   -1 & \text{if } i\in \left\{S_1, S_2\right\},\; j=S_1\\
   0 & \text{otherwise,}
\end{cases}\\
(W^{(3)})_{i, j} &= \begin{cases}
   1 &\text{if } i\in \{\text{equal}, S_1\},\; j=i\\
   0 & \text{otherwise,}
\end{cases}\quad
&&(W^{(4)})_{i, j} = \begin{cases}
   1 &\text{if } i=S_1,\; j=\text{equal}\\
   -1 &\text{if } i,j=\text{equal}\\
   0 & \text{otherwise.}
\end{cases}
\end{align*}

In this operation, we compute the subtraction of \texttt{visit} minus \texttt{disc} and also \texttt{disc} minus \texttt{visit}.
Ideally, if the two lists are identical, both subtractions should result in zero, leaving only the bias term introduced.
The outcome of these computations is then stored in a variable named \texttt{equal}, which we utilize in the next step of the operation.

\subsubsection{Compute termination and interruption criteria}
\label{sec:term_bfs_appdx}
The termination operation is described in the same form as \Cref{sec:terminate_min_appdx}, and expressed as: \texttt{all-one(X, visit, term)}.
A key addition to this calculation is the inclusion of the condition for the \texttt{equal} variable, computed in the previous step and described in \Cref{sec:interrupt_appdx}.
To streamline the process, we consolidate the comparison done in the previous step into a single variable, denoted \texttt{interrupt}.
Consequently, we write this expression as 
\texttt{all-one(X, equal, interrupt)}.

\subsubsection{Combine interruption and termination flags}
\label{sec:inter_or_term_appdx}
After calculating both the interruption and termination flags, the final step is to combine them using the following logic: \texttt{term = (term or interrupt) and term$_\text{min}$}.
The extra condition on the termination of the minimum function ensures that the simulation does not terminate in the first iteration, since in that stage, \texttt{visit} and \texttt{disc} list are both initialized with zeros.
We implement this condition by setting the parameters of $f_\text{attn}$ to zero, and we define the parameters of $f_\text{MLP}$ as follows:
\begin{align*}
(W^{(1)})_{i, j} &= \begin{cases}
   1 & \text{if } (i,j)\in\left\{\substack{(\text{term},\, \text{term}), (B_\text{global}, B_\text{global}),\\
   (\text{interrupt},\, S_1), (\text{term}_\text{min},\, \text{term}_\text{min})}\right\}\\
   -1 & \text{if } i=\text{term},\;j=S_1\\
   0 & \text{otherwise,}
\end{cases}\;
&&(W^{(2)})_{i, j} = \begin{cases}
      1 & \text{if } (i,j)\in\left\{\substack{(\text{term},\, \text{term}), (S_1, S_1),\\
   (\text{term},\, S_1), (\text{term}_\text{min},\, S_1)}\right\}\\
   -1 & \text{if } i=B_\text{global},\;j=S_1\\
   0 & \text{otherwise,}
\end{cases}\\
(W^{(3)})_{i, j} &= \begin{cases}
    1 & \text{if } i\in\left\{\text{term}, S_1\right\},\;j=i\\
   0 & \text{otherwise,}
\end{cases}\quad
&&(W^{(4)})_{i, j} = \begin{cases}
   1 &\text{if } = S_1,\; j=\text{term}\\
   -1 &\text{if } i,j=\text{term}\\
   0 & \text{otherwise.}
\end{cases}
\end{align*}

This above construction essentially realizes the intended logical condition through the following operation: $\texttt{term} = \phi\left(\,\phi\left(\texttt{interrupt}-\texttt{term}\right)+\texttt{term} + \texttt{term}_\texttt{min} -1\right)$. 
Here, the inner ReLU function, when added to \texttt{term}, calculates the logical disjunction, and the sum of this result with the other terms forms the input for the second ReLU function, thereby computing the logical conjunction.

\newpage
\subsection{Proof of \Cref{thm:dfs}}
\label{sec:dfs_appdx}
This section outlines the constructions used for simulating the Depth-First Search (DFS) algorithm \cite{moore1959shortest}.
DFS traverses nodes from the most recently discovered node, aiming to visit all nodes by progressing from one branch to its end before moving to another branch.
Initially, we provide an overview of our DFS implementation approach, addressing its constraints as specified in \Cref{thm:dfs}. 
Next, we present the specific constructions of each step of the algorithm.

\textbf{Implementation overview:}
One alternative to avoid the recursive formulation of DFS is to use a stack guided by node discovery order \cite{cormen2022introduction}.
In this way, nodes are explored in the order they are found, recording their immediate predecessor.
This continues until all nodes connected to the source node have been visited.
Our implementation of DFS simulates a stack using a priority list combined with a minimum function, similar to the approach in \Cref{sec:bfs_appdx}.
The priority decreases with each node selection via the minimum function, prioritizing newly discovered nodes, thus replicating the last-in-first-out property of a stack.

\textbf{Specifications of the architecture:} 
We outline the algorithm's execution steps in the \Cref{alg:dfs}.
Each step correlates with a transformer layer, as represented in Equation \eqref{eq:layer}, executing the respective routine.
In the case of the Depth-First Search, there is a total of 15 steps, and therefore our implementation requires 15 layers.
Furthermore, throughout the implementation details of each step, no configuration uses parameters whose count scales with the size of the graph, ensuring constant network width.

\textbf{Limitations:} In addition to the enumeration limitation discussed in \Cref{sec:theory}, 
simulation is also constrained by the largest priority value that can be utilized.
A node is visited only once, and the priority value decreases with each selection.
Hence, for a graph of size $n$, the absolute priority value can grow to a maximum of $n$ as well
As detailed in \Cref{sec:limitation_omega}, the conditional selection functions are limited by $\Omega$, the highest absolute value in a conditional clause.
This limitation restricts the maximum usable value and consequently the largest graph size that can be simulated. Specifically, based on conditional selection limits, the largest feasible graph is bounded by $O(\Omega)$

In the remaining parts of the proof, we start by presenting the revised version of the algorithm,
followed by the construction of each step, particularly those not covered in earlier algorithms.

{\centering
\begin{minipage}{.65\linewidth}
\begin{algorithm}[H]
  \small
  \caption{Depth-first search}
  \label{alg:dfs}
\begin{algorithmic}[1]
  \REQUIRE {\bfseries Input:} integer start
  \REQUIRE {\bfseries Input:} matrix $A$, size $n \times n$
   \\\hrulefill
  \STATE order = 0
  \STATE prev, visit, orders, orders$_\text{masked}$, changes = arrays of size $n$
\FOR{$i = 1$ \TO $n$}
    \STATE visit[i], orders[i], prev[i] = \FALSE, $\hat{\Omega}$, i
\ENDFOR
  \STATE visit[start] = order
  \STATE term = \FALSE
  \STATE $\cdots$
  \COMMENT{Initialization of min-variables}
   \\\hrulefill
  \WHILE{term is \FALSE}
    \FOR{$i = 1$ \TO $n$}
      \IF{visit[i] is \TRUE}
        \STATE orders$_\text{masked}$[i] = $\Omega$
        \COMMENT{(1) Mask visited nodes } [\ref{sec:mask_visited_appdx}]
      \ELSE
        \STATE orders$_\text{masked}$[i] = orders[i]
      \ENDIF
    \ENDFOR
   \\\hrulefill
    \STATE get\_minimum(orders$_\text{masked}$)
    \COMMENT{(2-8) Find minimum value }[\ref{sec:min_appdx}]
   \\\hrulefill
    \IF{term$_\text{min}$ is \TRUE}
    \STATE node = idx$_\text{best}$
    \COMMENT{(9) Get minimum value }[\ref{sec:update_var_min_appdx}]
    \ENDIF
    \STATE A$_\text{row}$ = A[node,:]
    \COMMENT{(10) Get row of A }[\ref{sec:read_a_appdx}]
    \STATE order = order - term$_\text{min}$
    \COMMENT{(11) Update priority factor }[\ref{sec:decrease_priority_appdx}]
    \STATE visit[node] = visit[node] + term$_\text{min}$
    \COMMENT{(12) Visit node }[\ref{sec:visit_appdx}]

    \FOR {$i = 1$ \TO $n$}
      \STATE change = term$_\text{min}$ is \TRUE\text{ }\AND visit[i] is \FALSE\text{ }
       \STATE changes[i] = change is \TRUE \text{ }\AND A$_\text{row}$[i] is 1
    \COMMENT{(13) Identify updates }[\ref{sec:mask_changes_dfs_appdx}]
    \ENDFOR

    \FOR {$i = 1$ \TO $n$}
      \IF {changes[i] is \TRUE}
        \STATE prev[i] = node
        \COMMENT{(14) Update variables }[\ref{sec:update_var_appdx}]
        \STATE orders[i] = order
      \ENDIF
    \ENDFOR

    \STATE term = \NOT (\FALSE \text{ in visit})
    \COMMENT{(15) Trigger termination } [\ref{sec:terminate_appdx}]    
  \ENDWHILE
\REQUIRE \textbf{return} prev

\end{algorithmic}
\end{algorithm}
\end{minipage}
\par
}

\subsubsection{Update priority factor: step (11)}
\label{sec:decrease_priority_appdx}
In this step, the priority factor, indicated by \texttt{order} in \Cref{alg:dfs}, is reduced.
This adjustment ensures that updated nodes receive a higher priority than the remaining in the priority list, effectively replicating the behavior of a stack.

The implementation of this routine is largely based on the construction detailed in \Cref{sec:increase_priority_appdx} used in the Breadth-First Search (BFS) algorithm, with a few key modifications. Specifically, we alter the values in the construction as follows: $(W^{(1)}V)\text{term$\text{min}$, order$\text{n}$} = -1$ and $(W^{(2)}V)\text{term$_\text{min}$, order} = -1$. These modifications are important as they enable the priority factor to decrease with each iteration of the main algorithm.

\subsubsection{Identify updates (13)}
\label{sec:mask_changes_dfs_appdx}
This step follows the same implementation of \Cref{sec:mask_changes_appdx}, replacing the variable \texttt{disc} with \texttt{visit}.

\newpage
\subsection{Proof of \Cref{thm:scc}}
\label{sec:scc_appdx}
In this section, we introduce the constructions utilized to simulate Kosaraju's Strongly Connected Components (SCC) algorithm \cite{aho1974design}.
This algorithm aims to detect the strongly connected components of a graph. A strongly connected component in a graph is a subgraph in which every vertex is reachable from every other vertex through directed paths.
To this end, we first present an overview of the implementation strategy, which accompanies the limitations that are described in \Cref{thm:scc}.
These are then followed by the specific construction of each step of the algorithm.

\textbf{Implementation overview:} Executing Kosaraju's SCC algorithm first consists of performing a depth-first search on the source node. To implement this, we create a priority list whose updated values get decreased at each iteration, effectively replicating the behavior of a stack.
For the current node in the execution, we check whether its neighbors have been visited.
If so, the corresponding value on the second priority list is updated.
This procedure simulates the insertion of nodes in a stack based on their finish times, as described in Kosaraju's SCC algorithm.

This process continues until each node has its values updated in the second priority list, which can take as much as $2n$ if the graph is a path.
The end of this process triggers a partial termination criterion, which activates the second phase of the algorithm.
In the second part, we perform another depth-first search. However, this time, the search is determined by the priority set in the previous phase, and it is performed on the transposed version of the graph, i.e. with the edges in the reverse direction.
The index to indicate a strongly connected component is the source node of the breadth-first search of each subgraph.
This process continues until all nodes have been visited, for a total of $3n$ node visits.

\textbf{Specifications of the architecture:} In the following pseudocode description, we show the steps required to execute the algorithm.
Each of the steps has an associated transformer layer in the form of \eqref{eq:layer} that implements the corresponding routine.
In the case of Kosaraju's Strongly Connected Components algorithm, there is a total of 22 steps, and therefore our implementation requires 22 layers.
Furthermore, throughout the implementation details of each step, no configuration uses parameters whose count scales with the size of the graph, thus accounting for constant network width.

\textbf{Limitations:} Beside the enumeration limitations explained in \Cref{sec:theory}, the number of node updates also determines the size of the graph that can be simulated.
This is because the update process is governed by a single variable (denoted \texttt{order} in \Cref{alg:scc}) that decreases every update.
As discussed in \Cref{sec:theory}, since the constructions are limited to a certain size in the input, one of the restrictions of simulation is that the graph size can have at most $\Omega/3$ nodes.

In the remaining of the proof, we start by presenting the revised version of the algorithm,
followed by the construction of each step that has not been covered in previous algorithms.
  
{\centering
\begin{minipage}{.7\linewidth}
\begin{algorithm}[H]
  \small
  \caption{Strongly Connected Components (SCC) Algorithm}
  \label{alg:scc}
\begin{algorithmic}[1]
  \REQUIRE {\bfseries Input:} integer start
  \REQUIRE {\bfseries Input:} matrix $A$, size $n \times n$
   \\\hrulefill
  \STATE term, term$_\text{part}$, order, node$_\text{ref}$ = \FALSE, \FALSE, 0, 0
  \STATE orders$_\text{masked}$, changes$_1$, changes$_2$, changes$_3$, nvisit = arrays of size $n$
   \STATE sccs = array of size $n$, init. [$1,\dots,n$]
  \STATE visit$_1$, visit$_2$, visit$_3$ = arrays of size $n$, init. \FALSE
  \STATE orders$_1$, orders$_2$ = arrays of size $n$, init. $\hat{\Omega}$

  \STATE orders$_1$[start], orders$_2$[start] = order, order
  \STATE $\cdots$
  \COMMENT{Initialization of min-variables}
   \\\hrulefill
  \WHILE{term is \FALSE}
    \STATE orders = orders$_2$ \textbf{if} term$_\text{part}$ is \TRUE \textbf{ else} orders$_1$
    \STATE visit = visit$_3$ \textbf{if} term$_\text{part}$ is \TRUE \textbf{ else} visit$_2$
    \COMMENT{(1) Select list to find minimum } [\ref{sec:select_min}]
    \FOR{$i = 1$ \TO $n$}
      \STATE orders$_\text{masked}$[i] = $\Omega$ \textbf{if} visit[i] is \TRUE \textbf{ else} orders[i]
      \COMMENT{(2) Mask visited nodes }[\ref{sec:mask_visited_appdx}]
    \ENDFOR
   \\\hrulefill
    \STATE get\_minimum(orders$_\text{masked}$, sccs)
    \COMMENT{(3-9) Find minimum value} [\ref{sec:min_appdx}]
   \\\hrulefill
   \STATE write$_1$ = term$_\text{min}$ \AND \NOT term$_\text{part}$
   \STATE write$_2$ = term$_\text{min}$ \AND term$_\text{part}$
      \COMMENT{(10) Update write flags }[\ref{sec:update_write_flags_appdx}]
    \IF{term$_\text{min}$ is \TRUE}
    \STATE node, scc = idx$_\text{best}$, scc$_\text{best}$
    \COMMENT{(11) Get minimum values }[\ref{sec:update_var_min_appdx}]
    \ENDIF
    \STATE A$_\text{row}$, A$_\text{col}$ = A[node,:], A[:, node]
    \COMMENT{(12) Get row/column of A } [\ref{sec:read_aT_appdx}] 
    \STATE visit$_1$[node] = visit[node] + write$_1$
    \STATE visit$_3$[node] = visit[node] + write$_2$
    \COMMENT{(13) Visit nodes }[\ref{sec:visit_scc_appdx}]
    \FOR {$i = 1$ \TO $n$}
      \STATE nvisit = (visit$_1$[i] \AND A$_\text{row}$[i] is 1)\text{ }\OR A$_\text{row}$[i] is 0
      \STATE nvisit[i] = nvisit\text{ }\AND write$_1$ is \TRUE
      \COMMENT{(14) Check visited neighbors } [\ref{sec:check_visit_neig_i_appdx}]
    \ENDFOR
    \STATE nvisit$_\text{all}$ = \NOT (\FALSE \text{ in nvisit})
    \COMMENT{(15) Check if all neighbors are visited } [\ref{sec:check_visit_neig_appdx}]
    \STATE visit$_\text{scc}$ = visit$_3$[scc]
    \COMMENT{(16) Check if SCC is visited }[\ref{sec:scc_visited_appdx}]
    \STATE node$_\text{ref}$ = scc \textbf{if} visit$_\text{scc}$ \textbf{else} node$_\text{ref}$
    \COMMENT{(17) Update reference node }[\ref{sec:if_else_appdx}]
    \FOR {$i = 1$ \TO $n$}
      \STATE changes$_1$[i] = write$_1$ \AND(A$_\text{row}$[i] is 1)\text{ }\AND \NOT visit$_1$[i]
      \STATE changes$_3$[i] = write$_2$ \AND(A$_\text{col}$[i] is 1)\text{ }\AND \NOT visit$_3$[i]
    \ENDFOR
    \STATE changes$_2$[node] = nvisit$_\text{all}$  \AND write$_1$
    \STATE order = order - term$_\text{min}$
    \COMMENT{(18) Identify updates and update priority } [\ref{sec:build_changes_scc_appdx}]
    
    \FOR {$i = 1$ \TO $n$}
       \STATE orders$_1$[i] = order \textbf{if} changes$_1$[i]\textbf{ else} orders$_1$[i]
        \COMMENT{(19) Update variables I }[\ref{sec:update_var_appdx}]
        \STATE orders$_2$[i] = order \textbf{if} changes$_2$[i]\textbf{ else} orders$_2$[i]
        \STATE visit$_2$[i] = visit$_1$[i] \textbf{if} changes$_2$[i]\textbf{ else} visit$_2$[i]
        \COMMENT{(20) Update variables II }[\ref{sec:update_var_appdx}]
        \STATE orders$_2$[i] = order \textbf{if} changes$_3$[i] \textbf{ else} orders$_2$[i]
        \STATE sccs[i] = node$_\text{ref}$ \textbf{if} changes$_3$[i]\textbf{ else} sccs[i]
        \COMMENT{(21) Update variables III }[\ref{sec:update_var_appdx}]
    \ENDFOR
    \STATE term = \NOT (\FALSE \text{ in visit$_3$})
    \STATE term$_\text{part}$ = \NOT (\FALSE \text{ in visit$_2$})
    \COMMENT{(22) Trigger (partial) termination} [\ref{sec:part_term_appdx}]    
  \ENDWHILE
\REQUIRE \textbf{return} sccs
\end{algorithmic}
\end{algorithm}
\end{minipage}
\par
}

\subsubsection{Select list to find minimum: step (1)}
\label{sec:select_min}
As mentioned in \Cref{sec:scc_appdx}, the implementation of the Strongly Connected Components Algorithm utilizes a two-staged approach consisting of two breadth-first searches.
As demonstrated in \Cref{sec:dfs_appdx}, the depth-first search is implemented using the minimum function and a priority list.
To enable the change between stages, we utilize a conditional selection function, as formalized in \Cref{sec:if_else_appdx} to select between the two priority lists and their corresponding lists of visited nodes for masking.
We express this function as \texttt{cond-select(X, [orders$_2$, visit$_3$], [orders, visit$_2$], term$_\text{part}$, [orders, visit])}.

\subsubsection{Update write flags: step (10)}
\label{sec:update_write_flags_appdx}
In this step, we utilize the partial termination criterion \texttt{term$_\text{part}$} to indicate which part of the function should be activated in the next steps of the execution.
To this end, we implement the logical conditions expressed in lines [15-16] of \Cref{alg:scc} as follows:
\begin{align*}
(W^{(1)})_{i, j} &= \begin{cases}
   1 & \text{if } i,j\in \{(\text{write$_1$, write$_1$}), (\text{write$_2$, write$_2$}),(\text{term$_\text{min}$}, S_1), (\text{term$_\text{min}$}, S_2), (\text{term$_\text{part}$}, S_2)\\
   -1 & \text{if } i,j\in \{(\text{term$_\text{part}$}, S_1), (B_\text{global}, S_2)\\
   0 & \text{otherwise,}
\end{cases}\\
(W^{(2,3)})_{i, j} &= \begin{cases}
    1 & \text{if } i\in\{\text{write$_1$, write$_2$}, S_1, S_2\}, j=i\\
   0 & \text{otherwise,}
\end{cases}\quad
(W^{(4)})_{i, j} = \begin{cases}
     1 & \text{if } i\in\{(S_1,\text{write$_1$}), (S_2, \text{write$_2$})\}\\
    -1 & \text{if } i\in\{\text{write$_1$, write$_2$}\}, j=i\\
   0 & \text{otherwise,}
\end{cases}
\end{align*}

We erase the original content from the \texttt{write$_1$} and \texttt{write$_2$} columns and insert $\text{write$_1$} = \phi(\text{term$_\text{min}$}-\text{term$_\text{part}$})$ and $\text{write$_2$} = \phi(\text{term$_\text{min}$}+\text{term$_\text{part}$}-1)$, respectively, which simulate the corresponding logical conditions.

\subsubsection{Read row and column of A: step (12)}
\label{sec:read_aT_appdx}
Once the node has been selected by the minimum function, we perform the extraction of both a row and a column of $A$.
The process of reading a row of $A$ is described in \Cref{sec:read_a}, and we can express it as \texttt{read-A(X, Ã, node, A$_\text{row}$)}
For the extraction of the column of $A$, we adopt the same procedure, utilizing an additional attention head that interacts with $A$.
The implementation details for this operation are similar to the configurations described in \Cref{sec:read_a}, which enables us to express it as \texttt{read-A(X, Ã.T, node, A$_\text{col}$)}.

\subsubsection{Visit nodes: step (13)}
\label{sec:visit_scc_appdx}
In addition to updating the list of visited nodes in the graph, the following step also serves to update an auxiliary variable, denoted \texttt{cur\_i}, which aims to store the position of the current node in a one-hot encoding. This can be easily accomplished by applying the implementation presented in \Cref{sec:visit_min_appdx} as follows: \texttt{write-row(X, node, write$_1$, cur\_i)}.

\subsubsection{Check visited neighbors: step (14)}
\label{sec:check_visit_neig_i_appdx}
Throughout the execution of the first depth-first search, we must check if the neighbors of the current node have been visited.
If this condition is met, the value of this node is updated in the second priority list, replicating the intended behavior of building a stack.
This is because nodes that meet this condition later in the algorithm receive higher priority in the second list. 
Additionally, this condition is also used to mask nodes from subsequent steps in the first DFS.

To this end, this process is implemented in two stages: steps (14) and (15).
In this step, we verify the condition expressed in lines [24-25] of \Cref{alg:scc}.
The parameters of $f_\text{attn}$ are set according to the implementation of \Cref{sec:repeat_appdx}.
More specifically, we implement the routines for \texttt{repeat-n(X, write$_1$, write$_{1_n}$)} and \texttt{repeat-n(X, write$_2$, write$_{2_n}$)}.
We then define the parameters of $f_\text{MLP}$ as follows:
\begin{align*}
(W^{(1)})_{i, j} &= \begin{cases}
   1 & \text{if } i,j\in \{(B_\text{global}, B_\text{global}), (\text{write$_{1_n}$}, \text{write$_{1_n}$}),\\
   &\hphantom{5em}(\text{nvisit}, \text{nvisit}), (\text{A$_\text{row}$}, S_1)\}\\
   -1 & \text{if } i=\text{visit$_1$},\; j=S_1\\
   0 & \text{otherwise,}
\end{cases}\,
&&(W^{(3)})_{i, j} = \begin{cases}
    1 & \text{if } i\in\{\text{nvisit}, S_1\},\;j=i\\
   0 & \text{otherwise,}
\end{cases}\\
\quad(W^{(2)})_{i, j} &= \begin{cases}
    1 & \text{if } i,j\in\{(\text{write$_{1_n}$}, S_1),(\text{nvisit, nvisit})\}\\
    -1 & \text{if } i,j=S_1\\
    -\Omega & \text{if } i=B_\text{global},\; j=S_1\\
   0 & \text{otherwise,}
\end{cases}
&&(W^{(4)})_{i, j} = \begin{cases}
     1 & \text{if } i=S_1,\;j=\text{nvisit}\\
    -1 & \text{if } i,j=\text{nvisit}\\
   0 & \text{otherwise.}
\end{cases}
\end{align*}

This implementation not only clears the target field \texttt{nvisit} but also simulates the logical conditions of lines [24-25] with the following expression $\phi(\text{write$_1$[i]}-\phi(\text{A$_\text{row}$[i]}-\text{visit$_1$[i]}))$.
The extra negative $\Omega$ used in $W^{(2)}$ is to remove any unwanted entries in the top row.

\subsubsection{Check neighboring condition: step (15)}
\label{sec:check_visit_neig_appdx}
In this step, we check if all neighbors of the current node have been visited. This information is then used to determine if the current node should be updated in the second priority list.
This implementation closely follows the one utilized for the termination criterion, since it also collects all values along the last $n$ rows of $X$. Therefore, we can express it as \texttt{all-one(X, nvisit, nvisit$_\text{all}$)}.

\subsubsection{Check if current SCC is visited: step (16)}
\label{sec:scc_visited_appdx}
In this stage, we must verify the condition of whether the Strongly Connected Component index that is assigned to the current node is visited.
This condition becomes important in the subsequent step.
Initially, the SCC index of each node is itself.
So, throughout the execution of the code, if we encounter an unvisited SCC index, it means that a new Strongly Connected Component has been found.
In this case, the \texttt{visit$_\text{scc}$} flag must be triggered so that in the next step, the reference node (\texttt{node$_\text{ref}$}) is updated.
To this end, we utilize the same formulation as the read function in \Cref{sec:read_appdx} and write \texttt{read-X(X, scc, visit$_3$, visit$_\text{scc}$)}.

\subsubsection{Identify updates and update priority: step (18)}
\label{sec:build_changes_scc_appdx}
This step of execution focuses on increasing the priority value and building the decision lists for the subsequent update steps.
Despite being represented as a global variable, in our implementation of \Cref{alg:scc}, we set \texttt{order} to be a local variable, since it is solely used for updating the values of the priority lists.
First, in the definition of $f_\text{attn}$, we set all parameters to zero. 
In the implementation of $f_\text{MLP}$, since we build three decision lists as well as the priority update, the parameters are presented separately for each of these purposes.

For the update of the priority values, we set:
\begin{align*}
(W^{(1)})_{i, j} &= \begin{cases}
   1 & \text{if } i\in\{\text{write$_{1_n}$}, \text{write$_{2_n}$}\}, j=S_\text{order}\\
   0 & \text{otherwise,}
\end{cases}\quad
(W^{(2, 3)})_{i, j} = \begin{cases}
    1 & \text{if } i=S_\text{order},\; j=i\\
   0 & \text{otherwise,}
\end{cases}\\
(W^{(4)})_{i, j} &= \begin{cases}
    -1 & \text{if } i=S_\text{order},\; j=\text{order}\\
   0 & \text{otherwise.}
\end{cases}
\end{align*}

In this part of the implementation, we simply subtract the writing flags that are distributed along the last $n$ rows from the order variable, thereby increasing the priority of newly updated nodes.

For building the first list of decision variables, we write:
\begin{align*}
(W^{(1)})_{i, j} &= \begin{cases}
    1 &\text{if } i\in\{(\text{A$_\text{row}$}, S_1), (\text{write$_{1_n}$}, S_1),(\text{changes$_1$},\text{changes$_1$})\}\\
    -1 &\text{if } i\in\{\text{visit$_1$}, 
    B_\text{local}\},\; j=S_1\\
    -\Omega &\text{if } i=B_\text{global},\; j=S_1\\
    0 & \text{otherwise,}
\end{cases}\\
(W^{(2, 3)})_{i, j} &= \begin{cases}
    1 &\text{if } i\in\{S_1,\text{changes$_1$}\},\; j=i\\
   0 & \text{otherwise,}
\end{cases}\quad
(W^{(4)})_{i, j} = \begin{cases}
    1 &\text{if } i=S_1,\; j=\text{changes$_1$}\\
    -1 &\text{if } i,j=\text{changes$_1$}\\
   0 &\text{otherwise,}
\end{cases}
\end{align*}

which simulates the condition in line 31 of \Cref{alg:scc} with the expression: $\phi(\text{A$_\text{row}$[i]}+\text{write$_{1_n}$[i]}-\text{visit$_1$[i]}-1)$.
The additional entry of $-\Omega$ serves to clear any undesired values in the top row.
For the second decision list we write:
\begin{align*}
(W^{(1)})_{i, j} &= \begin{cases}
    1 &\text{if } i\in\left\{(\text{cur\_i}, S_2), (\text{write$_{1_n}$}, S_2), (\text{nvisit}, S_2), (\text{changes$_2$}, \text{changes$_2$})\right\}\\
    -2 &\text{if } i\in\left\{B_\text{global}, B_\text{local},\right\},\; j=S_1\\
    0 & \text{otherwise,}
\end{cases}\\
(W^{(2, 3)})_{i, j} &= \begin{cases}
    1 &\text{if } i\in\{S_2,\text{changes$_2$}\},\; j=i\\
   0 & \text{otherwise,}
\end{cases}\quad
(W^{(4)})_{i, j} = \begin{cases}
    1 &\text{if } i=S_2,\; j=\text{changes$_2$}\\
    -1 &\text{if } i,j=\text{changes$_2$}\\
   0 &\text{otherwise,}
\end{cases}
\end{align*}

where \texttt{cur\_i} is the variable obtained in \Cref{sec:visit_scc_appdx} which represents the one-hot encoding of the current node.
This operation simulates the condition in line 34 by using the expression: $\phi(\text{write$_{1_n}$[i]}+\text{A$_\text{row}$[i]}+\text{cur\_i[i]}-2)$.

For the third decision list, we write:
\begin{align*}
(W^{(1)})_{i, j} &= \begin{cases}
    1 &\text{if } i\in\{(\text{A$_\text{col}$}, S_3), (\text{write$_{2_n}$}, S_3),(\text{changes$_3$},\text{changes$_3$})\}\\
    -1 &\text{if } i\in\{\text{visit$_1$}, B_\text{local}\},\; j=S_3\\
    -\Omega &\text{if } i=B_\text{global},\; j=S_3\\
    0 & \text{otherwise,}
\end{cases}\\
(W^{(2, 3)})_{i, j} &= \begin{cases}
    1 &\text{if } i\in\{S_3,\text{changes$_3$}\},\; j=i\\
   0 & \text{otherwise,}
\end{cases}\quad
(W^{(4)})_{i, j} = \begin{cases}
    1 &\text{if } i=S_3,\; j=\text{changes$_3$}\\
    -1 &\text{if } i,j=\text{changes$_3$}\\
   0 &\text{otherwise,}
\end{cases}
\end{align*}

This implementation simulates the condition in line 32  using the expression: $\phi(\text{A$_\text{col}$[i]}+\text{write$_{2_n}$[i]}-\text{visit$_3$[i]}-1)$.

\subsubsection{Trigger termination and partial termination: step (22)}
\label{sec:part_term_appdx}
The last step of the loop is to verify the termination criteria for the first and second phases of the algorithm.
Using the implementation described in \Cref{sec:terminate_min_appdx}, we simply write these expressions as: \texttt{all-one(X, visit$_3$, term)} and \texttt{all-one(X, visit$_2$, term$_\text{part}$)}.
Since these implementations utilize the same attention heads, their implementations are compatible and can be simply merged.

\newpage
\subsection{Proof of \Cref{remark:multitask}}
In this section, we present the methodology adopted for constructing a unified model capable of executing multiple graph algorithms, as stated in \Cref{remark:multitask}.
More specifically, we construct a multitask model that executes three distinct graph algorithms: Breadth-first search, Depth-first search, and Dijkstra's shortest path algorithm.
We limit our scope to these three algorithms, considering the complexity and minimal additional insights gained from incorporating more algorithms. 
However, this same design principle can be applied to a broader range of algorithms, including Strongly Connected Components (referenced as \Cref{sec:scc_appdx}).
Incorporating a new set of algorithms could potentially incur the introduction of new functions and a distinct set of variables to be integrated into the matrix $X$.

Some initial considerations are necessary to describe the construction strategy.
Each algorithm requires a distinct set of variables and functions.
In our constructions, this is reflected as specific columns in the input matrix and the different implemented layers, respectively.
Therefore, a model that unifies different algorithms must accommodate these individual components without compromising the execution of any particular algorithm.
Nevertheless, some algorithms can have common structures, sharing similar functions or variables utilized in analogous ways.
The challenge of multitasking extends beyond just encapsulating different executions within the same model. It also involves efficiently reusing shared variables and functions to avoid redundancy, which also significantly reduces overhead in terms of memory (number of columns of $X$) or runtime (number of layers).

Our goal is to provide a single implementation capable of executing one of these three algorithms given the appropriate input configuration.
The structure of the input is the same for all three algorithms.
However, its configurations slightly change for the execution of each algorithm.
While Breadth \& Depth-first search operate on unweighted graphs, the execution of Depth-first search is distinguished by the activation of a specific flag in $X$, denoted by $\gamma_s$.
For Breadth-first search and Dijkstra's algorithm, the configuration of $X$ is the same.
What sets Dijkstra's algorithm apart from BFS
is its operation on a weighted graph, as opposed to the unweighted graphs used by the other two algorithms.
This distinction also highlights the fact that Breadth-first search can be considered a special case of Dijkstra's algorithm when applied to unweighted graphs. Consequently, we can leverage a single execution for both breadth-first search and Dijkstra's algorithm.

Furthermore, all three algorithms share a large portion of similar functions.
For example, they all employ the minimum function during the initial phase of iteration and utilize a similar termination criterion.
Our implementation strategy consists of leveraging this shared structure while individually accommodating the unique functions of each algorithm.
The selection of specific elements necessary for executing a particular algorithm is managed by a selector function.
This function determines the variables that need to be updated, thus ensuring the execution reflects the intended algorithmic behavior.

The comprehensive structure of our implementation is illustrated in \Cref{alg:general}. Non-highlighted lines indicate the shared structural components common to all three algorithms. In contrast, colored lines denote algorithm-specific adaptations. Specifically, lines highlighted in blue (18, 22, 29, 38, and 39) are modifications for Depth-first search. Lines in red (24, 25, 30, 36, and 37) indicate the adaptations for both Dijkstra's and Breadth-first search. Lastly, the lines highlighted in orange (42-45) represent the conditional selection mechanism. This mechanism is crucial for dynamically selecting the algorithm-specific elements and the boolean variables that trigger these adaptations.

Since the implementations directly follow the specifications outlined in previous sections, the guarantees for each algorithm are established according to their respective designs (refer to \Cref{sec:bfs_appdx}, \Cref{sec:dfs_appdx}, and \Cref{sec:dijkstra_appdx}).
Furthermore, except for the selector function process, the details of each algorithmic step have been thoroughly discussed earlier.
In the following, we present the implementation of the selector function, along with a detailed description of the algorithm.

Additionally, we also conduct empirical validation, as detailed in \Cref{app:sec:empirical_validation}. This validation confirms the robustness of our unified implementation, described in \Cref{alg:general}, which demonstrates a 100\% accuracy across all tested instances of the three algorithms.

\subsubsection{Update priority factor: Step (12)}
\label{sec:cond_priority_factor}
As previously discussed in \Cref{sec:decrease_priority_appdx},
for the execution of Depth-first search, the priority variable \texttt{order} must be decreased at each iteration.
However, for the multitask model, this process should not be carried out if the model is executing a different algorithm.
To this end, we introduce a condition for updating the priority factor.

In our construction, we substitute the conditional form for an equivalent expression: $\text{order} = \text{order} - \phi(\text{term}_\text{min} + \gamma_s - 1)$.
This ensures that the variable \texttt{order} is only updated if \texttt{term$_\texttt{min}$} and $\gamma_s$ are activated.
We implement this condition by setting the parameters of $f_\text{attn}$ to zero, and we define the parameters of $f_\text{MLP}$ as follows:
\begin{align*}
(W^{(1)})_{i, j} &= \begin{cases}
   1 & \text{if } i\in\{\text{term}_\text{min}, \gamma_s\}, j=\text{order}\\
   -1 & \text{if } i=B_\text{global},j=\text{order}\\
   0 & \text{otherwise,}
\end{cases}\,
(W^{(2,3)})_{i, j} = \begin{cases}
   1 &\text{if } i,j=\text{order}\\
   0 & \text{otherwise.}
\end{cases}\\
(W^{(4)})_{i, j} &= \begin{cases}
   -1 &\text{if } i,j=\text{order}\\
   0 & \text{otherwise.}
\end{cases}
\end{align*}

The output of the last layer of $f_\text{MLP}$ is directly added to the residual connection X, effectively replicating the expression above.

\subsubsection{Select candidates and changes variables: Step (17)}
\label{sec:select_flag_appdx}
The variable \texttt{candidates} represents the values used for updating the current distances or priorities, essential for the Dijkstra/BFS and DFS algorithms. 
Specifically, \texttt{candidates$_1$} refers to the candidate values for Dijkstra/BFS, while \texttt{candidates$_2$} indicates those for DFS. Similarly, the \texttt{changes} variable is a boolean-flag array containing flags that indicate which values require updating.
The variables \texttt{changes$_1$} and \texttt{changes$_2$} correspond to the update flags for Dijkstra/BFS and DFS, respectively.

Finally, during the algorithm's execution, we must determine which variables are going to be chosen: \texttt{changes$_1$} and \texttt{candidates$_1$} or \texttt{changes$_2$} and \texttt{candidates$_2$}. This decision is guided by the boolean flag $\gamma_s$, which, when activated, indicates that the DFS routine should be executed, thereby selecting the second set of variables; otherwise, the first set is chosen. 
This expression is implemented as \texttt{cond-select(X, [change$_2$, candidates$_2$], [change$_1$, candidates$_1$], $\gamma_s$, [change, candidates]}, utilizing the conditional selection function described in \Cref{sec:if_else_appdx}.
Here, the variable $\gamma_s$ is also repeated along the last $n$ rows during step (14) of the algorithm, whose objective is to replicate the top row value along these rows.

\newpage
{\centering
\begin{minipage}{.7\linewidth}
\begin{algorithm}[H]
  \footnotesize
  \caption{General algorithm for DFS/BFS/Dijkstra}
  \label{alg:general}
\begin{algorithmic}[1]
  \REQUIRE {\bfseries Input:} integer start
  \REQUIRE {\bfseries Input:} \textcolor{orange}{bool $\gamma_s$, switch flag}
  \REQUIRE {\bfseries Input:} matrix $A$, size $n \times n$
   \\\hrulefill
   \STATE visit[start], order, term = 0, 0, \FALSE
  \STATE prev, visit, dists, dists$_\text{masked}$, changes, is\_zero, candidates = arrays of size $n$
\FOR{$i = 1$ \TO $n$}
    \STATE visit[i], dists[i], prev[i] = \FALSE, $\hat{\Omega}$, i
\ENDFOR
   
  \STATE $\cdots$
  \COMMENT{Initialization of min-variables}
   \\\hrulefill
  \WHILE{term is \FALSE}
    \FOR{$i = 1$ \TO $n$}
      \IF{visit[i] is \TRUE}
        \STATE dists$_\text{masked}$[i] = $\Omega$
        \COMMENT{(1) Mask visited nodes }[\ref{sec:mask_visited_appdx}]
      \ELSE
        \STATE dists$_\text{masked}$[i] = dists[i]
      \ENDIF
    \ENDFOR
   \\\hrulefill
    \STATE get\_minimum(dists$_\text{masked}$)
    \COMMENT{(2-8) Find minimum value }[\ref{sec:min_appdx}]
   \\\hrulefill
    \IF{term$_\text{min}$ is \TRUE}
    \STATE node = idx$_\text{best}$
    \STATE \textcolor{teal}{dist = val$_\text{best}$}
    \COMMENT{(9) Get minimum values} [\ref{sec:update_var_min_appdx}]
    \ENDIF
    \STATE A$_\text{row}$ = A[node, :]
    \COMMENT{(10) Get row of A }[\ref{sec:read_a_appdx}]
    \FOR{$i=1$ \TO $n$}
      \STATE \textcolor{teal}{is\_zero[i] = (A$_\text{row}$[i] $\le$ 0)}
      \COMMENT{(11) Mark non-neighbors } [\ref{sec:mark_zeros_appdx}]
    \ENDFOR
    
    \IF{\textcolor{purple}{$\gamma_s$ is \TRUE}}
        \STATE \textcolor{purple}{order = order - term$_\text{min}$}
    \COMMENT{(12) Update priority factor }
    [\ref{sec:cond_priority_factor}] 
    \ENDIF
    \STATE visit[node] = visit[node] + term$_\text{min}$
    \COMMENT{(13) Visit node }[\ref{sec:visit_appdx}]
    \FOR{$i=1$ \TO $n$}
        \STATE \textcolor{teal}{candidates$_1$[i] = A$_\text{row}$[i] + dist}\COMMENT{(14) Build candidates }[\ref{sec:repeat_appdx}]
        \STATE \textcolor{purple}{candidates$_2$[i] = order}
    \ENDFOR
    
    \FOR{$i=1$ \TO $n$}
      \STATE \textcolor{teal}{changes$_1$[i] = candidates$_1$[i] $<$ dists[i]}
      \COMMENT{(15) Identify updates }
      [\ref{sec:comparison_appdx}]
    \ENDFOR

    \FOR{$i=1$ \TO $n$}
    \STATE \textcolor{purple}{change$_2$ = term$_\text{min}$ is \TRUE\text{ }\AND visit[i] is \FALSE\text{ }} \COMMENT{(16) Build flags }[\ref{sec:mask_write_appdx}/\ref{sec:mask_changes_dfs_appdx}]
       \STATE \textcolor{purple}{changes$_2$[i] = change$_2$ is \TRUE \text{ }\AND A$_\text{row}$[i] is 1}
        \IF{\textcolor{teal}{term$_\text{min}$ is \FALSE\text{ }\AND is\_zero[i] is \TRUE}}
                \STATE \textcolor{teal}{changes$_1$[i] = 0}
        \ENDIF
    \ENDFOR
    
    \IF{\textcolor{orange}{$\gamma_s$ is \TRUE}}
        \STATE \textcolor{orange}{candidates, changes = candidates$_2$, changes$_2$}\COMMENT{(17) Select candidates/changes} [\ref{sec:select_flag_appdx}]
    \ELSE
        \STATE \textcolor{orange}{candidates, changes = candidates$_1$, changes$_1$}
    \ENDIF
    \FOR {$i = 1$ \TO $n$}
      \IF {changes[i] is \TRUE}
        \STATE prev[i], dists[i] = node, candidates[i]
        \COMMENT{(18) Update variables }[\ref{sec:update_var_appdx}]
      \ENDIF
    \ENDFOR

    \STATE term = \NOT (\FALSE \text{ in visit})
    \COMMENT{(19) Trigger termination }[\ref{sec:terminate_appdx}]
  \ENDWHILE
\REQUIRE \textbf{return} prev, dists
\end{algorithmic}
\end{algorithm}
\end{minipage}
\par
}
\newpage
\subsection{Proof of \Cref{remark:subleq}}
In this section, we present the results on the Turing Completeness of the architecture described in \eqref{eq:layer}.
Previous work by \citet{perez2021attention} and \citet{giannou23a} has established universality results for looped transformers with standard attention.
Nevertheless, it is important to assess how the expressiveness of the architecture is affected when graph connectivity is stored as a separate object and graph convolution is integrated into the attention mechanism.

To establish Turing Completeness, we adapt the method employed in \citet{giannou23a}.
In particular, we illustrate how our proposed architecture can efficiently emulate SUBLEQ, a single-instruction language recognized for its Turing Completeness \cite{mavaddat1988urisc}.

Our discussion begins with an overview of SUBLEQ, followed by the introduction of a modified version termed SUBLEQ$^-$ (read as \emph{SUBLEQ minus}), which is also Turing Complete.
Subsequently, we demonstrate that SUBLEQ$^-$ can be simulated using a SUBLEQ-like instruction that uses a specialized memory object for the adjacency matrix, which we name Graph-SUBLEQ.
Lastly, we establish that the architecture depicted in \eqref{eq:layer} is capable of simulating Graph-SUBLEQ.

\textbf{SUBLEQ:} Named for its operation ``subtract and branch if less than or equal to zero", SUBLEQ is a one-instruction set computer.
As detailed in \Cref{alg:subleq}, it consists of subtracting the content at address $a$ from that at address $b$, and storing the result back at $b$.
All these values are stored in a one-dimensional memory array.
If the result is non-negative, the computer executes the next instruction; otherwise, it jumps to the instruction at address $c$. Despite this operational simplicity, SUBLEQ is Turing Complete \cite{mavaddat1988urisc}.

{\centering
\vspace{-1em}
\begin{minipage}{.7\linewidth}
\begin{algorithm}[H]
    \small
  \caption{SUBLEQ$\,(a,b,c)$}
  \label{alg:subleq}
\begin{algorithmic}[1]
  \REQUIRE {\bfseries Input:} memory object $M$, addresses $a, b, c$
  \STATE $M[b] = M[b] - M[a]$
  \IF{$M[b]\le 0$}
        \STATE go to $c$
    \ELSE
        \STATE go to next instruction
    \ENDIF
\end{algorithmic}
\end{algorithm}
\end{minipage}
\par
}
\textbf{SUBLEQ$^-$:} In the remainder of the proof, we focus on a specialized variant of SUBLEQ, which we refer to as SUBLEQ$^-$.
This modified version operates similarly to the standard SUBLEQ, with a key distinction in handling the adjacency matrix in memory.
In SUBLEQ, a common approach to represent graph adjacency data involves vectorizing the adjacency matrix and placing it at the beginning of the memory.
In this case, specifically in SUBLEQ$^-$, the first $n^2$ memory entries, representing the row-major order vectorization of an adjacency matrix for a graph with $n$ nodes, are set to be read-only.
It is important to note that this read-only constraint on the first $n^2$ entries does not diminish the expressive power of SUBLEQ$^-$.
The data in these entries can always be accessed and then copied into writable memory locations using standard SUBLEQ instructions. 
Furthermore, these $n^2$ entries may not exist for routines that do not utilize the graph in the proposed way.
Consequently, even with this restriction, SUBLEQ$^-$ remains Turing Complete.

\textbf{Graph-SUBLEQ:} Building upon the concept of SUBLEQ$^-$, we introduce Graph-SUBLEQ, a new formulation with a distinct approach to memory management.
Unlike SUBLEQ$^-$, which uses a single memory object with varying writing permissions, Graph-SUBLEQ separates its memory into two distinct entities: a writable one-dimensional memory object, as in standard SUBLEQ, and a separate read-only memory dedicated to storing graph connectivity data.
Crucially, in this second memory, data is arranged in a matrix format, with the same structure as an adjacency matrix.

{\centering
\begin{minipage}{.7\linewidth}
\begin{algorithm}[H]
    \small
  \caption{Graph-SUBLEQ$\,(a,b,c, \gamma_a)$}
  \label{alg:graph_subleq}
\begin{algorithmic}[1]
  \REQUIRE {\bfseries Input:} memory objects $M$, $M_G$, addresses $a:=(a_1, a_2), b \text{ and } c$, flag $\gamma_a$
  \IF{$\gamma_a$ is \TRUE}
        \STATE $m_a = M_G[a_1, a_2]$
    \ELSE
        \STATE $m_a = M[a_1]$
  \ENDIF
  \STATE $m_b = M[b]$
  \STATE $M[b] = m_b - m_a$
  \IF{$M[b]\le 0$}
    \STATE go to $c$
  \ELSE
    \STATE go to next instruction
  \ENDIF
\end{algorithmic}
\end{algorithm}
\end{minipage}
\par
}
The implementation of Graph-SUBLEQ, as shown in \Cref{alg:graph_subleq}, closely resembles that of SUBLEQ in \Cref{alg:subleq}.
The main difference lies in the structure of the instructions. While SUBLEQ operates with the triplet $a,b,c$, a Graph-SUBLEQ instruction is defined by $a,b,c,\gamma_a$. In this context, $a$ represents a pair of addresses, $a_1$ and $a_2$.
These addresses serve two functions: together, they can access an element in the graph memory object $M_G$, or, using only $a_1$, access an element in the main memory $M$.
The decision of which memory is accessed by $a$ is determined by the boolean flag $\gamma_a$: if $\gamma_a$ is True, the instruction accesses the graph memory; otherwise, it accesses the main memory.
All other aspects of Graph-SUBLEQ's implementation are akin to those of SUBLEQ.

With the frameworks of SUBLEQ$^-$ and Graph-SUBLEQ established, we now present a lemma important for our concluding remark.

\begin{lemma}
Graph-SUBLEQ is Turing Complete.
\end{lemma}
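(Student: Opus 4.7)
The plan is to establish Turing Completeness of Graph-SUBLEQ by exhibiting a step-for-step simulation of SUBLEQ$^-$, which the excerpt has already argued is Turing Complete. Since Graph-SUBLEQ's two-memory design is essentially a re-layout of SUBLEQ$^-$'s single memory (whose first $n^2$ cells are read-only and hold the row-major vectorization of the adjacency matrix), the natural route is to show that any SUBLEQ$^-$ program can be compiled into an equivalent Graph-SUBLEQ program whose execution trace matches instruction-by-instruction.

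First I would fix an explicit address translation. An address $k$ into SUBLEQ$^-$ memory falls into one of two regimes: $k < n^2$, in which case it corresponds to the adjacency entry at row $i = \lfloor k/n \rfloor$ and column $j = k \bmod n$, and $k \ge n^2$, in which case it is the $(k - n^2)$-th writable cell. Under my mapping the former becomes $M_G[i,j]$ in Graph-SUBLEQ and the latter becomes $M[k - n^2]$. Because $M_G$ is inherently read-only, writability constraints are preserved automatically.

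Next I would give the instruction compilation. A well-formed SUBLEQ$^-$ instruction $(a,b,c)$ must have $b \ge n^2$ (writes to the adjacency region are forbidden), so $b$ always translates to a $M$-address $b' = b - n^2$. The operand $a$ is handled by cases: if $a < n^2$, emit a Graph-SUBLEQ instruction with $\gamma_a = \textbf{True}$ and $a_1, a_2$ set to the row/column indices computed above; otherwise emit one with $\gamma_a = \textbf{False}$ and $a_1 = a - n^2$. The jump target $c$ is an instruction index and is remapped under the (one-to-one) compilation of the program text, which I would formalize as a bijection between the two instruction lists. A short induction on execution steps then shows that after each step, the compiled program's $(M, \text{PC})$ state matches the SUBLEQ$^-$ $(M_{\text{SUBLEQ}^-}, \text{PC})$ state under the address translation.

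The main obstacle is conceptually mild but requires care: one must verify that the fetched value $m_a$ in Graph-SUBLEQ coincides with $M[a]$ in SUBLEQ$^-$ at every step, and that the subtraction-and-store behaves identically under the translation. This reduces to a case analysis on $\gamma_a$ and the range of $a$, after which the branching test $M[b] \le 0$ is syntactically identical in both models. Because instruction compilation is one-to-one and the state correspondence is preserved, the simulation goes through and Turing Completeness of SUBLEQ$^-$ carries over to Graph-SUBLEQ.
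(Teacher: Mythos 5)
Your proposal is correct and follows essentially the same route as the paper's proof: both simulate SUBLEQ$^-$ in Graph-SUBLEQ by offsetting writable addresses by $n^2$ and converting graph addresses $a < n^2$ into the pair $(\lfloor a/n\rfloor,\; a \bmod n)$ with $\gamma_a$ set accordingly. The only cosmetic difference is that you explicitly frame the jump target $c$ as remapped through a bijection on the instruction list and sketch an induction on execution steps, whereas the paper simply sets $\tilde{c}=c$ and leaves the step-by-step state correspondence implicit.
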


\begin{proof}

To establish the Turing Completeness of Graph-SUBLEQ, we must demonstrate that every instruction in SUBLEQ$^-$ has an equivalent instruction in Graph-SUBLEQ.
This equivalence is essential to ensure that Graph-SUBLEQ can perform all operations that SUBLEQ$^-$ can.
The equivalence is examined in two scenarios based on the type of instruction in SUBLEQ$^-$: instructions that read from the read-only graph adjacency block, and those that read from other memory areas.
To maintain clarity in our notation, we use $a, b, c$ to represent instructions in SUBLEQ$^-$, and $\Tilde{a}_1, \Tilde{a}_2, \Tilde{b}, \Tilde{c}$ for instructions in Graph-SUBLEQ.

\textbf{No access to graph data:} To establish equivalence for instructions not accessing graph data, we define $\gamma_{\Tilde{a}} = 0$, and set $\Tilde{a}_1=a-n^2$, $\Tilde{a}_2=0$, $\Tilde{b}=b-n^2$, and $\Tilde{c}=c$. Here, $n$ represents the size of the graph.

\textbf{Access to graph data:} For instructions that access graph data, the equivalence is achieved by defining $\gamma_{\Tilde{a}} = 1$, and setting $\Tilde{a}_1=\lfloor a/n\rfloor$ and $\Tilde{a}_2=a-n\Tilde{a}_1$. We also set $\Tilde{b}=b-n^2$, and $\Tilde{c}=c$, where $n$ is the size of the graph.

When SUBLEQ$^-$ does not access graph data, the conversion to Graph-SUBLEQ requires a few adjustments to the memory addresses.
In Graph-SUBLEQ, the address $\Tilde{a}_2$ is not utilized in this scenario.
The addresses $\Tilde{a}$ and $\Tilde{b}$ in Graph-SUBLEQ are simply the corresponding SUBLEQ$^-$ addresses offset by $-n^2$, to accommodate the different indexing scheme in Graph-SUBLEQ's memory objects.
For the addresses $c$ and $\Tilde{c}$, no modification is needed, as they are already aligned and refer to elements outside the memory reserved for the graph.
In situations where a graph is not used in the memory, $n$ is equal to 0, making the addresses in SUBLEQ$^-$ and Graph-SUBLEQ naturally align.

In cases where SUBLEQ$^-$ accesses graph data through the address $a$, we derive the corresponding elements $\Tilde{a}_1$ and $\Tilde{a}_2$ in the matrix format of the graph memory.
Given that $a$ in SUBLEQ$^-$ represents an entry in the vectorized adjacency matrix arranged in row-major order, $\Tilde{a}_1$ is set to the integer part of $a/n$, and $\Tilde{a}_2$ is the remainder of the division $a/n$. This approach ensures proper alignment of indexes between SUBLEQ$^-$ and Graph-SUBLEQ in both scenarios,
thereby showing that Graph-SUBLEQ is also Turing Complete. 
\end{proof}

Having properly defined Graph-SUBLEQ and showing it that is Turing Complete, we now write the \Cref{remark:subleq} in its more precise form:

\begin{remark}
    There exists a looped-transformer $h_T$ in the form of \eqref{eq:layer}, which utilizes the modified attention head in \eqref{eq:attention_head}, with 11 layers, 3 attention heads, and layer width $O(1)$ that simulates Graph-SUBLEQ.
\end{remark}

\begin{proof}
In this proof, we employ the architecture outlined in Equation \eqref{eq:layer} to simulate \Cref{alg:graph_subleq}. 
This constructive approach is used to demonstrate that the architecture is Turing Complete.
As with our other proofs, we start by presenting an adaptation of \Cref{alg:graph_subleq} more aligned with our architecture.
Within this adapted algorithm, the commands executed inside the while loop correspond to the instructions of Graph-SUBLEQ.
Following the adaptation, we provide a comprehensive description of the input structure, detailing how the data and commands are organized and processed.

Finally, we delve into the systematic construction of the algorithm.
Each step has an associated transformer layer in the form of \eqref{eq:layer} that implements the corresponding routine.
In the simulation of Graph-SUBLEQ shown in \Cref{alg:graphleq_proof} there is a total of 11 steps, and therefore our implementation requires 11 layers, which use a total of 3 distinct attention heads.
Furthermore, throughout the implementation details of each step, no configuration uses parameters whose count scales with the size of the graph, thus resulting in constant network width.

{\centering
\begin{minipage}{.7\linewidth}
\begin{algorithm}[H]
    \small
  \caption{Graph-SUBLEQ$\,(a,b,c, \gamma_a)$ with external memory block}
  \label{alg:graphleq_proof}
\begin{algorithmic}[1]
  \REQUIRE {\bfseries Input:} memory objects $M$, $M_G$, instruction list $I$, instruction index $k$

\WHILE{\TRUE}
  \STATE $((a_1, a_2), b, c, \gamma_a) = I[k]$
  \COMMENT{(1) Read instructions } [\ref{sec:subleq_read}]
  \STATE $m_G = M_G[a_1, :]$
  \COMMENT{(2) Read row $a_1$ from $M_G$ }[\ref{sec:subleq_read_a}]
  \STATE $m_{a,G} = m_G[a_2]$
  \COMMENT{(3) Read address $a$ from $M_G$ }
  [\ref{sec:subleq_read}]
  \STATE $m_{a,X} = M[a_1]$
  \COMMENT{(4) Read address $a$ from $M$ } [\ref{sec:subleq_read}]
  \STATE $m_a = m_{a,G}$ \textbf{if} $\gamma_a$ is \TRUE \textbf{ else} $m_{a,X}$
  \COMMENT{(5) Select the memory value of $a$ } [\ref{sec:subleq_if_else}]
  \STATE $m_b = M[b]$
  \COMMENT{(6) Read address $b$ from $M$ }[\ref{sec:subleq_read}]
  \STATE diff $= m_b - m_a$
  \COMMENT{(7) Compute difference between values } [\ref{sec:subleq_subtraction}]
  \STATE $M[b] =$ diff
  \COMMENT{(8) Write difference in memory } [\ref{sec:subleq_write}]
  \STATE $k_\text{next} = k + 1$
  \COMMENT{(9) Compute next instruction } [\ref{sec:subleq_inc}]
  \STATE cond$_k$ = diff$\le 0$
  \COMMENT{(10) Compute condition for $k$ } [\ref{sec:subleq_less_than}]
  \STATE $k = c$ \textbf{if} cond$_k$ is \TRUE \textbf{ else} $k_\text{next}$
  \COMMENT{(11) Select next instruction } [\ref{sec:subleq_if_else}]
\ENDWHILE
\end{algorithmic}
\end{algorithm}
\end{minipage}
\par
}

\subsubsection{Input initialization}
We utilize the same format consistently used in this work, having an input matrix $X$ and an external adjacency matrix $A$.
The structure of the input matrix follows the convention outlined in \Cref{sec:input_matrix}, incorporating elements such as global and local variables, positional encodings, and biases. We maintain the same naming convention for consistency, with some adaptations specific to this context.
For instance, we also refer to the list of all positional encodings by $P$, while the positional encoding of the current instruction in this implementation is represented by $k$, as detailed in \Cref{alg:graphleq_proof}.

Given that in Graph-SUBLEQ, the size of the main memory and the graph might differ, directly translating these to the formats of $X$ and $A$ could result in a dimension mismatch, thereby hindering the multiplication process described in \eqref{eq:attention_head}.
To address this, we define $K=\max(|M|, n)+1$, where $K$ is the larger of the two dimensions, either the size of the main memory or the number of nodes in the graph.
This ensures compatibility in dimensions for encoding in $X$ and $A$.
In this setup, both the instructions and memory are padded with zeros in $X$ and $A$ to align with $K$.

In terms of specific variable representation, the main memory block, denoted as $M$, is represented as a single-column local variable within this framework. The instruction list, labeled as $I$, comprises a set of local variables, each representing an address in the instruction set.
These are individually identified by their specific assignments.
For example, $I_{a_1}$ refers to the column in the instruction list where addresses $a_1$ are stored.

\subsubsection{Read from input:  steps (1), (3), (4), and (6)}
\label{sec:subleq_read}
In the implementation of the read function within our architecture, we follow the construction outlined in \Cref{sec:read_appdx}.
For step (1), the operation is expressed as \texttt{read-X$\,\left(X,\,k,\, I_{a_1},\, Z_{a_1}\right)$}. Here, $k$ specifies the columns that store the positional encodings of the current instruction, and $Z_{a_1}$ is the variable holding the current value for $a_1$. The same formulation is then extended to all other values contained in the instruction, that is, $a_2, b, c$ and $\gamma_a$.

Moving to step (3), after the row of the graph memory is written into the variable $m_G$, this value is retrieved using the operation \texttt{read-X$\,\left(X,\,Z_{a_2},\,m_G,\,m_{a, G}\right)$}. In this context, $m_{a, G}$ is the target field where the retrieved value will be stored.

In step (4), the read operation is defined as \texttt{read-X$\,\left(X,\,Z_{a_1},\,M,\,m_{a, X}\right)$}. The variable $M$ represents the memory column, and $m_{a, X}$ is designated to hold the entry of $a$ retrieved from $M$.

Finally, for step (6), the operation is set as \texttt{read-X$\,\left(X,\,Z_b,\,M,\,m_b\right)$}. Here, $Z_b$ indicates the columns corresponding to the current address $b$, and $m_b$ is the variable designated to store the retrieved entry of $b$ from $M$.

\subsubsection{Read from $\Tilde{A}$: Step (2)}
\label{sec:subleq_read_a}
The implementation of step (2), which involves reading from the adjacency matrix $\Tilde{A}$, is based on the procedure detailed in \Cref{sec:read_a}. 
In this specific application, the read operation from $\Tilde{A}$ is described as \texttt{read-A$\,\left(X,\,\Tilde{A},, Z_{a_1},\, m_G\right)$}. Here, $Z_{a_1}$ represents the columns that contain the current value of $a_1$, which is used to locate the specific row in the adjacency matrix $\Tilde{A}$ that needs to be read. The variable $m_G$ is designated to store the retrieved row from $\Tilde{A}$.

\subsubsection{Conditional selection: steps (5) and (11)}
\label{sec:subleq_if_else}
The conditional selection process in steps (5) and (11) of our Graph-SUBLEQ implementation follows the approach outlined in previous algorithms, as detailed in \Cref{sec:if_else_appdx}.

In step (5), the conditional selection operation is expressed as \texttt{cond-select$\,\left(X,\, m_{a, G},\, m_{a, X}, Z_{\gamma_a},\, m_a\right)$}. Here, $m_a$ is the target field where the selected value will be stored. $Z_{\gamma_a}$ is the variable holding the current value of $\gamma_a$, which determines whether to select $m_{a, G}$ (the value from the graph memory) or $m_{a, X}$ (the value from the main memory).

For step (11), the operation is defined as \texttt{cond-select$\,\left(X,\, Z_c,\, k_\text{next}, \text{cond}_k,\, k\right)$}. In this context, $Z_c$ refers to the columns that hold the current value of $c$, and $k\text{next}$ represents the columns storing the index of the next instruction's positional encoding. The conditional logic applied here determines which instruction's positional encoding, either the current or next, is to be used based on the condition $\text{cond}_k$.

\subsubsection{Compute subtraction: step (7)}
\label{sec:subleq_subtraction}
In step (7) of our implementation, we focus on calculating the difference between the values stored in $m_b$ and $m_a$, and recording the result in a designated column, which we refer to as \texttt{diff}.
To carry out this operation, we set the parameters of $f_\text{attn}$ to zero
and define the parameters of $f_\text{MLP}$ as follows:
\begin{align*}
(W^{(1)})_{i, j} &= \begin{cases}
1 &\text{if } i\in\{(m_b, S_1), (\text{diff}, \text{diff})\}\\
-1 &\text{if } i\in\{(m_b, S_1), (\text{diff}, S_2)\}\\
0 &\text{otherwise,}
\end{cases}
\quad
(W^{(2, 3)})_{i, j} = \begin{cases}
1 &\text{if } i\in\{S_1, S_2, \text{diff}\},\;j=i\\
0 &\text{otherwise,}
\end{cases}
\\
(W^{(4)})_{i, j} &= \begin{cases}
1 &\text{if } i\in\{S_1, S_2\},\; j=\text{diff}\\
-1 &\text{if } i,j=\text{diff}\\
0 &\text{otherwise.}
\end{cases}
\end{align*}

Here, the first layers obtain the subtraction of the desired values, while the last layer places the result in the desired field. 
Additionally, the original value in \texttt{diff} is also replicated for both its positive and negative versions. These are used in the final layer to clear the previous entry.

\subsubsection{Write in memory: step (8)}
\label{sec:subleq_write}
For the implementation of step (8), where we write the result back into memory, we adopt the methodology outlined in  \Cref{sec:visit_min_appdx}.
We express this operation by \texttt{write-row$\,\left(X,\, Z_b,\, \text{diff},\, M\right)$}.
In this context, $X$ represents our input matrix, and $Z_b$ indicates the columns corresponding to the address $b$ in the memory, where the subtraction result (\texttt{diff}) needs to be written. The \texttt{diff} value holds the result of the subtraction computed in the previous step. The target for this write operation is the memory column $M$, which will be updated with the new value.

\subsubsection{Increment positional encoding: step (9)}
\label{sec:subleq_inc}
The implementation of step (9), which involves incrementing the positional encoding to move to the next instruction, follows the procedure described in \Cref{sec:increment_appdx}. 
The operation is defined as \texttt{increment$\,\left(X,\, k,\, k_\text{\normalfont next}\right)$}. Here, $X$ is the input matrix, $k$ represents the columns storing the current positional encoding, and $k_\text{\normalfont next}$ denotes the columns that will store the positional encoding of the next instruction. 

\subsubsection{Compute condition for next instruction: step (10)}
\label{sec:subleq_less_than}
In step (10), we need to check if the result of the subtraction (stored in \texttt{diff}) is less than or equal to zero. This is similar to the comparison function in Equation \Cref{eq:less_than}, but with a modification to include equality. Essentially, this function will return true if \texttt{diff} is zero or negative, allowing the algorithm to decide the next step based on this condition. 
In this case, we approximate the less-or-equal function as follows:

\begin{equation}
    \label{eq:le_subleq}
   X[:,C] \le X[:,D] \approx\: \varepsilon^{-1}\left(\phi\left(X[:,D]-X[:,C] + \varepsilon\right)-\phi\left(X[:,D]-X[:,C]\right)\right)
\end{equation}
where $C$ and $D$ are the columns used for comparison.
Notice that in this implementation, different from \eqref{eq:less_than}, the argument of the first ReLU includes an additional positive $\varepsilon$ value
Furthermore, for the application in the context of the simulation of Graph-SUBLEQ, the utilization of two columns $C$ and $D$ is not strictly necessary, since the difference between the values is already stored in the variable \texttt{diff} and the comparison is essentially made against zero.

To implement this function, we begin by setting the parameters $f_\text{attn}$ to zero. Then, we define the parameters of $f_\text{MLP}$ as follows:
\begin{align*}
(W^{(1)})_{i, j} &= \begin{cases}
1 &\text{if } i,j=\text{cond}_k\\
-1 &\text{if } i=\text{diff},\;j\in\{S_1, S_2\}\\
\varepsilon &\text{if } i=B_\text{global},\;j=S_2\\
0 &\text{otherwise,}
\end{cases}
\quad
&&(W^{(2)})_{i, j} = \begin{cases}
1 &\text{if } i,j=\text{cond}_k\\
\varepsilon^{-1} &\text{if } i,j=S_1\\
-\varepsilon^{-1} &\text{if } i=S_2,\;j=S_1\\
0 &\text{otherwise,}
\end{cases}\\
(W^{(3)})_{i, j} &= \begin{cases}
1 &\text{if } i\in\{\text{cond}_k, S_1\},\;j=i\\
0 &\text{otherwise,}
\end{cases}
\quad
&&(W^{(4)})_{i, j} = \begin{cases}
1 &\text{if } i=S_1,\;j=\text{cond}_k\\
-1 &\text{if } i,j=\text{cond}_k\\
0 &\text{otherwise.}
\end{cases}
\end{align*}

In the first layer of the process, we construct the arguments for the ReLU functions as defined in \eqref{eq:le_subleq}. Additionally, we maintain the value in \texttt{cond$_k$} to clear the target column.
Moving to the second layer, the ReLU terms are then processed by subtracting one from the other, and the resulting value is divided by $\varepsilon$.
In the final layer, this computed value is recorded in the target column, simultaneously erasing its original value.

\end{proof}

\end{document}